\newcolumntype{Y}{>{\raggedright\arraybackslash}X}
\newcolumntype{Z}{>{\centering\arraybackslash}X}
\newcommand{\MainTableSetup}{\footnotesize\setlength{\tabcolsep}{3.0pt}\renewcommand{\arraystretch}{1.05}}
\pgfplotsset{compat=1.18}
\DeclareMathOperator*{\argmax}{arg\,max}
\DeclareMathOperator*{\argmin}{arg\,min}
\newtheorem{theorem}{Theorem}
\newtheorem{definition}{Definition}
\newtheorem{lemma}{Lemma}
\newtheorem{corollary}{Corollary}
\newtheorem{proposition}{Proposition}
\newtheorem{remark}{Remark}
\newtheorem*{theorem*}{Theorem}
\newcommand{\defeq}{\vcentcolon=}
\newcommand{\E}{\mathbb{E}}
\newcommand{\LBOLT}{\mathcal{L}_{\mathrm{BOLT}}}
\newcommand{\Lip}{\mathrm{Lip}}
\newcommand{\epszero}{\varepsilon_{0}}
\newcommand{\epsapp}{\varepsilon_{\mathrm{approx}}}
\newcommand{\epsest}{\varepsilon_{\mathrm{est}}}
\newcommand{\BOLTGAN}{\textsc{BOLT-GAN}\xspace} 
\newcommand{\BOLTGANNaive}{\textsc{Max-BER}\xspace} 
\newcommand{\WGANGP}{\textsc{WGAN}\xspace}
\newcommand{\HingeSN}{\textsc{HINGE-GAN}\xspace}
\newcommand{\PRn}{2000} 
\newcommand{\PRk}{3}    
\begin{document}

\title{Revisiting GAN with Bayes-Optimal Discrimination}

\author{
\name Mohammadreza Tavasoli Naeini \email mohammadreza.tavasolinaeini@mail.utoronto.ca \\
\addr University of Toronto
\AND
\name Ali Bereyhi  \email ali.bereyhi@utoronto.ca \\
\addr University of Toronto
\AND
\name Morteza Noshad  \email noshad@stanford.edu \\
\addr Stanford University
\AND
\name Ben Liang  \email liang@ece.utoronto.ca \\
\addr University of Toronto
\AND
\name Alfred O. Hero III \email hero@eecs.umich.edu \\
\addr University of Michigan
}

\maketitle
\vspace{1cm}
\makeatletter\let\AND\@undefined\makeatother
\begin{abstract}
We propose an alternative to the standard GAN training approach, in which the discriminator is a binary classifier trained by cross-entropy to distinguish real samples from generated ones. Instead, we directly target the discrimination Bayes error rate (BER). To this end, we use the recently proposed Bayes optimal learning threshold (BOLT) loss and train the generator to maximize a surrogate of the discrimination BER. This viewpoint gives a unified perspective on GAN training: different objectives can be interpreted as parameterized bounds on the discrimination BER that describe a trade-off between smoothness and tightness. We show that, under balanced class priors, maximizing the surrogate BER with an unconstrained discriminator minimizes the total variation between the data and generator distributions. By constraining the discriminator to be $1$-Lipschitz, the proposed maximization objective defines a discrepancy that is upper-bounded by the Wasserstein-1 distance, thereby linking it to Wasserstein GAN. Experiments on several image-generation datasets under matched architectures and optimization settings show that GAN training using the surrogate BER improves sample quality and coverage over standard baselines. This analysis suggests that the proposed Bayesian viewpoint can achieve a better trade-off between training stability and convergence of the generator to the data distribution.
Code is available at our \href{https://anonymous.4open.science/r/bolt-gan-7790}{anonymous repository}.
\end{abstract}

\section{Introduction}

Generative adversarial networks (GANs) are formulated as a two-player game in which the discriminator is trained to distinguish real data from generated samples. In the vanilla formulation, this discriminator is a binary classifier trained with the standard cross-entropy objective. This conventional viewpoint suggests a natural question: \emph{if GAN training begins from a real-versus-fake classification task, how does the adversarial game change when the discrimination criterion itself is changed?}

In this work, we revisit GAN training by replacing the conventional cross-entropy discriminator with a discriminator trained using a proxy for the Bayes error rate (BER)~\citep{naeini2025universal}. To this end, we use the Bayes optimal learning threshold (BOLT) framework, which computes a parameterized surrogate of the BER through a bounded scoring model. In the binary case, minimizing the corresponding BOLT loss aligns the score with the Bayes-optimal maximum-a-posteriori (MAP) discriminator.

This perspective gives a generic formulation in which different GAN objectives can be viewed as BER surrogates whose tightness is controlled by the continuity of the discriminator. The original GAN~\citep{goodfellow2014generative} was motivated by minimizing the Jensen-Shannon divergence between the data and generator distributions. It is known that this divergence-based objective can suffer from saturation and instability, especially when the data and generator distributions have limited overlap~\citep{arjovsky2017wasserstein}. Subsequent GAN variants therefore moved away from the classical real-versus-fake classification viewpoint and adopted alternative divergences or distances between the data and generator distributions. The most prominent example is WGAN~\citep{arjovsky2017wasserstein}, which targets the Wasserstein-1 distance. In this formulation, the $1$-Lipschitz continuity constraint on the discriminator is imposed by the probability metric itself. In contrast, we keep the original classification viewpoint and train the generator to maximize a surrogate of the discrimination BER.

The tightness of the surrogate is controlled by the discriminator model. With an unconstrained bounded discriminator, the proposed framework minimizes the total variation between the data and generator distributions, leading to an objective as strong as the TV metric and to unstable training in practice. When the discriminator is constrained to be $1$-Lipschitz, the resulting BOLT-GAN objective is upper-bounded by the Wasserstein-1 distance. This BER-targeted method achieves a better empirical trade-off between training stability and the convergence of the generator to the data distribution.


\subsection{Main Contributions}

The main contributions of this work are as follows.
\textbf{(i)} We revisit GAN training from its original real-versus-fake classification perspective and replace the standard cross-entropy discriminator objective with a Bayes-optimal classification viewpoint based on the BOLT loss. This gives a unified way to view GAN formulations through the criterion used to train the discriminator.
\textbf{(ii)} We show that the population BOLT minimizer recovers the Bayes-optimal discriminator and derive the associated empirical BER estimator, whose bias and variance are characterized.
\textbf{(iii)} We analyze the proposed framework for an unconstrained discriminator, where the training minimizes the total variation (TV) between the data and generator distributions. We also analyze the framework when the discriminator is constrained to be $1$-Lipschitz, in which case the resulting BOLT-GAN objective is upper-bounded by the Wasserstein-1 distance.
\textbf{(iv)} We evaluate the proposed framework on image-generation benchmarks ranging from CIFAR-10 and CelebA-64 to ImageNet-32. The experiments show that we obtain improved sample quality and coverage compared with standard GAN baselines when the same neural architecture is used.

\subsection{Related Work}
\paragraph{Bayes-error estimation.} The BER characterizes the minimum classification error achievable by any classifier~\citep{bishop2006pattern,devroye1996probabilistic}. Classical BER bounds, such as Bhattacharyya, Chernoff, and Mahalanobis, become inaccurate in high dimensions or under model mismatch. Recent nonparametric estimators and bounds include nearest-neighbor methods~\citep{moon2017ensemble,noshad2019learning}, learned ensembles~\citep{noshad2018rate}, multi-class BER bounds~\citep{sekeh2020learning}, $f$-divergence-based estimators~\citep{liu2022fdivergence,cheng2023learningfdivergence}, deep Bayes estimators~\citep{zhang2023deepbayes}, and benchmark-driven evaluations~\citep{ishida2023performance}. The recently proposed BOLT framework~\citep{naeini2025universal} gives a universal upper bound on the BER by sampling a bounded score function. Using a parameterized model as the score function, \citet{naeini2025universal} use this universal bound to determine a loss for this model whose minimization encourages the classifier to approach the Bayes-optimal, maximum-a-posteriori (MAP), classifier.

\paragraph{GAN training objectives.} GANs were introduced as a min-max game between a generator and a discriminator~\citep{goodfellow2014generative}. In this game, the discriminator classifies real samples versus generated ones while the generator is trained, jointly with the discriminator, to maximize the discrimination confusion. The original cross-entropy objective in vanilla GAN corresponds to the Jensen-Shannon divergence and is known to suffer from saturation and instability. Many alternatives replace or relax the original real-versus-fake classification objective with other discrepancies, including $f$-divergences~\citep{nowozin2016f}, least-squares losses~\citep{mao2017least}, hinge losses~\citep{miyato2018spectral}, energy-based objectives~\citep{berthelot2017began}, and relativistic formulations~\citep{jolicoeur2019relativistic}. WGAN and its gradient-penalty variant~\citep{arjovsky2017wasserstein,gulrajani2017improved} replace the probabilistic discriminator with a real-valued $1$-Lipschitz critic for estimating the Wasserstein-1 distance and have become standard stable baselines. Recent work also combines objective design with regularization and architecture choices, see for instance~\citep{huang2024modern,liu2024modernganstability}. Recent proposals further improve GAN performance mainly through architecture and training-loop design~\citep{kang2023gigagan,sauer2023styleganT,sauer2022styleganxl}. The former line of work studies objective design and stability under modern training recipes, whereas the latter line improves capacity and scale through architecture and training-system choices. In this study, we keep the original real-versus-fake viewpoint on GAN. We however deviate from the classical training objectives by training the discriminator with the BOLT loss and training the generator to maximize a BOLT-aided surrogate of the BER. This enables us to organize prior training approaches under a common BER-based framework.

\paragraph{Difference from prior BOLT work.} The ICASSP 2025 BOLT paper~\citep{naeini2025universal} studies supervised classification and BER estimation for fixed data distributions. Our setting is different: one of the two class-conditional distributions is the generator distribution, which changes throughout adversarial training. Applying BOLT to GANs therefore requires converting the BER bound into a generator-discriminator saddle-point objective, analyzing the induced discrepancy on the generator distribution, and controlling the discriminator class to avoid the instability of the unconstrained TV-strength objective. This leads to the TV connection in the unconstrained case and the Wasserstein-controlled BOLT-GAN objective in the Lipschitz case.


\paragraph{Notation.}
On a measurable space $(\mathcal{X},\mathcal{F})$, probability measures are denoted by uppercase letters such as $P$, and corresponding densities by lowercase letters such as $p$. For a random variable $X$, $P_X$ denotes its marginal distribution. Expectation with respect to a probability measure $P$ is denoted by $\mathbb{E}_{X\sim P}[\cdot]$. We use $[n]$ to denote the set $\{1,\dots,n\}$.

\section{Preliminaries}
\label{sec:background}
We consider binary classification with classes $C\in\{1,2\}$ and data space $\mathcal{X}\subseteq\mathbb{R}^d$. Let $q_i:=\Pr(C=i)$ denote the class priors, with $q_2=1-q_1$. Let $P_i$ denote the class-conditional distribution of $X$ given $C=i$, with density $p_i$ w.r.t.\ a common dominating measure $\mu$. The Bayes error rate (BER), denoted by $\varepsilon_{\mathrm{Bayes}}$, is the minimum error probability achievable by any measurable classifier $f:\mathcal{X}\to\{1,2\}$. It is determined from the data distribution as~\citep{bishop2006pattern}
\begin{equation}
\label{eq:bayes_error_def}
\varepsilon_{\mathrm{Bayes}}
= 1-\int_{\mathcal{X}} \max_{i\in\{1,2\}}\{q_i p_i(x)\}\,\mathrm{d}\mu(x),
\end{equation}
and is achieved by the maximum-a-posteriori (MAP) rule
\begin{equation}
\label{eq:map_rule}
\widehat C_{\rm MAP}(x)\in \argmax_{i\in\{1,2\}} q_i p_i(x).
\end{equation}

\subsection{Bayes Optimal Learning Threshold (BOLT)}
With unknown data distribution, it is known that the BER can be approximated from data~\citep{devroye1996probabilistic,fukunaga2013introduction,noshad2019learning,sekeh2020learning}. However, direct high-dimensional BER estimation can incur prohibitive computational complexity. Instead, \citet{naeini2025universal} develop a universal upper bound on the BER of the binary classification problem that can be estimated by sampling an arbitrary bounded BER proxy. The resulting surrogate objective is called the BOLT risk.

The following theorem presents the universal bound for binary classification.
\begin{theorem}[BER bound~\citep{naeini2025universal}]\label{thm:binary_upper_bound}
Let $h:\mathcal{X}\to[0,1]$ be a measurable function. Then, the BER is bounded as\footnote{The result in \citet{naeini2025universal} is given for $\tilde h:\mathcal{X}\to[-1,0]$. We represent the result after the reparameterization $h=1+\tilde h$.}
\begin{equation}
\label{bound}
\varepsilon_{\mathrm{Bayes}}
\le q_1 + q_2\,\mathbb{E}_{X\sim P_2}[h(X)] - q_1\,\mathbb{E}_{X\sim P_1}[h(X)].
\end{equation}
\end{theorem}

\citet{naeini2025universal} show that the bound in \eqref{bound} is tight when minimized over $h$. This leads to the BOLT framework for classification: replacing $h$ by a parameterized score model $h_\theta$ and minimizing an empirical version of the bound trains $h_\theta$ to estimate a tight BER proxy. We focus on the binary case relevant to GAN discrimination.

\paragraph{BOLT loss.}
Consider a labeled dataset $\mathcal{D}=\{(x_i,C_i): i\in[n]\}$ with $C_i\in\{1,2\}$. BOLT trains the model $h_\theta:\mathcal{X}\to[0,1]$ by minimizing an empirical BOLT risk. For a sample $(x,C)\in\mathcal{D}$, the per-sample loss, referred to as the BOLT loss, is
\begin{equation}
\ell_{\rm BOLT}(h_\theta(x),C)=(-1)^C\,h_\theta(x).
\end{equation}
The bound in Theorem~\ref{thm:binary_upper_bound} can be written in terms of the population risk as
\begin{equation}
\varepsilon_{\mathrm{Bayes}}\le q_1+\mathbb{E}\bigl[\ell_{\rm BOLT}(h_\theta(X),C)\bigr].
\end{equation}
Thus, minimizing the expected BOLT loss trains the score model $h_\theta$ to estimate a tight BER surrogate.

\subsection{Generative Adversarial Networks}

A GAN consists of two models: a generator $g:\mathcal Z\to\mathcal X$ that maps a latent variable $Z\sim P_Z$ to a sample in data space $\mathcal X$, and a discriminator $h:\mathcal X\to[0,1]$ that distinguishes generated samples $g(Z)$ from true data samples $X\sim P_{\mathrm{data}}$. We denote the distribution induced by $g(Z)$ by $P_{\mathrm g}$.

\paragraph*{Vanilla GAN.}
The vanilla GAN~\citep{goodfellow2014generative} treats the discriminator as a probabilistic real-versus-fake decision rule with $h(x)=\Pr(C=\texttt{real}\mid x)$ and uses the cross-entropy loss. The training objective in this case is
\begin{align}
\label{eq:CE}
R_{\rm ML}(g,h) = \mathbb{E}\big[\mathrm{CE}(Y,C)\big],
\end{align}
where $\mathrm{CE}$ denotes cross-entropy, $Y=h(X)$ is the discriminator output for input $X\in\mathcal X$, and $C\in\{1,2\}$ indicates whether $X$ is real ($C=1$) or fake ($C=2$). The label $C$ is drawn from a prior $0<\pi:=\Pr(C=1)<1$, and $X$ is drawn conditionally from $P_1=P_{\mathrm{data}}$ or $P_2=P_{\mathrm g}$\footnote{Typically $\pi=0.5$ is considered.}.
Using $P_2=P_{\mathrm g}$, where $P_{\mathrm g}$ is the distribution of $g(Z)$ for $Z\sim P_Z$, the risk in \eqref{eq:CE} can be expanded as
\begin{align}
\label{eq:ML-GAN}
R_{\rm ML}(g,h)
= -\pi\,\mathbb{E}_{X\sim P_{\rm data}}[\log h(X)]
-(1-\pi)\,\mathbb{E}_{Z\sim P_Z}[\log(1-h(g(Z)))].
\end{align}
Defining $\mathcal{L}_{\rm ML}(g,h)=-R_{\rm ML}(g,h)$, the generator learns to sample from $P_{\mathrm{data}}$ by solving the following min-max game
\begin{align}
\label{eq:minmax}
\min_g\max_h\; \mathcal{L}_{\rm ML}(g,h),
\end{align}
which is equivalent to fooling the best discriminator. The solution of this game, when solved over measurable generators and discriminators, guarantees convergence of $P_{\mathrm g}$ to $P_{\mathrm{data}}$ as measured by the Jensen-Shannon (JS) divergence~\citep[Theorem~1]{goodfellow2014generative}.

\paragraph*{Wasserstein GAN.}
\citet{arjovsky2017wasserstein} propose a different approach: WGAN uses a real-valued critic that estimates the Wasserstein-$1$ distance between the data and generator distributions, instead of a probabilistic real-versus-fake discriminator. By Kantorovich-Rubinstein duality, the Wasserstein-$1$ distance is expressed as a supremum over $1$-Lipschitz functions. As a result, the WGAN critic is constrained to be $1$-Lipschitz; in practice this constraint is enforced by weight clipping, gradient penalty, or spectral normalization. The resulting min-max objective is
\begin{equation}
\mathcal{L}_{\rm EM}(g,h)=\mathbb{E}_{X\sim P_{\mathrm{data}}}[h(X)]-\mathbb{E}_{Z\sim P_Z}[h(g(Z))],
\end{equation}
and training solves \eqref{eq:minmax} with $\mathcal{L}_{\rm EM}$ replacing $\mathcal{L}_{\rm ML}$~\citep[Theorem~3]{arjovsky2017wasserstein}. It is worth noting that WGAN changes the convergence criterion: vanilla GAN is tied to JS divergence, whereas WGAN uses the weaker Wasserstein distance. This weaker convergence criterion is the common intuitive reason for the improved stability of WGAN.

\section{Alternative Formulation: GAN with Maximum Discrimination BER}
\color{black}
Discriminator training in GANs is inherently a binary classification problem (real-versus-fake), which motivates using BOLT. However, deploying BOLT inside the adversarial game requires understanding how a BOLT-trained model recovers the Bayes-optimal discriminator. We characterize this connection by showing that a population minimizer of the BOLT risk recovers the Bayes-optimal discrimination through thresholding. We further derive the bias and variance of the BER estimator computed by empirical minimization of the BOLT loss. These results enable us to formulate GAN training as a min-max problem where the generator is trained to increase the discrimination BER, and to characterize the metric under which $P_{\mathrm{g}}$ converges to $P_{\mathrm{data}}$.


\subsection{Bayes Optimal Discriminator via BOLT}
\citet{naeini2025universal} conjecture without proof that BOLT-trained models can recover the Bayes-optimal classifier through a plug-in rule. Theorem~\ref{bolt-map} formalizes this statement: a population minimizer of the BOLT risk recovers the MAP rule when thresholded at $0.5$.

\begin{theorem}[MAP vs. BOLT optimum]
\label{bolt-map}
Consider the binary classification setup in Section~\ref{sec:background}, and let $P_X$ denote the marginal distribution of $X$. Let
\begin{align}
\label{eq:bolt-pop-risk}
h^\star \in \argmin_{h:\mathcal{X}\to[0,1]}
\; \mathbb{E}\bigl[\ell_{\rm BOLT}(h(X),C)\bigr],
\end{align}
and let $\eta(x):=\Pr(C=1\mid X=x)$ be the posterior probability. Then, for $P_X$-almost every $x$,
\begin{equation}
\label{eq:bolt-pop-minimizer}
h^\star(x)=
\begin{cases}
1, & \eta(x)>0.5,\\
0, & \eta(x)<0.5,\\
\text{any } z\in[0,1], & \eta(x)=0.5.
\end{cases}
\end{equation}
Consequently, the plug-in classifier
\begin{equation}
\widehat{C}(x)=
\begin{cases}
1, & h^\star(x)\ge 0.5,\\
2, & h^\star(x)<0.5,
\end{cases}
\end{equation}
coincides $P_X$-almost everywhere with MAP classifier $\displaystyle C_{\rm MAP}(x)\in\argmax_{k\in\{1,2\}}\Pr(C=k\mid X=x)$.
\end{theorem}
\begin{proof} See Appendix~\ref{sec:B}.\end{proof}

Theorem~\ref{bolt-map} also gives an alternative representation of the Bayes error through the posterior probability, or equivalently through the class-conditional density ratio. This representation lets us characterize the statistics of the BER estimator induced by the BOLT framework.
\begin{corollary}[Hinge form of $\varepsilon_{\mathrm{Bayes}}$]
\label{cor:lr-form}
Define the density ratio $U(x):=p_1(x)/p_2(x)$ on $\{x:p_2(x)>0\}$, and define $\tau:=q_2/q_1$. Then, an optimizer of the BOLT risk 
is of the form
\begin{align}
\label{eq:hstar-01}
h^\star(x)=
\begin{cases}
0, & U(x)<\tau,\\
1, & U(x)\ge \tau,
\end{cases}
\end{align}
and, defining the hinge function $t_0(u):=[q_2-q_1u]_+$, the BER is given by
\begin{equation}
\label{eq:bolt-tight-and-hinge}
\varepsilon_{\mathrm{Bayes}}
= q_2-\mathbb{E}_{X\sim P_2}\big[t_0(U(X))\big].
\end{equation}
\end{corollary}
\begin{proof}
For $x$ with $p_2(x)>0$, Bayes' rule gives $\eta(x)=q_1U(x)/(q_1U(x)+q_2)$. Hence, $\eta(x)>0.5$ if and only if $U(x)>\tau$, and $\eta(x)<0.5$ if and only if $U(x)<\tau$. Substituting this into Theorem~\ref{bolt-map} gives \eqref{eq:hstar-01}. The hinge representation \eqref{eq:bolt-tight-and-hinge} follows by rewriting the tight BOLT bound at $h^\star$ in terms of $U$.
\end{proof}

Using a possibly neural density-ratio approximation $\widehat U$ in Corollary~\ref{cor:lr-form} gives a BER estimator. Since \eqref{eq:bolt-tight-and-hinge} is an expectation under $P_2$, the estimator averages over class-$2$ samples. More details are given in Appendix~\ref{app:ber-estimator}; we state the bias and variance bounds below.
\begin{theorem}[BER estimator]
\label{thm:ber-plug-in}
Let $\widehat U:\mathcal{X}\to\mathbb{R}$ be an estimator of the density ratio $U$, and let $\widehat q_1,\widehat q_2\in(0,1)$ be empirical class priors computed on $M$ samples. Define $\widehat\varepsilon_{\rm BOLT}$ as the estimator obtained from the hinge representation in Corollary~\ref{cor:lr-form} by sample averaging with $M$ class-$2$ samples. Assume $\mathbb{E}_{X\sim P_2}[|\widehat U(X)-U(X)|]\le \varepsilon_0$. Then the bias is bounded as
\begin{equation}
\label{eq:bolt-bias}
\big|\mathbb{E}[\widehat\varepsilon_{\rm BOLT}]-\varepsilon_{\mathrm{Bayes}}\big| \le q_1\varepsilon_0+\mathcal{O}(M^{-1/2}),
\end{equation}
and
$\mathrm{Var}(\widehat\varepsilon_{\rm BOLT}) = \mathcal{O}(M^{-1})$.
\end{theorem}
\begin{proof} See Appendix~\ref{app:ber-estimator}.\end{proof}

\subsection{Generation with Maximum Discrimination BER}
Theorem~\ref{bolt-map} interprets a BOLT-trained discriminator as an estimator of the Bayes-optimal classifier for the real-versus-fake task. We therefore construct a max-min game, referred to as \emph{Max-BER}, in which the discriminator minimizes the BOLT risk for fixed generator $g$, while the generator maximizes the induced discrimination BER. For a generator $g$ inducing $P_{\mathrm g}$ and a bounded discriminator $h:\mathcal X\to[0,1]$, define
\begin{align}
\label{BOLT-GAN-Fun}
\mathcal{L}_{\mathrm{MB}}^{(\pi)}(g,h)
:= \pi\,\mathbb{E}_{X\sim P_{\mathrm{data}}}[h(X)]
-(1-\pi)\,\mathbb{E}_{Z\sim P_Z}[h(g(Z))],
\end{align}
where $\pi=\Pr(C=1)$ is the probability of sampling a real data point and the subscript $\mathrm{MB}$ stands for the Max-BER functional. The generator then solves
\begin{equation}
\min_g\max_h\;\mathcal{L}_{\mathrm{MB}}^{(\pi)}(g,h),
\end{equation}
to fool the Bayes-optimal discrimination rule.

\paragraph{Generator distribution.}
The Max-BER objective leads to learning the data distribution. We now characterize the metric under which $P_{\mathrm g}$ converges to $P_{\mathrm{data}}$. We use the total variation (TV) distance; see Appendix~\ref{sec:S1-prelims} for the definition and the variational form. The following theorem characterizes the connection between Max-BER learning and TV.
\begin{theorem}[Relation between the Max-BER objective and total variation]
\label{thm:bolt-vs-tv}
Fix $\pi\in(0,1)$, and define
\begin{equation}
\label{eq:risk_def_full}
\mathcal{D}^{(\pi)}(g)\triangleq \sup_h \mathcal{L}_{\mathrm{MB}}^{(\pi)}(g,h),
\end{equation}
where the supremum is over all bounded discriminators $h:\mathcal X\to[0,1]$. Then,
\begin{equation}
\label{eq:sum_tv}
\mathcal{D}^{(\pi)}(g)+\mathcal{D}^{(1-\pi)}(g)\ge \mathrm{TV}(P_{\mathrm{data}},P_{\mathrm g}),
\end{equation}
with equality, in the form $2\mathcal{D}^{(0.5)}(g)=\mathrm{TV}(P_{\mathrm{data}},P_{\mathrm g})$, when $\pi=0.5$.
\end{theorem}
\begin{proof}
By Lemma~\ref{lem:complementary-sum},
\begin{equation}
\mathcal{L}_{\mathrm{MB}}^{(\pi)}(g,h)+\mathcal{L}_{\mathrm{MB}}^{(1-\pi)}(g,h)
=\mathbb{E}_{P_{\mathrm{data}}}[h(X)]-\mathbb{E}_{P_{\mathrm g}}[h(X)].
\end{equation}
Hence,
\begin{align}
\Sigma(g)
&\triangleq \sup_{h:\mathcal X\to[0,1]}
\big[\mathcal{L}_{\mathrm{MB}}^{(\pi)}(g,h)+\mathcal{L}_{\mathrm{MB}}^{(1-\pi)}(g,h)\big] \\
&= \sup_{h:\mathcal X\to[0,1]}\int h(X)\,\mathrm d(P_{\mathrm{data}}-P_{\mathrm g})
= \mathrm{TV}(P_{\mathrm{data}},P_{\mathrm g}),
\end{align}
where the last identity follows from Lemma~\ref{lem:tv-forms}. Using the inequality $\sup_h\{F(h)+G(h)\}\le \sup_hF(h)+\sup_hG(h)$, we can write
\begin{align}
\mathrm{TV}(P_{\mathrm{data}},P_{\mathrm g})=\Sigma(g)
&\le \sup_h\mathcal{L}_{\mathrm{MB}}^{(\pi)}(g,h)+\sup_h\mathcal{L}_{\mathrm{MB}}^{(1-\pi)}(g,h)
=\mathcal{D}^{(\pi)}(g)+\mathcal{D}^{(1-\pi)}(g).
\end{align}
For $\pi=0.5$, the two terms inside the supremum are identical, and therefore
\begin{equation}
\mathrm{TV}(P_{\mathrm{data}},P_{\mathrm g})=\sup_h 2\mathcal{L}_{\mathrm{MB}}^{(0.5)}(g,h)=2\mathcal{D}^{(0.5)}(g).
\end{equation}
This concludes the proof.
\end{proof}

Theorem~\ref{thm:bolt-vs-tv} shows that solving the unconstrained Max-BER problem drives $P_{\mathrm g}$ toward $P_{\mathrm{data}}$ in TV. This is a strong metric and, as our experiments show, can be numerically unstable without additional continuity control. Corollary~\ref{cor:halftv} in Appendix~\ref{sec:E} gives a related half-TV guarantee for general priors.

\subsection{Max-BER with Lipschitz Discriminator}
\label{sec:boltgan-framework}
The optimal discriminator in the unconstrained Max-BER problem may be highly non-smooth. This is undesirable in practice because an objective as strong as the TV metric can yield poorly behaved gradients. Motivated by WGAN, we therefore constrain the discriminator to be $1$-Lipschitz and refer to the resulting method as \BOLTGAN.

\paragraph{BOLT-GAN.}
More precisely, \BOLTGAN solves the constrained Max-BER problem
\begin{equation}
\min_g\max_{h\in\mathcal H_{\mathrm{Lip}}}\;\mathcal{L}_{\mathrm{MB}}^{(\pi)}(g,h),
\end{equation}
where $\mathcal H_{\mathrm{Lip}}$ denotes the class of bounded $1$-Lipschitz discriminators,
\begin{equation}
\mathcal H_{\mathrm{Lip}}
=\{h:\mathcal X\to[0,1]: |h(x)-h(y)|\le \|x-y\| \;\forall x,y\in\mathcal X\}.
\end{equation}
We use this formulation to analyze the impact of discriminator continuity on the generator distribution.

\paragraph{BOLT-GAN generator distribution.}
We next compare the BOLT-GAN objective with the Wasserstein-$1$ distance. Appendix~\ref{sec:S1-prelims} gives the definition and the variational form of the Wasserstein-$1$ distance. The following theorem characterizes the convergence behavior of the \BOLTGAN generator.

\begin{theorem}[Convergence of BOLT-GAN]
\label{thm:lipschitz-bolt-w1}
Fix $0<\pi\le0.5$ and define
\begin{equation}
\label{eq:lip-bolt}
\mathcal{D}^{(\pi)}_{\mathrm{Lip}}(g)
\triangleq \max_{h\in\mathcal H_{\mathrm{Lip}}}\mathcal{L}_{\mathrm{MB}}^{(\pi)}(g,h),
\end{equation}
with $\mathcal{L}_{\mathrm{MB}}^{(\pi)}(g,h)$ defined in \eqref{BOLT-GAN-Fun}. Then,
\begin{equation}
\label{eq:sigma-lip}
\mathcal{D}^{(\pi)}_{\mathrm{Lip}}(g)\le W_1(P_{\mathrm{data}},P_{\mathrm g}).
\end{equation}
\end{theorem}
\begin{proof}
Using Lemma~\ref{lem:complementary-sum} as in the proof of Theorem~\ref{thm:bolt-vs-tv}, the complementary sum of the two prior-weighted functionals satisfies
\begin{align}
\Sigma_{\mathrm{Lip}}(g)
&\triangleq \sup_{h\in\mathcal H_{\mathrm{Lip}}}
\big[\mathcal{L}_{\mathrm{MB}}^{(\pi)}(g,h)+\mathcal{L}_{\mathrm{MB}}^{(1-\pi)}(g,h)\big]
=\sup_{h\in\mathcal H_{\mathrm{Lip}}}
\big\{\mathbb{E}_{P_{\mathrm{data}}}[h(X)]-\mathbb{E}_{P_{\mathrm g}}[h(X)]\big\}.
\end{align}
The second line follows by expanding the two terms and collecting the coefficients of $\mathbb{E}_{P_{\mathrm{data}}}[h]$ and $\mathbb{E}_{P_{\mathrm g}}[h]$. Lemma~\ref{lem:functional-dominance} in Appendix~\ref{sec:E} indicates that, for $0<\pi\le0.5$, $\mathcal{D}^{(\pi)}_{\mathrm{Lip}}(g)\le\Sigma_{\mathrm{Lip}}(g)$. Hence,
\begin{equation}
\mathcal{D}^{(\pi)}_{\mathrm{Lip}}(g)
\le \sup_{h\in\mathcal H_{\mathrm{Lip}}}
\big\{\mathbb{E}_{P_{\mathrm{data}}}[h(X)]-\mathbb{E}_{P_{\mathrm g}}[h(X)]\big\}.
\end{equation}
Since $\mathcal H_{\mathrm{Lip}}$ is a subset of all $1$-Lipschitz functions, Kantorovich-Rubinstein duality gives
\begin{equation}
W_1(P_{\mathrm{data}},P_{\mathrm g})
\ge \sup_{h\in\mathcal H_{\mathrm{Lip}}}
\big\{\mathbb{E}_{P_{\mathrm{data}}}[h(X)]-\mathbb{E}_{P_{\mathrm g}}[h(X)]\big\}
\ge \mathcal{D}^{(\pi)}_{\mathrm{Lip}}(g),
\end{equation}
which concludes the proof.
\end{proof}

We note that $\mathcal{D}^{(\pi)}_{\mathrm{Lip}}(g)$ is a Bayes-error-motivated prior-weighted discrepancy between $P_{\mathrm g}$ and $P_{\mathrm{data}}$. Theorem~\ref{thm:lipschitz-bolt-w1} implies that convergence in Wasserstein-$1$ suffices for convergence of $\mathcal{D}^{(\pi)}_{\mathrm{Lip}}(g)$ to zero, while the converse need not hold. Our numerical experiments suggest that this weaker discrepancy can lead to more stable training.

\paragraph{Computational Lipschitz enforcement.}
In practice, we set $h_\theta(X)=\sigma(f_\theta(X))$ and enforce Lipschitzness of $f_\theta$ using the gradient-penalty technique of \citet{gulrajani2017improved}. For a real sample $X$ and a generated sample $g(Z)$, set $\widehat X=\varepsilon X+(1-\varepsilon)g(Z)$ for a uniform perturbation $\varepsilon\sim\mathrm{Unif}(0,1)$. The Gradient Penalty (GP) is then $\lambda_{\rm GP}\,\mathbb{E}\big(\|\nabla_{\widehat X}f_\theta(\widehat X)\|_2-1\big)^2$ for a hyperparameter $\lambda_{\rm GP}$, and is added to the discriminator loss during training. Since $\sigma$ is Lipschitz, this also controls the bounded discriminator $h_\theta$. More details on Lipschitz enforcement and the \BOLTGAN training algorithm are given in Appendices~F and~H.

\section{Experiments and Results}
\label{sec:experiments}
We validate our analysis through numerical experiments targeting three main questions: (i) how discriminator continuity affects the stability of Max-BER training; (ii) how \BOLTGAN performs against \WGANGP~\citep{arjovsky2017wasserstein,gulrajani2017improved} and \HingeSN~\citep{miyato2018spectral} under an identical residual-DCGAN architecture and optimization setup; and (iii) how generalizable our observations are across datasets, discriminator architectures, longer training, mode coverage, and conditional generation.

\paragraph{Experimental setup.}
Our initial experiments use a residual DCGAN backbone~\citep{gulrajani2017improved}. \BOLTGAN and \WGANGP use gradient penalty to impose Lipschitz continuity, while \HingeSN uses spectral normalization~\citep{miyato2018spectral}. In the controlled comparisons, the architecture, optimizer, learning-rate schedule, batch size, and regularization setup are kept identical whenever applicable; only the adversarial objective is changed. We report Fr\'echet Inception Distance (FID) using \texttt{pytorch-fid}~\citep{seitzer2020pytorch}. The standard controlled budget is $20$ epochs, which is used to compare objectives under the same limited-compute setting; longer $100$-epoch runs are reported separately in Table~\ref{tab:stronger-and-long}. For long-horizon experiments, FID@50k means FID computed with $50$k generated samples. We also report Kernel Inception Distance (KID) and Inception Score (IS) on CIFAR-10. Improved Precision/Recall, which follows \citet{kynkaanniemi2019improved} with $(n=\PRn,k=\PRk)$, measures sample fidelity through precision and sample coverage through recall. More implementation and evaluation details are given in Appendix~\ref{sec:H}.

\begin{figure}[!t]
  \centering
  \begin{minipage}[t]{0.49\linewidth}
    \vspace{0pt}
    \centering
    \begin{minipage}[t][0.205\textheight][t]{\linewidth}
      \centering
      \resizebox{0.96\linewidth}{!}{\begin{tikzpicture}
  \begin{axis}[
     width=3.2in, height=2.2in,
    xlabel={Epoch},
    ylabel={FID ($\downarrow$)},
    xmin=0, xmax=85,
    ymin=20, ymax=170,
    grid=both,
    legend pos=north east,
    legend cell align=left,
    cycle list={{orange!80!black,mark=o},{violet!80!black,mark=s}},
  ]
    \addplot[name path=uwgan_upper,draw=none] coordinates {
      (1,163.1852) (5, 95.6537) (10, 73.3682) (15, 75.7663) (20, 70.7706)
      (25, 75.3752) (30, 77.3570) (35, 90.2788) (40, 93.7595) (45, 94.9833)
      (50, 88.9330) (55,100.) (60, 94.5003) (65,103.5342) (70, 95.8185)
      (75,104.4846) (80,112.1232)
    };
    \addplot[name path=uwgan_lower,draw=none] coordinates {
      (1,141.6205) (5, 84.3414) (10, 59.9012) (15, 47.7761) (20, 46.0162)
      (25, 58.0) (30, 59.0) (35, 71.8699) (40, 73.0) (45, 85.0)
      (50, 71.9587) (55, 73.1175) (60, 74.2467) (65, 76.2047) (70, 75.)
      (75, 83.4198) (80, 68.1643)
    };
    \addplot[fill=orange!20, draw=none] fill between[
      of=uwgan_upper and uwgan_lower
    ];
    \addplot[very thick] coordinates {
      (1,152.4028) (5, 89.9976) (10, 66.6347) (15, 61.7712) (20, 58.3934)
      (25, 66.6697) (30, 68.1764) (35, 81.0744) (40, 83.2443) (45, 90.0601)
      (50, 80.4459) (55, 86.7790) (60, 80.4510) (65, 89.3695) (70, 90.2055)
      (75, 93.9522) (80, 95.1438)
    };
    \addlegendentry{WGAN-GP}

    \addplot[name path=ubolt_upper,draw=none] coordinates {
      (1,113.3203) (5, 63.5310) (10, 56.7627) (15, 51.0) (20, 52.0)
      (25, 48.) (30, 43.) (35, 43.5080) (40, 46.) (45, 45.)
      (50, 42.) (55, 42.) (60, 36.) (65, 42.) (70, 46.)
      (75, 45.0) (80, 42.6337)
    };
    \addplot[name path=ubolt_lower,draw=none] coordinates {
      (1,106.47) (5, 52.9) (10, 50.5) (15, 45.0) (20, 38.0)
      (25, 37.0) (30, 35.0) (35, 35.0) (40, 33.0) (45, 32.0)
      (50, 29.0) (55, 31.) (60, 26.) (65, 31.) (70, 27.)
      (75, 25.0) (80, 21.0)
    };
    \addplot[fill=red!20, draw=none] fill between[
      of=ubolt_upper and ubolt_lower
    ];
    \addplot[very thick] coordinates {
      (1,109.8975) (5, 58.2135) (10, 53.6458) (15, 48.2462) (20, 45.9401)
      (25, 42.7842) (30, 39.4595) (35, 39.5993) (40, 39.2208) (45, 40.7992)
      (50, 36.8481) (55, 36.7657) (60, 33.7018) (65, 34.4446) (70, 36.8438)
      (75, 34.7257) (80, 34.4183)
    };
    \addlegendentry{BOLT-GAN}
  \end{axis}
\end{tikzpicture}}
    \end{minipage}
\captionof{figure}{CIFAR-10 FID ($\downarrow$) vs. epochs for \WGANGP{} and \BOLTGAN{} with $\lambda_{\rm GP}=10$. Curves show mean and $95\%$ CI over $3$ seeds.}
    \label{fig:fid_curve}
  \end{minipage}\hfill
  \begin{minipage}[t]{0.49\linewidth}
    \vspace{0pt}
    \centering
    \begin{minipage}[t][0.205\textheight][t]{\linewidth}
      \centering
      \begin{minipage}[t]{0.49\linewidth}
        \vspace{0pt}
        \centering
        \includegraphics[width=\linewidth,height=0.185\textheight,keepaspectratio]{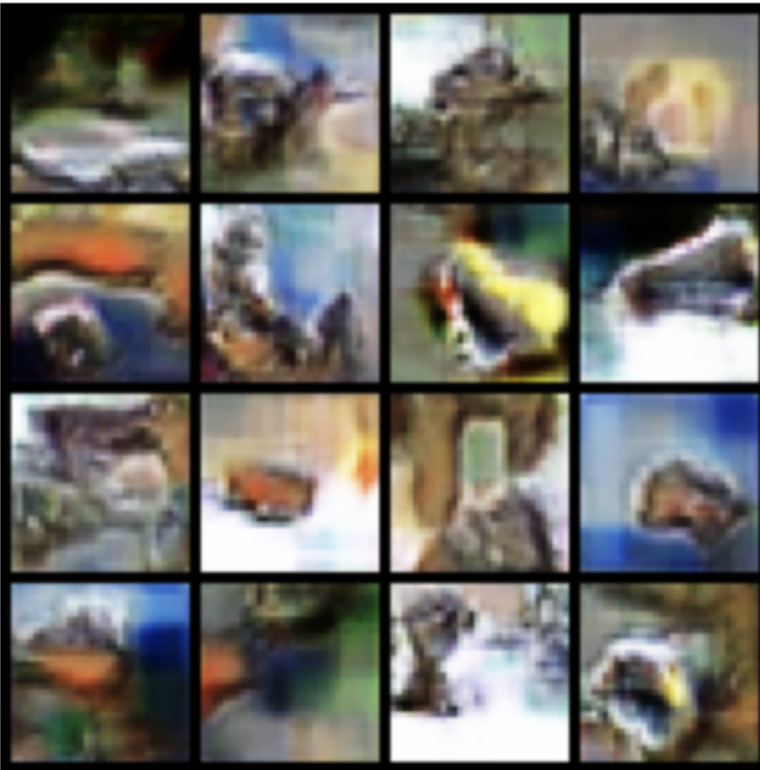}\par
        {\scriptsize \WGANGP}
      \end{minipage}\hfill
      \begin{minipage}[t]{0.49\linewidth}
        \vspace{0pt}
        \centering
        \includegraphics[width=\linewidth,height=0.185\textheight,keepaspectratio]{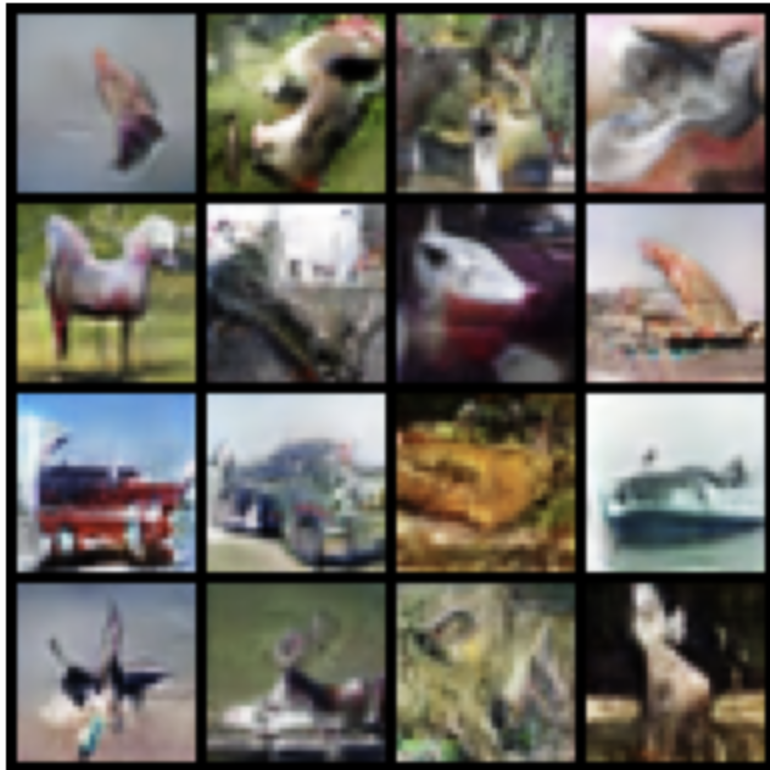}\par
        {\scriptsize \BOLTGAN}
      \end{minipage}
    \end{minipage}
    \captionof{figure}{CIFAR-10 samples generated after $100$ epochs of training by \WGANGP and \BOLTGAN with the identical architecture and optimization setting.}
    \label{fig:cifar100_compare}
  \end{minipage}
\end{figure}

\subsection{Effect of Discriminator Continuity}
As the theory predicts, unconstrained Max-BER training minimizes the TV distance between the data and generator distributions. Because TV is a strong metric, it is expected to produce less stable gradients than the Wasserstein-controlled BOLT-GAN objective. Table~\ref{tab:stability-ablation} presents experimental evidence for this conjecture. The unconstrained \BOLTGANNaive training diverges or collapses to a high FID range after $20$ epochs, while imposing the Lipschitz constraint through gradient penalty stabilizes the training loop and leads to low FID. Table~\ref{tab:stability-ablation} further shows the impact of perturbing the prior $\pi$ and the GP regularizer on convergence. The results indicate that \BOLTGAN is only mildly sensitive around the default choices: a mild prior perturbation $\pi=0.45$ gives the best $100$-epoch CIFAR-10 FID, and $\lambda_{\rm GP}\in\{5,10\}$ gives a robust range. An additional Lipschitz-enforcement ablation under a fixed BOLT objective is given in Appendix~\ref{sec:G-lip}.

\begin{table}[!t]
\centering
\caption{Effect of discriminator continuity and hyperparameter sensitivity. Lower FID is better. The first block compares unconstrained Max-BER with \BOLTGAN under the same 20-epoch budget.}
\label{tab:stability-ablation}
\MainTableSetup
\begin{tabularx}{\linewidth}{@{}l l Y c@{}}
\toprule
Block & Dataset / setting & Baseline or sweep & \BOLTGAN / best \\
\midrule
\multirow{3}{*}{Continuity}
& CIFAR-10 & Max-BER: $>300$ & $\mathbf{44.2}$ \\
& CelebA-64 & Max-BER: $>300$ & $\mathbf{9.2}$ \\
& LSUN Bedroom-64 & Max-BER: $>400$ & $\mathbf{28.4}$ \\
\midrule
\multirow{4}{*}{Prior $\pi$}
& $0.50$ & FID@20 ep: $\mathbf{44.2\pm0.6}$ & FID@100 ep: $36.6\pm0.8$ \\
& $0.45$ & FID@20 ep: $45.3\pm0.7$ & FID@100 ep: $\mathbf{36.2\pm0.7}$ \\
& $0.40$ & FID@20 ep: $48.5\pm1.0$ & FID@100 ep: $41.0\pm1.1$ \\
& $0.35$ & FID@20 ep: $51.4\pm1.3$ & FID@100 ep: $44.8\pm1.4$ \\
\midrule
\multirow{2}{*}{GP sweep}
& \WGANGP & $60.8/59.6/60.0/61.1$ for $\lambda=1/5/10/20$ & $\mathbf{59.6}$ \\
& \BOLTGAN & $44.3/43.6/44.2/46.7$ for $\lambda=1/5/10/20$ & $\mathbf{43.6}$ \\
\bottomrule
\end{tabularx}
\vspace{-0.2em}
\end{table}
\subsection{BOLT-GAN versus Baselines}
Interestingly, \BOLTGAN shows better performance than the standard baselines in the controlled setting. Table~\ref{tab:fid20} reports training results for \WGANGP, \HingeSN, and \BOLTGAN on multiple datasets. The results show that \BOLTGAN outperforms \WGANGP on all datasets and remains competitive with, or better than, \HingeSN. The improved Precision/Recall results further indicate that the gain is not limited to FID, as \BOLTGAN improves both sample fidelity and coverage relative to \WGANGP. Figure~\ref{fig:fid_curve} gives the corresponding CIFAR-10 training trajectory for \WGANGP and \BOLTGAN. \BOLTGAN follows a smoother trajectory than \WGANGP, consistent with the stability observed in Table~\ref{tab:stability-ablation}. These findings suggest that training the generator to minimize a weaker Wasserstein-controlled discrepancy can lead to better numerical stability, in line with prior observations in the WGAN literature~\citep{arjovsky2017wasserstein,gulrajani2017improved}. In comparison, R3GAN~\citep{huang2024modern} reports a CIFAR-10 FID of $47.2$ when trained in a comparable residual-DCGAN setting; \BOLTGAN obtains a lower FID ($44.2$) without architectural modification.

\begin{table}[!t]
\centering
\caption{Controlled baseline comparison under an identical architecture and optimization setup. FID ($\downarrow$) and improved Precision/Recall ($\uparrow$) are reported after $20$ epochs for the matched \WGANGP/\BOLTGAN comparison.}
\label{tab:fid20}
\MainTableSetup
\begin{tabularx}{\linewidth}{@{}lcccZZ@{}}
\toprule
Dataset & \multicolumn{3}{c}{FID ($\downarrow$)} & \multicolumn{2}{c}{Improved Precision/Recall ($\uparrow$)} \\
\cmidrule(lr){2-4}\cmidrule(lr){5-6}
& \WGANGP & \HingeSN & \BOLTGAN & \WGANGP & \BOLTGAN \\
\midrule
CIFAR-10        & $60.0$  & $46.9$ & $\mathbf{44.2}$ & $0.714/0.183$ & $\mathbf{0.763/0.352}$ \\
LSUN Church-64  & $43.5$  & $19.7$ & $\mathbf{14.8}$ & $0.681/0.176$ & $\mathbf{0.750/0.411}$ \\
LSUN Bedroom-64 & $102.5$ & $28.8$ & $\mathbf{28.4}$ & $0.388/0.098$ & $\mathbf{0.598/0.450}$ \\
CelebA-64       & $10.3$  & $19.6$ & $\mathbf{9.2}$  & $0.672/0.104$ & $\mathbf{0.840/0.548}$ \\
\bottomrule
\end{tabularx}
\vspace{-0.2em}
\end{table}

\subsection{Additional Experiments}
To inspect how generalizable our observations are, we expand our experiments beyond the basic residual-DCGAN setting. Table~\ref{tab:stronger-and-long} summarizes these additional experiments. For both residual-CNN and ViT-style discriminators, \BOLTGAN outperforms the corresponding \WGANGP baseline on Tiny ImageNet and STL-10. Extending the training to $100$ epochs also improves CIFAR-10 FID@50k, Kernel Inception Distance, and Inception Score, as well as Tiny ImageNet and STL-10 FID@50k. We additionally compare \BOLTGAN against \WGANGP and the classical vanilla non-saturating GAN (NS-GAN) objective on the synthetic mode-coverage benchmark. The results show that \BOLTGAN retains stable coverage where NS-GAN collapses. The projection-discriminator experiment further shows that \BOLTGAN trains stably in a setting where \WGANGP diverges. An additional empirical verification of the BER-estimator bias and variance scaling in Theorem~\ref{thm:ber-plug-in} is given in Appendix~\ref{sec:G-ber}.

\begin{table}[!t]
\centering
\caption{Additional experiments beyond the basic setting. NS-GAN denotes the vanilla non-saturating GAN objective. Arrows indicate whether larger or smaller values are better.}
\label{tab:stronger-and-long}
\MainTableSetup
\begin{tabularx}{\linewidth}{@{}p{0.22\linewidth}p{0.30\linewidth}Yc@{}}
\toprule
Experiment & Setting / metric & Baseline result & \BOLTGAN \\
\midrule
Discriminator change & Tiny ImageNet, Res. CNN, FID ($\downarrow$) & \WGANGP{}: $74.8$ & $\mathbf{68.9}$ \\
& Tiny ImageNet, ViT-style, FID ($\downarrow$) & \WGANGP{}: $70.6$ & $\mathbf{64.1}$ \\
& STL-10, Res. CNN, FID ($\downarrow$) & \WGANGP{}: $49.7$ & $\mathbf{44.8}$ \\
& STL-10, ViT-style, FID ($\downarrow$) & \WGANGP{}: $45.5$ & $\mathbf{40.9}$ \\
\midrule
Image benchmark & ImageNet-32, 30 ep, FID ($\downarrow$) & \WGANGP{}: $58.0$; \HingeSN{}: $42.3$ & $\mathbf{34.8}$ \\
\midrule
$100$-epoch training & CIFAR-10 FID@50k / KID$\times10^3$ / IS & \WGANGP{}: $68.5/48.7/5.71$; \HingeSN{}: $35.8/24.2/7.32$ & $\mathbf{34.9/23.1/7.41}$ \\
& Tiny ImageNet FID@50k ($\downarrow$) & \WGANGP{}: $74.8$; \HingeSN{}: $71.6$ & $\mathbf{68.9}$ \\
& STL-10 FID@50k ($\downarrow$) & \WGANGP{}: $49.7$; \HingeSN{}: $46.9$ & $\mathbf{44.8}$ \\
\midrule
Mode coverage & 25-Gaussians modes recovered ($\uparrow$) & \WGANGP{}: $\mathbf{25.0\pm0.0}$; NS-GAN: $6.0\pm1.4$ & $\mathbf{25.0\pm0.0}$ \\
Conditional generation & CIFAR-10, FID@10k ($\downarrow$) & \WGANGP{}: unstable ($>300$); \HingeSN{}: $63.2$ & $\mathbf{44.6}$ \\
\bottomrule
\end{tabularx}
\vspace{-0.2em}
\end{table}

\section{Conclusion}
We have revisited GAN training from a Bayesian viewpoint, where the generator is trained to maximize a surrogate of the discrimination BER determined by the BOLT loss. The unconstrained formulation minimizes TV, while the $1$-Lipschitz version induces \BOLTGAN, whose discrepancy is upper-bounded by the Wasserstein-$1$ distance. Under matched settings, \BOLTGAN improves over \WGANGP across datasets and discriminator architectures, and remains competitive with \HingeSN. Additional ImageNet-32, mode-coverage, and conditional-generation experiments support the same stability pattern. These results suggest that discriminator smoothness can trade convergence strength for training stability.
\clearpage
\bibliographystyle{plainnat}
\bibliography{references}

@article{moon2017ensemble,
  title={Ensemble nearest neighbor classifiers for high-dimensional data},
  author={Moon, Kevin R and van der Maaten, Laurens and Scheid, Peter and Burkhardt, Daniel B and Chen, Huidong and Yajima, Masanao and van Dijk, David and Wolf, Guy and Krishnaswamy, Smita},
  journal={Journal of Machine Learning Research},
  volume={18},
  number={1},
  pages={6765--6816},
  year={2017}
}

@misc{seitzer2020pytorch,
  title={PyTorch-FID: FID Score for PyTorch},
  author={Seitzer, Maximilian},
  year={2020},
  note={\url{https://github.com/mseitzer/pytorch-fid}},
}

@inproceedings{goodfellow2014generative,
  title        = {{Generative} {Adversarial} {Nets}},
  author       = {Goodfellow, Ian J. and Pouget-Abadie, Jean and Mirza, Mehdi and Xu, Bing and Warde-Farley, David and Ozair, Sherjil and Courville, Aaron and Bengio, Yoshua},
  booktitle    = {Advances in Neural Information Processing Systems (NeurIPS)},
  volume       = {27},
  year         = {2014}
}

@inproceedings{arjovsky2017wasserstein,
  title={{Wasserstein} {Generative} {Adversarial} {Networks}},
  author={Arjovsky, Martin and Chintala, Soumith and Bottou, L{\'e}on},
  booktitle={International Conference on Machine Learning (ICML)},
  pages={214--223},
  year={2017},
  organization={PMLR}
}

@inproceedings{nowozin2016f,
  title={f-gan: Training generative neural samplers using variational divergence minimization},
  author={Nowozin, Sebastian and Cseke, Botond and Tomioka, Ryota},
  booktitle={Advances in Neural Information Processing Systems (NeurIPS)},
  volume={29},
  year={2016}
}

@inproceedings{gulrajani2017improved,
  title={Improved training of {Wasserstein} {GAN}s},
  author={Gulrajani, Ishaan and Ahmed, Faruk and Arjovsky, Martin and Dumoulin, Vincent and Courville, Aaron},
  booktitle={Advances in Neural Information Processing Systems (NeurIPS)},
  volume={30},
  year={2017}
}

@inproceedings{miyato2018spectral,
  title={Spectral normalization for generative adversarial networks},
  author={Miyato, Takeru and Kataoka, Toshiki and Koyama, Masanori and Yoshida, Yuichi},
  booktitle={International Conference on Learning Representations (ICLR)},
  year={2018}
}

@book{bishop2006pattern,
  title={Pattern recognition and machine learning},
  author={Bishop, Christopher M},
  year={2006},
  publisher={Springer}
}

@book{devroye1996probabilistic,
  title={A {Probabilistic} {Theory} of {Pattern} {Recognition}},
  author={Devroye, Luc and Gy{\"o}rfi, L{\'a}szl{\'o} and Lugosi, G{\'a}bor},
  year={1996},
  publisher={Springer Science \& Business Media}
}

@article{noshad2019learning,
  title={Learning to benchmark: Determining best achievable misclassification error from training data},
  author={Noshad, Morteza and Xu, Li and Hero, Alfred},
  journal={arXiv preprint arXiv:1909.07192},
  year={2019}
}

@inproceedings{ishida2023performance,
  title={Is the Performance of My Deep Network Too Good to Be True? A Direct Approach to Estimating the Bayes Error in Binary Classification},
  author={Ishida, Takashi and Yamane, Ikko and Charoenphakdee, Nontawat and Niu, Gang and Sugiyama, Masashi},
  booktitle={International Conference on Learning Representations (ICLR)},
  year={2023}
}

@inproceedings{heusel2017gans,
  title={{GAN}s trained by a two time-scale update rule converge to a local nash equilibrium},
  author={Heusel, Martin and Ramsauer, Hubert and Unterthiner, Thomas and Nessler, Bernhard and Hochreiter, Sepp},
  booktitle={Advances in Neural Information Processing Systems (NeurIPS)},
  volume={30},
  year={2017}
}

@inproceedings{mao2017least,
  title={Least Squares Generative Adversarial Networks},
  author={Mao, Xudong and Li, Qing and Xie, Haoran and Lau, Raymond YK and Wang, Zhen and Smolley, Stephen Paul},
  booktitle={IEEE International Conference on Computer Vision (ICCV)},
  pages={2794--2802},
  year={2017}
}

@inproceedings{berthelot2017began,
  title={Began: Boundary equilibrium generative adversarial networks},
  author={Berthelot, David and Schumm, Thomas and Metz, Luke},
  booktitle={arXiv preprint arXiv:1703.10717},
  year={2017}
}

@book{fukunaga2013introduction,
  title={Introduction to Statistical Pattern Recognition},
  author={Fukunaga, Keinosuke},
  year={2013},
  publisher={Academic press}
}

@book{villani2009optimal,
  title={{Optimal Transport: Old and New}},
  author={Villani, C{\'e}dric and others},
  volume={338},
  year={2009},
  publisher={Grundlehren der mathematischen Wissenschaften, Springer},
  address={Berlin, Germany}
}

@inproceedings{jolicoeur2019relativistic,
  title     = {The relativistic discriminator: A key element missing from standard {GAN}},
  author    = {Jolicoeur-Martineau, Alexia},
  booktitle = {International Conference on Learning Representations},
  year      = {2019}
}

@article{sekeh2020learning,
  title={Learning to Bound the Multi-Class Bayes Error},
  author={Sekeh, S. and Oselio, B. and Hero, A. O.},
  journal={IEEE Transactions on Signal Processing},
  volume={68},
  pages={3793--3807},
  year={2020},
  month={May}
}

@inproceedings{huang2024modern,
  title     = {The {GAN} is Dead; Long Live the {GAN}! A Modern {GAN} Baseline},
  author    = {Huang, Yiwen and Gokaslan, Aaron and Kuleshov, Volodymyr and Tompkin, James},
  booktitle = {Advances in Neural Information Processing Systems},
  volume    = {37},
  year      = {2024}
}

@inproceedings{noshad2018rate,
  title={Rate-Optimal Meta Learning of Classification Error},
  author={Noshad, Morteza and Hero, Alfred O.},
  booktitle={IEEE International Conference on Acoustics, Speech and Signal Processing (ICASSP)},
  pages={2481--2485},
  year={2018}
}

@inproceedings{zhang2023deepbayes,
  title={Deep Bayes Classifier: Robust Estimation and Prediction in High-Dimensional Data},
  author={Zhang, H. and others},
  booktitle={Advances in Neural Information Processing Systems},
  year={2023}
}

@inproceedings{cheng2023learningfdivergence,
  title={Learning with f-Divergence: Theoretical Insights and Practical Algorithms},
  author={Cheng, W. and others},
  booktitle={International Conference on Machine Learning (ICML)},
  year={2023}
}

@inproceedings{liu2022fdivergence,
  title={A New Perspective on f-Divergence: Its Applications and Interpretations in Machine Learning},
  author={Liu, Y. and others},
  booktitle={IEEE/CVF Conference on Computer Vision and Pattern Recognition (CVPR)},
  year={2022}
}

@inproceedings{petzka2018lipsgan,
  title        = {{On the regularization of Wasserstein GANs}},
  author       = {Petzka, Henning and Fischer, Asja and Lukovnicov, Denis},
  booktitle    = {International Conference on Learning Representations (ICLR) Workshop},
  year         = {2018}
}

@inproceedings{mescheder2018which,
  title        = {Which Training Methods for GANs do actually Converge?},
  author       = {Mescheder, Lars and Geiger, Andreas and Nowozin, Sebastian},
  booktitle    = {International Conference on Machine Learning (ICML)},
  pages        = {3481--3490},
  year         = {2018},
  organization = {PMLR}
}

@inproceedings{kodali2017dragan,
  title        = {On Convergence and Stability of GANs},
  author       = {Kodali, Naveen and Abernethy, Jacob and Hays, James and Kira, Zsolt},
  booktitle    = {arXiv preprint arXiv:1705.07215},
  year         = {2017}
}

@inproceedings{cisse2017parseval,
  title        = {Parseval Networks: Improving Robustness to Adversarial Examples},
  author       = {Ciss{\'e}, Moustapha and Bojanowski, Piotr and Grave, Edouard and Dauphin, Yann and Usunier, Nicolas},
  booktitle    = {International Conference on Machine Learning (ICML)},
  pages        = {854--863},
  year         = {2017},
  organization = {PMLR}
}

@inproceedings{anil2019sorting,
  title        = {Sorting Out Lipschitz Function Approximation},
  author       = {Anil, Cem and Lucas, James and Grosse, Roger},
  booktitle    = {International Conference on Machine Learning (ICML)},
  pages        = {291--301},
  year         = {2019},
  organization = {PMLR}
}

@book{dudley2002real,
  title     = {Real Analysis and Probability},
  author    = {Dudley, Richard M.},
  publisher = {Cambridge University Press},
  year      = {2002}
}

@inproceedings{naeini2025universal,
  title     = {Universal Training of Neural Networks to Achieve Bayes Optimal Classification Accuracy},
  author    = {Naeini, Mohammadreza Tavasoli and Bereyhi, Ali and Noshad, Morteza and Liang, Ben and Hero, Alfred O.},
  booktitle = {IEEE International Conference on Acoustics, Speech and Signal Processing (ICASSP)},
  pages     = {1--5},
  year      = {2025},
  publisher = {IEEE}
}

@article{lu2020deep,
  author = {Lu, Jianfeng and Shen, Zuowei and Yang, Haizhao and Zhang, Shijun},
  title = {Deep Network Approximation for Smooth Functions},
  journal = {SIAM Journal on Mathematical Analysis},
  year = {2020},
  volume = {52},
  number = {6},
  pages = {5465--5507}
}

@book{SSBD2014,
  title     = {Understanding Machine Learning: From Theory to Algorithms},
  author    = {Shai Shalev{-}Shwartz and Shai Ben{-}David},
  publisher = {Cambridge University Press},
  address   = {Cambridge, UK},
  year      = {2014},
  isbn      = {978-1-107-05713-5},
  url       = {https://www.cambridge.org/9781107057135}
}

@inproceedings{kynkaanniemi2019improved,
  title     = {Improved Precision and Recall Metric for Assessing Generative Models},
  author    = {Kynk{\"a}{\"a}nniemi, Tuomas and Karras, Tero and Laine, Samuli and Lehtinen, Jaakko and Aila, Timo},
  booktitle = {Advances in Neural Information Processing Systems},
  year      = {2019}
}

@inproceedings{kang2023gigagan,
  title     = {Scaling up {GANs} for Text-to-Image Synthesis},
  author    = {Kang, Minguk and Zhu, Jun-Yan and Zhang, Richard and Park, Jaesik and Shechtman, Eli and Paris, Sylvain and Park, Taesung},
  booktitle = {IEEE/CVF Conference on Computer Vision and Pattern Recognition (CVPR)},
  year      = {2023}
}

@inproceedings{sauer2023styleganT,
  title     = {{StyleGAN-T}: Unlocking the Power of {GANs} for Fast Large-Scale Text-to-Image Synthesis},
  author    = {Sauer, Axel and Karras, Tero and Laine, Samuli and Geiger, Andreas and Aila, Timo},
  booktitle = {International Conference on Machine Learning (ICML)},
  year      = {2023}
}

@inproceedings{sauer2022styleganxl,
  title     = {{StyleGAN-XL}: Scaling {StyleGAN} to Large Diverse Datasets},
  author    = {Sauer, Axel and Schwarz, Katja and Geiger, Andreas},
  booktitle = {ACM SIGGRAPH 2022 Conference Proceedings},
  year      = {2022}
}

@article{liu2024modernganstability,
  title     = {Revisiting Stability and Capacity in Modern {GAN} Training},
  author    = {Liu, Yiwen and Park, Taesung and Zhang, Richard},
  journal   = {Transactions on Machine Learning Research (TMLR)},
  year      = {2024}
}
\newpage
\appendix

This supplementary material accompanies the main paper. Appendix~A gives definitions and preliminaries used throughout. Appendices~B-D contain proofs of the main results: Appendix~\ref{sec:B} proves Theorem~\ref{bolt-map} (BOLT recovers the MAP classifier); Appendix~\ref{sec:C} proves Corollary~\ref{cor:lr-form} (hinge form of $\varepsilon_{\mathrm{Bayes}}$); Appendix~\ref{sec:D} proves Theorem~\ref{thm:ber-plug-in} (bias and variance of the BER estimator). Appendix~\ref{sec:E} gives further results on BOLT-GAN. Appendix~\ref{sec:F} provides details on Lipschitz enforcement. Appendices~\ref{sec:G} and~\ref{sec:H} contain additional experimental results and full implementation details, respectively.

\section{Definitions and Preliminaries}
\label{sec:S1-prelims}

\subsection{Metrics}
We recall the total variation (TV) distance and the Wasserstein-\(p\) distance, together with basic equivalences
used in our analysis.

\begin{definition}[Total variation]
Let \(P\) and \(Q\) be probability measures on a measurable space \((\mathcal{X},\mathcal{F})\).
The total variation distance is
\begin{align}
\mathrm{TV}(P,Q)\;\defeq\;\sup_{A\in\mathcal{F}}\big|P(A)-Q(A)\big|.
\end{align}
\end{definition}
The total variation can be computed by equivalent forms. The following lemma gives two equivalent expressions for it. 
\begin{lemma}[Equivalent forms of total variation \citep{dudley2002real}]
\label{lem:tv-forms}
For probability measures \(P\) and \(Q\)  on \((\mathcal{X},\mathcal{F})\),
\begin{align}
\mathrm{TV}(P,Q)
&= \sup_{f:\,\mathcal{X}\to[0,1]}\Big(\E_{X\sim P}[f(X)]-\E_{X\sim Q}[f(X)]\Big)
\label{eq:tv-sup-01}
\\
&= \tfrac12 \sup_{\|g\|_\infty\le 1}\Big|\E_{X\sim P}[g(X)]-\E_{X\sim Q}[g(X)]\Big|, \label{eq:tv-sup-bounded}
\end{align}
and if \(P\) and \(Q\) are absolutely continuous w.r.t.\ a \(\sigma\)-finite measure \(\mu\) with densities \(p\) and \(q\),
\begin{align}
\mathrm{TV}(P,Q) \;=\; \tfrac12\int_{\mathcal{X}} |p(x)-q(x)|\,\mathrm{d}\mu(x).
\label{eq:tv-l1}
\end{align}
\end{lemma}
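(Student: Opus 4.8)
The plan is to anchor everything on a Hahn--Jordan decomposition of the signed measure $P-Q$ and to obtain the three displays by a single truncation argument. First I would pass to the dominated setting at no cost: take $\mu=P+Q$, which is finite and dominates both measures, write $p=\mathrm{d}P/\mathrm{d}\mu$ and $q=\mathrm{d}Q/\mathrm{d}\mu$, and set $A^\star=\{x\in\mathcal X:\,p(x)>q(x)\}$, which is measurable as a sublevel set of $p-q$. On $A^\star$ the integrand $p-q$ is nonnegative and on $(A^\star)^{c}$ it is nonpositive, so $A^\star$ is the positive set of the Hahn decomposition of $P-Q$. Since $\int_{\mathcal X}(p-q)\,\mathrm{d}\mu=P(\mathcal X)-Q(\mathcal X)=0$, the positive and negative parts of $p-q$ have equal integral, both equal to $\tfrac12\int_{\mathcal X}|p-q|\,\mathrm{d}\mu$.

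Next I would establish~\eqref{eq:tv-l1}. For any $A\in\mathcal F$, $P(A)-Q(A)=\int_{A}(p-q)\,\mathrm{d}\mu\le\int_{A^\star}(p-q)\,\mathrm{d}\mu$, because restricting the integration to $A^\star$, where $p-q\ge0$, while discarding $(A^\star)^{c}$, where $p-q\le0$, can only increase the value; symmetrically $Q(A)-P(A)\le\int_{(A^\star)^{c}}(q-p)\,\mathrm{d}\mu$. By the previous paragraph both right-hand sides equal $\tfrac12\int_{\mathcal X}|p-q|\,\mathrm{d}\mu$, so $|P(A)-Q(A)|\le\tfrac12\int_{\mathcal X}|p-q|\,\mathrm{d}\mu$ for every $A$, with equality at $A=A^\star$; this is~\eqref{eq:tv-l1}, and the value does not depend on the choice of dominating $\sigma$-finite measure by the chain rule for Radon--Nikodym derivatives.

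The two functional forms then follow from the same truncation, with test functions in place of indicators. For $f:\mathcal X\to[0,1]$, using $f\le1$ on $A^\star$ and $f\ge0$ on $(A^\star)^{c}$ gives $\E_{X\sim P}[f(X)]-\E_{X\sim Q}[f(X)]=\int f\,(p-q)\,\mathrm{d}\mu\le\int_{A^\star}(p-q)\,\mathrm{d}\mu=\mathrm{TV}(P,Q)$, with equality at $f=\ind{x\in A^\star}$; this is~\eqref{eq:tv-sup-01}. Replacing $f$ by $1-f$ flips the sign of $\E_{X\sim P}[f(X)]-\E_{X\sim Q}[f(X)]$, so the supremum of its absolute value over $[0,1]$-valued $f$ is also $\mathrm{TV}(P,Q)$. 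For~\eqref{eq:tv-sup-bounded}, given $\|g\|_\infty\le1$ set $f=(g+1)/2\in[0,1]$; since $\int(p-q)\,\mathrm{d}\mu=0$ the additive constant drops out and $\E_{X\sim P}[g(X)]-\E_{X\sim Q}[g(X)]=2\bigl(\E_{X\sim P}[f(X)]-\E_{X\sim Q}[f(X)]\bigr)$, hence $\tfrac12\bigl|\E_{X\sim P}[g(X)]-\E_{X\sim Q}[g(X)]\bigr|\le\mathrm{TV}(P,Q)$, with equality at $g=\operatorname{sign}(p-q)$.

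The only delicate point---and the main obstacle, such as it is---is the measure-theoretic bookkeeping around the decomposition: checking that $\mu=P+Q$ genuinely dominates both measures so the densities exist, that $A^\star$ is measurable, and that the suprema in the definition and in the two functional forms are attained (by $A^\star$, $\ind{x\in A^\star}$, and $\operatorname{sign}(p-q)$, respectively) rather than merely approached. None of this is deep---it is the standard argument behind the Scheff\'e-type identity $\mathrm{TV}(P,Q)=\tfrac12\|p-q\|_{L^1(\mu)}$---but it is where essentially all the care goes.
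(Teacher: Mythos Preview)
Your proof is correct and is the standard textbook argument via the Hahn--Jordan decomposition of $P-Q$ and the Scheff\'e identity. The paper itself does not prove this lemma: it is stated with a citation to \citet{dudley2002real} and used as a black box, so there is no in-paper proof to compare against. Your write-up would serve perfectly well as the supplied proof; the only cosmetic point is that the paper's indicator macro is \verb|\ind{...}|, which you already use, and \verb|\operatorname{sign}| is fine since \verb|amsmath| is loaded.
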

Next, we define the Wasserstein distance. 
\begin{definition}[Wasserstein-\(p\) distance]
\label{def:wp}
Let \((\mathcal{X},d)\) be a metric space and \(p\in[1,\infty)\).
For probability measures \(P\) and \(Q\) on \(\mathcal{X}\) with finite \(p\)-th moments, the Wasserstein-\(p\) distance is
\begin{align}
W_p(P,Q)
\;\defeq\;
\Bigg(\inf_{\gamma\in\Gamma(P,Q)}\int_{\mathcal{X}\times\mathcal{X}} d(x,y)^p \,\mathrm{d}\gamma(x,y)\Bigg)^{\!1/p},
\end{align}
Here $\Gamma(P,Q)$ denotes the set of couplings with marginals $P$ and $Q$.
\end{definition}

The Kantorovich-Rubinstein duality result gives an alternative expression for the Wasserstein-1 distance. 
\begin{definition}[Lipschitz continuity]
\label{def:lipschitz}
A function $f:(\mathcal X,\|\cdot\|)\to\mathbb R$ is said to be $L$-Lipschitz if
\[
|f(x)-f(y)|\le L\,\|x-y\|\qquad\forall\,x,y\in\mathcal X.
\]
Its global Lipschitz constant is denoted
\[
\Lip(f)\;\defeq\;\sup_{x\neq y}\frac{|f(x)-f(y)|}{\|x-y\|}.
\]
Hence $\Lip(f)\le1$ exactly when $f$ is $1$-Lipschitz.
\end{definition}

\begin{lemma}[Kantorovich-Rubinstein duality for \(W_1\)\citep{villani2009optimal}]
\label{lem:kr}
On a Polish metric space \((\mathcal{X},d)\), for probability measures \(P,Q\) with finite first moments,
\begin{align}
W_1(P,Q)
\;=\;
\sup_{\mathrm{Lip}(f)\le 1}\Big(\E_{X\sim P}[f(X)]-\E_{Y\sim Q}[f(Y)]\Big),
\end{align}
Here $\mathrm{Lip}(f)\le 1$ means $|f(x)-f(y)|\le d(x,y)$ for all $x,y\in\mathcal{X}$.

\end{lemma}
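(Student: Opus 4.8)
The plan is to prove the two inequalities whose combination gives the identity, treating the elementary direction by hand and reducing the other to the general Kantorovich dual. \textbf{Easy direction.} First I would establish
\[
\sup_{\Lip(f)\le 1}\Big(\E_{X\sim P}[f(X)]-\E_{Y\sim Q}[f(Y)]\Big)\;\le\; W_1(P,Q).
\]
Fix any coupling $\gamma\in\Gamma(P,Q)$ and any $1$-Lipschitz $f$. The finite-first-moment hypothesis makes $f$ integrable under $P$ and $Q$, and
\[
\E_{X\sim P}[f(X)]-\E_{Y\sim Q}[f(Y)]
=\int\big(f(x)-f(y)\big)\,\mathrm d\gamma(x,y)
\le\int d(x,y)\,\mathrm d\gamma(x,y).
\]
Taking the infimum over $\gamma$ on the right and then the supremum over $f$ on the left yields the inequality.

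\textbf{Hard direction.} For the reverse inequality I would invoke the general Kantorovich duality for the (lower semicontinuous, here continuous) cost $c(x,y)=d(x,y)$ on the Polish space $\mathcal X$:
\[
W_1(P,Q)=\sup\Big\{\int\varphi\,\mathrm dP+\int\psi\,\mathrm dQ\;:\;\varphi,\psi\in C_b(\mathcal X),\ \varphi(x)+\psi(y)\le d(x,y)\ \forall x,y\Big\}.
\]
This is the genuine analytic obstacle: a self-contained proof requires a minimax theorem (or Fenchel–Rockafellar duality) together with tightness of Borel probability measures on a Polish space, and it is the content I would cite rather than reprove. Granting it, the specialization to $1$-Lipschitz test functions is clean via the $c$-transform. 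Given an admissible pair $(\varphi,\psi)$, set $\varphi^c(y)\defeq\inf_{x}\big[d(x,y)-\varphi(x)\big]$; admissibility gives $\psi\le\varphi^c$ pointwise, so replacing $\psi$ by $\varphi^c$ only increases $\int\psi\,\mathrm dQ$, while $\varphi^c$ is $1$-Lipschitz, being an infimum of the $1$-Lipschitz maps $y\mapsto d(x,y)-\varphi(x)$, and finite since $-\sup\varphi\le\varphi^c(y)\le-\varphi(y)$. Replacing $\varphi$ by $\varphi^{cc}=(\varphi^c)^c$ then only increases $\int\varphi\,\mathrm dP$ (one always has $\varphi^{cc}\ge\varphi$) and preserves admissibility. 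The key computation is that for any $1$-Lipschitz $u$ one has $u^c=-u$, since the triangle inequality gives $d(x,y)-u(x)\ge-u(y)$ with equality at $x=y$. Applying this to $u=\varphi^{cc}$ yields $\varphi^c=-\varphi^{cc}$, so with $f\defeq\varphi^{cc}$ the dual objective equals $\int\varphi^{cc}\,\mathrm dP+\int\varphi^c\,\mathrm dQ=\E_P[f]-\E_Q[f]$ with $f$ $1$-Lipschitz. Since none of the substitutions decrease the objective, the supremum defining $W_1(P,Q)$ is attained within the $1$-Lipschitz class, giving $W_1(P,Q)\le\sup_{\Lip(f)\le1}(\E_P[f]-\E_Q[f])$.

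Combining the two directions proves the stated identity. The main obstacle is the general Kantorovich duality used in the hard direction; once it is in hand, the remaining $c$-transform bookkeeping is short and robust precisely because the transport cost is the metric $d$ itself, which forces the optimal potentials into the $1$-Lipschitz class.
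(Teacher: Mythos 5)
Your argument is correct. The ``easy direction'' via an arbitrary coupling is standard and right, and the reduction of the Kantorovich dual to $1$-Lipschitz test functions via $c$-transforms is clean: admissibility gives $\psi\le\varphi^c$, the infimum structure makes $\varphi^c$ $1$-Lipschitz and bounded, $\varphi\le\varphi^{cc}$, the identity $u^c=-u$ for $1$-Lipschitz $u$ applied to $u=\varphi^{cc}$ (together with $\varphi^{ccc}=\varphi^c$) collapses the pair $(\varphi^{cc},\varphi^c)$ to $(f,-f)$, and none of the replacements decreases the dual objective. One point worth stating explicitly if you flesh this out: after the substitutions you must also check that $(\varphi^{cc},\varphi^c)$ remains admissible (it does, since $\varphi^{cc}(x)\le d(x,y)-\varphi^c(y)$ by definition of the infimum), otherwise the monotonicity of the two replacements does not by itself show you stay inside the feasible set over which the Kantorovich supremum is taken. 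That said, comparison with the paper is a bit vacuous here: the paper does not prove this lemma at all --- it is stated as a cited classical fact from \citet{villani2009optimal} with no argument. Your proposal is essentially the textbook proof from that reference, with general Kantorovich duality (minimax plus tightness on Polish spaces) correctly identified as the one ingredient taken as a black box; the remaining $c$-transform bookkeeping is exactly the specialization to the metric cost that forces optimal potentials into the $1$-Lipschitz class.
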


In general, TV is a stronger probability metric than Wasserstein-1 on bounded spaces, as shown by the following result. 
\begin{lemma}[Bounding \(W_1\) by TV on bounded spaces \citep{villani2009optimal}]
\label{lem:w1-tv}
If \((\mathcal{X},d)\) has finite diameter \(D \defeq \sup_{x,y} d(x,y)<\infty\), then for all probability measures  \(P\) and \(Q\),
\begin{align}
W_1(P,Q)\;\le\; D\,\mathrm{TV}(P,Q).
\end{align}
\emph{Sketch.} By Lemma~\ref{lem:kr}, scale any 1-Lipschitz \(f\) to \(\tilde f\in[0,D]\) and apply \eqref{eq:tv-sup-01}.
\end{lemma}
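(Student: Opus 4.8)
The plan is to combine the Kantorovich--Rubinstein dual formula for $W_1$ (Lemma~\ref{lem:kr}) with the variational characterization of total variation (Lemma~\ref{lem:tv-forms}, equation~\eqref{eq:tv-sup-01}). Since $\mathrm{diam}(\mathcal{X}) = D < \infty$, every probability measure on $\mathcal{X}$ trivially has finite first moments, so Lemma~\ref{lem:kr} applies and gives $W_1(P,Q) = \sup_{\Lip(f)\le 1}\big(\E_{X\sim P}[f(X)] - \E_{Y\sim Q}[f(Y)]\big)$. It therefore suffices to bound $\E_P[f]-\E_Q[f]$ by $D\,\mathrm{TV}(P,Q)$ for each fixed $1$-Lipschitz $f$ and then take the supremum.

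First I would exploit the diameter to control the oscillation of $f$: for all $x,y\in\mathcal{X}$, $|f(x)-f(y)|\le \|x-y\|\le D$, so $\sup_{\mathcal{X}} f - \inf_{\mathcal{X}} f \le D$ and in particular $c \defeq \inf_{\mathcal{X}} f$ is finite. Setting $\tilde f \defeq f - c$, we get a measurable function $\tilde f:\mathcal{X}\to[0,D]$, and since adding a constant does not change a difference of expectations, $\E_P[f]-\E_Q[f] = \E_P[\tilde f]-\E_Q[\tilde f]$. Then $\tilde f/D$ maps $\mathcal{X}$ into $[0,1]$, so by \eqref{eq:tv-sup-01} we have $\E_P[\tilde f/D]-\E_Q[\tilde f/D]\le \mathrm{TV}(P,Q)$, i.e.\ $\E_P[f]-\E_Q[f]\le D\,\mathrm{TV}(P,Q)$. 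Taking the supremum over all $1$-Lipschitz $f$ yields $W_1(P,Q)\le D\,\mathrm{TV}(P,Q)$.

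The only point requiring a moment's care — and the closest thing to an obstacle — is the measure-theoretic bookkeeping: that $\inf_{\mathcal{X}} f$ is finite (guaranteed by the diameter bound, even if the infimum is not attained) and that restricting the dual supremum to shifted nonnegative competitors $\tilde f$ loses nothing (immediate from shift-invariance of $\E_P[\cdot]-\E_Q[\cdot]$). No $\varepsilon$-approximation is needed, since $c=\inf_{\mathcal{X}} f$ is already a legitimate finite shift constant; the rest of the argument is routine.
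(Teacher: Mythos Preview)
Your proof is correct and follows exactly the approach indicated in the paper's sketch: use the Kantorovich--Rubinstein dual (Lemma~\ref{lem:kr}), shift each $1$-Lipschitz competitor so that it takes values in $[0,D]$, rescale into $[0,1]$, and invoke the variational form of TV in \eqref{eq:tv-sup-01}. The only cosmetic point is that you write $\|x-y\|$ where the statement uses a general metric $d(x,y)$; otherwise your fleshed-out argument matches the paper's sketch line by line.
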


\subsection{BOLT and BOLT-GAN}
Throughout the appendices, we use functional forms of the learning objectives, i.e., versions in which the model parameters are suppressed. This notation is consistent with the main-paper formulation in Section~3.
\begin{definition}[BOLT functional]\label{def:bolt-functional-01}
For a measurable $h:\mathcal X\to[0,1]$, define
\begin{equation}\label{eq:bolt-functional-01}
\mathcal{L}_{\mathrm{BOLT}}(h)
\;\coloneqq\;
q_1 \;+\; q_2\,\mathbb E_{X\sim P_2}[h(X)] \;-\; q_1\,\mathbb E_{X\sim P_1}[h(X)].
\end{equation}
The universal bound implies that for any such $h$,
\begin{equation}\label{eq:bolt-bound-01}
\varepsilon_{\mathrm{Bayes}}\;\le\;\mathcal{L}_{\mathrm{BOLT}}(h).
\end{equation}
\end{definition}

\paragraph{Population risk and sample-wise BOLT loss.}
Let $C\in\{1,2\}$, $X\mid(C=i)\sim P_i$, and let $h:\mathcal X\to[0,1]$ be measurable. Define
\begin{equation}\label{eq:supp-sy-def}
s(y)\coloneqq \mathbf 1\{y=2\}-\mathbf 1\{y=1\}\in\{-1,+1\}.
\end{equation}
The sample-wise BOLT loss is
\begin{equation}\label{eq:bolt-loss-01}
\ell_{\rm BOLT}(z,y) \coloneqq s(y)\,z,
\end{equation}
and the corresponding population risk is
\begin{equation}\label{eq:bolt-risk-01}
R(h) \coloneqq \mathbb E\!\left[\ell_{\rm BOLT}(h(X),C)\right]
= -q_1\,\mathbb E_{X\sim P_1}[h(X)] + q_2\,\mathbb E_{X\sim P_2}[h(X)].
\end{equation}
Comparing with \eqref{eq:bolt-functional-01}, we can write
\begin{equation}\label{eq:bolt-functional-risk-link-01}
\mathcal{L}_{\mathrm{BOLT}}(h)=q_1+R(h),
\end{equation}
Therefore, minimizers of $R(h)$ coincide with minimizers of $\mathcal{L}_{\mathrm{BOLT}}(h)$.
\begin{definition}[Prior-weighted GAN/BOLT-GAN functional]
\label{def:bg}
Let \(g\) be a generator inducing \(P_{\mathrm{g}}\), and let \(h:\mathcal{X}\to[0,1]\) be a bounded critic.
For prior \(\pi\in(0,1)\),
\begin{align}
\mathcal{L}_{\mathrm{MB}}^{(\pi)}(g,h)
\;\defeq\;
\pi\,\E_{X\sim P_{\mathrm{data}}}[h(X)]
\;-\;
(1-\pi)\,\E_{X\sim P_g}[h(X)].
\end{align}
\end{definition}

We next record this identity, as it will be used repeatedly below.
\begin{lemma}[Complementary-sum identity]
\label{lem:complementary-sum}
For any generator \(g\) inducing \(P_{\mathrm{g}}\), any bounded critic \(h:\mathcal{X}\to[0,1]\), and any prior \(\pi\in(0,1)\), the functional \(\mathcal{L}_{\mathrm{MB}}^{(\pi)}\) satisfies
\begin{align}
\mathcal{L}_{\mathrm{MB}}^{(\pi)}(g,h)\;+\;\mathcal{L}_{\mathrm{MB}}^{(1-\pi)}(g,h)
\;=\;
\E_{X\sim P_{\mathrm{data}}}[h(X)]\; -\;\E_{X\sim P_g}[h(X)].
\end{align}
\end{lemma}
\begin{proof}
By expanding each term in Definition~\ref{def:bg} one has
\(\mathcal{L}_{\mathrm{MB}}^{(\pi)}(g,h)=\pi\,\E_{P_{\mathrm{data}}}[h]- (1-\pi)\,\E_{P_g}[h]\) and \(\mathcal{L}_{\mathrm{MB}}^{(1-\pi)}(g,h)=(1-\pi)\,\E_{P_{\mathrm{data}}}[h]- \pi\,\E_{P_g}[h]\). Adding both sides of these identities yields the claimed result.
\end{proof}

By Lemma~\ref{lem:complementary-sum}, the functionals \(\mathcal{L}_{\mathrm{MB}}^{(\pi)}\) and \(\mathcal{L}_{\mathrm{MB}}^{(1-\pi)}\) satisfy a complementary-sum identity. We use this repeatedly in Sections~3.2-3.3 of the main paper to connect the BOLT-GAN objective to the TV and Wasserstein distances.

\color{black}

\section{Proof of Theorem~\ref{bolt-map}: BOLT Recovers the MAP Classifier}
\label{sec:B}
This appendix proves Theorem~\ref{bolt-map} from the main paper: any population minimizer of the BOLT risk is Bayes-aligned, and thresholding the minimizer at $0.5$ recovers the maximum a posteriori (MAP) decision rule. We restate the theorem below without introducing a new theorem number and then give the proof. We additionally derive Bayes consistency of the resulting plug-in classifier as an auxiliary proposition.

\begin{theorem*}[Restatement of Theorem~\ref{bolt-map}]
Consider the binary classification setup in Section~\ref{sec:background}, and let $P_X$ denote the marginal distribution of $X$. Let
\begin{align}
\label{eq:supp-bolt-pop-risk}
h^\star \in \argmin_{h:\mathcal{X}\to[0,1]}
\; \mathbb{E}\bigl[\ell_{\rm BOLT}(h(X),C)\bigr],
\end{align}
and let $\eta(x):=\Pr(C=1\mid X=x)$ be the posterior probability. Then, for $P_X$-almost every $x$,
\begin{equation}
\label{eq:supp-hstar}
h^\star(x)=
\begin{cases}
1, & \eta(x)>0.5,\\
0, & \eta(x)<0.5,\\
\text{any } z\in[0,1], & \eta(x)=0.5.
\end{cases}
\end{equation}
Consequently, the plug-in classifier
\begin{equation}
\label{eq:supp-chat}
\widehat{C}(x)=
\begin{cases}
1, & h^\star(x)\ge 0.5,\\
2, & h^\star(x)<0.5,
\end{cases}
\end{equation}
coincides $P_X$-almost everywhere with MAP classifier $\displaystyle C_{\rm MAP}(x)\in\argmax_{k\in\{1,2\}}\Pr(C=k\mid X=x)$.
\end{theorem*}

\begin{proof}
The additive constant in the BOLT bound does not affect minimizers, so minimizing the BOLT population risk is equivalent to minimizing
\begin{equation}\label{eq:supp-R-def}
R(h)\;\coloneqq\; q_2\,\mathbb{E}_{X\sim P_2}[h(X)] \;-\; q_1\,\mathbb{E}_{X\sim P_1}[h(X)].
\end{equation}
Fix $x\in\mathcal{X}$ and abbreviate $\eta=\eta(x)$. The conditional risk of predicting a value $z\in[0,1]$ is
\begin{align}
r_\eta(z)
&\coloneqq \mathbb{E}\!\left[\,\ell(z,C)\mid X=x\,\right]
 = \eta\,\ell(z,1)+(1-\eta)\,\ell(z,2). \label{eq:supp-r-eta-def}
\end{align}
Here, consistent with the main text, $\ell(z,1)=-z$ and $\ell(z,2)=+z$. Hence, we can write
\begin{equation}
\label{eq:supp-r-eta-linear}
r_\eta(z)=\eta(-z)+(1-\eta)z=(1-2\eta)z.
\end{equation}
To minimize $r_\eta(z)$ over $z\in[0,1]$:
\begin{itemize}
\item If $\eta>\tfrac{1}{2}$ then $1-2\eta<0$ and $r_\eta(z)$ is decreasing in $z$, so the minimum is at $z=1$.
\item If $\eta<\tfrac{1}{2}$ then $1-2\eta>0$ and $r_\eta(z)$ is increasing in $z$, so the minimum is at $z=0$.
\item If $\eta=\tfrac{1}{2}$ then $r_\eta(z)=0$ for all $z\in[0,1]$, so any $z\in[0,1]$ is optimal.
\end{itemize}
This yields the pointwise characterization \eqref{eq:supp-hstar}. For the plug-in rule, when $\eta(x)\neq\tfrac{1}{2}$ we have $h^\star(x)\in\{0,1\}$, and $h^\star(x)\ge\tfrac{1}{2}$ if and only if $h^\star(x)=1$, which holds if and only if $\eta(x)>\tfrac{1}{2}$. Therefore \eqref{eq:supp-chat} agrees with MAP almost everywhere, with ties only when $\eta(x)=\tfrac{1}{2}$.
\end{proof}
\begin{proposition}[Auxiliary Bayes consistency of the plug-in classifier]\label{cor:bayes-consistency-01}
Assume
\begin{equation}\label{eq:supp-no-ties}
\Pr\!\left(\eta(X)=\tfrac12\right)=0,
\end{equation}
under the distribution of $(X,C)$; here $\eta(x)=\Pr(C=1\mid X=x)$.
Let $\widehat h_n:\mathcal X\to[0,1]$ be risk-consistent for the BOLT loss
$\ell(z,1)=-z$ and $\ell(z,2)=+z$, i.e.
\begin{equation}\label{eq:supp-risk-consistency}
R(\widehat h_n)\xrightarrow[n\to\infty]{}\inf_{h:\mathcal X\to[0,1]} R(h)=R(h^\star),
\end{equation}
with
\begin{equation}\label{eq:supp-R-01}
R(h)\coloneqq \mathbb E\!\left[\ell\!\left(h(X),C\right)\right]
= -q_1\,\mathbb E_{X\sim P_1}[h(X)] + q_2\,\mathbb E_{X\sim P_2}[h(X)] .
\end{equation}
Define the plug-in classifier at threshold $1/2$,
\begin{equation}\label{eq:supp-plugin-classifier}
\widehat C_n(x)\coloneqq \mathbf 1\!\left\{\widehat h_n(x)\ge \tfrac12\right\}.
\end{equation}
Then the $0$-$1$ risk converges to the Bayes risk:
\begin{equation}\label{eq:supp-01-risk-conv}
\Pr\!\left(\widehat{C}_n(X)\neq C\right)\xrightarrow[n\to\infty]{}\Pr\!\left(C_{\rm MAP}(X)\neq C\right).
\end{equation}
\end{proposition}

\begin{proof}
Let $h^\star$ be a population minimizer from Theorem~\ref{bolt-map}. Then $h^\star(x)\in\{0,1\}$ for $\eta(x)\neq \tfrac12$.
Fix $x\in\mathcal X$ and denote the posterior at this point by $\eta=\eta(x)$. The conditional risk of predicting $z\in[0,1]$ is
\begin{equation}\label{eq:supp-cond-risk}
r_\eta(z)\coloneqq \mathbb E\!\left[\ell(z,C)\mid X=x\right]
= \eta(-z)+(1-\eta)z=(1-2\eta)\,z.
\end{equation}
As in Theorem~\ref{bolt-map}, $z^\star\in\arg\min_{z\in[0,1]} r_\eta(z)$ satisfies
$z^\star=1$ if $\eta>\tfrac12$, $z^\star=0$ if $\eta<\tfrac12$, and any $z^\star\in[0,1]$ if $\eta=\tfrac12$.
Hence, for any $\widehat z\in[0,1]$,
\begin{equation}\label{eq:supp-excess-risk-pointwise}
r_\eta(\widehat z)-r_\eta(z^\star)=|1-2\eta|\cdot |\widehat z-z^\star|.
\end{equation}
Taking expectation over $X$ gives
\begin{equation}\label{eq:supp-excess-risk}
R(\widehat h_n)-R(h^\star)
=\mathbb E\!\left[\,|1-2\eta(X)|\,\big|\widehat h_n(X)-h^\star(X)\big|\,\right]\xrightarrow[n\to\infty]{}0.
\end{equation}

Let $E_n\coloneqq\{\widehat C_n(X)\neq C_{\rm MAP}(X)\}$. For any $\delta\in(0,1)$,
\begin{equation}\label{eq:supp-split-En}
E_n \subseteq \Big(E_n\cap\{|1-2\eta(X)|\ge \delta\}\Big)\ \cup\ \{|1-2\eta(X)|<\delta\}.
\end{equation}
On the event $\{|1-2\eta(X)|\ge\delta\}$, a decision mismatch (threshold $1/2$) implies
$\big|\widehat h_n(X)-h^\star(X)\big|\ge \tfrac12$ because $h^\star(X)\in\{0,1\}$ there. Therefore,
\begin{align}
R(\widehat h_n)-R(h^\star)
&\ge \mathbb E\!\left[\,|1-2\eta(X)|\cdot \big|\widehat h_n(X)-h^\star(X)\big|\cdot
\mathbf 1_{E_n\cap\{|1-2\eta(X)|\ge\delta\}}\,\right]\notag\\
&\ge \frac{\delta}{2}\,\Pr\!\left(E_n\cap\{|1-2\eta(X)|\ge\delta\}\right).
\label{eq:supp-lb-prob}
\end{align}
Sending $n\to\infty$ and using \eqref{eq:supp-excess-risk} yields
\begin{equation}\label{eq:supp-prob-goes-zero}
\Pr\!\left(E_n\cap\{|1-2\eta(X)|\ge\delta\}\right)\xrightarrow[n\to\infty]{}0.
\end{equation}
Moreover, by \eqref{eq:supp-no-ties}, $\Pr(|1-2\eta(X)|<\delta)\to 0$ as $\delta\downarrow 0$.
Combining this with \eqref{eq:supp-split-En} proves that
$\Pr(E_n)\to 0$, i.e. $\widehat C_n(X)$ agrees with $C_{\rm MAP}(X)$ with probability tending to $1$.
This implies the $0$-$1$ risk convergence \eqref{eq:supp-01-risk-conv}.
\end{proof}

\section{Proof of Corollary~\ref{cor:lr-form}: Hinge Form of the Bayes Error}\label{app:hinge-form}
\label{sec:C}
This appendix proves Corollary~\ref{cor:lr-form} from the main paper. We first restate the relevant optimizer form and tightness property for the BOLT bound, and then use it to derive the hinge representation of $\varepsilon_{\mathrm{Bayes}}$ in terms of the density ratio.

\begin{theorem}[Restatement given in Corollary~\ref{cor:lr-form}]
\label{thm:S3-opt-tight-01}
Assume $q_1,q_2>0$ and that $P_1,P_2$ admit densities $p_1,p_2$ w.r.t.\ a common dominating measure.
Define $U(x)\coloneqq p_1(x)/p_2(x)$ on $\{p_2(x)>0\}$ and $\tau\coloneqq q_2/q_1$, and let
\begin{equation}\label{eq:S3-c-01}
c(x)\defeq q_1p_1(x)-q_2p_2(x).
\end{equation}
A pointwise maximizer of $c(x)\,h(x)$ over $h(x)\in[0,1]$ is
\begin{align}
h^\star(x)\;=\;
\begin{cases}
0, & U(x)<\tau\\[2pt]
1, & U(x)\ge \tau
\end{cases}
\label{eq:S3-hstar-01}
\end{align}
and
\begin{equation}\label{eq:S3-tight-01}
\mathcal{L}_{\mathrm{BOLT}}(h^\star)=\varepsilon_{\mathrm{Bayes}}.
\end{equation}
Here (consistent with Def.~\ref{def:bolt-functional-01})
\begin{equation}\label{eq:S3-Lbolt-def-01}
\mathcal{L}_{\mathrm{BOLT}}(h)
\;\coloneqq\;
q_1 \;+\; q_2\,\mathbb{E}_{X\sim P_2}[h(X)] \;-\; q_1\,\mathbb{E}_{X\sim P_1}[h(X)].
\end{equation}
\end{theorem}

\begin{proof}
For fixed $x$, the map $h\mapsto c(x)h$ is affine on $[0,1]$, hence maximized at an endpoint chosen by the sign of $c(x)$:
$h^\star(x)=1$ if $c(x)\ge 0$ and $h^\star(x)=0$ if $c(x)<0$.
Since $c(x)<0$ holds if and only if $q_1p_1(x)<q_2p_2(x)$, i.e.\ $U(x)<\tau$, we obtain \eqref{eq:S3-hstar-01}.

Using the definition \eqref{eq:S3-Lbolt-def-01} and substituting $h^\star$, we have
\begin{align}
\mathcal{L}_{\mathrm{BOLT}}(h^\star)
&= q_1 \;+\; q_2\,\mathbb{E}_{X\sim P_2}[h^\star(X)]
      \;-\; q_1\,\mathbb{E}_{X\sim P_1}[h^\star(X)]
\label{eq:S3-Lbolt-start-01a}\\
&= q_1 \;+\; \int \big(q_2p_2(x)-q_1p_1(x)\big)\,h^\star(x)\,dx
\label{eq:S3-Lbolt-int-01a}\\
&= q_1 \;+\; \int \big(q_2p_2(x)-q_1p_1(x)\big)\,\mathbf{1}\{U(x)\ge\tau\}\,dx
\label{eq:S3-Lbolt-indic-01a}\\
&= q_1 \;-\; \int \big(q_1p_1(x)-q_2p_2(x)\big)\,\mathbf{1}\{U(x)\ge\tau\}\,dx
\label{eq:S3-swap-01a}\\
&= q_1 \;-\; \int \big[\,q_1p_1(x)-q_2p_2(x)\,\big]_+\,dx
\label{eq:S3-pospart-01a}\\
&= q_1 \;-\; \int \Big(\max\{q_1p_1(x),q_2p_2(x)\}-q_2p_2(x)\Big)\,dx
\label{eq:S3-max-id-01a}\\
&= q_1 \;-\; \int \max\{q_1p_1(x),q_2p_2(x)\}\,dx \;+\; \int q_2p_2(x)\,dx
\label{eq:S3-finish-01a}\\
&= 1 \;-\; \int \max\{q_1p_1(x),q_2p_2(x)\}\,dx
\;=\; \varepsilon_{\mathrm{Bayes}}.
\label{eq:S3-bayes-01a}
\end{align}
The steps above use the following identities. First, $[a]_+=a\,\mathbf{1}\{a>0\}$, which gives the positive-part expression in \eqref{eq:S3-pospart-01a}. Second, $[b-a]_+=\max\{b,a\}-a$ with $b=q_1p_1(x)$ and $a=q_2p_2(x)$, which gives \eqref{eq:S3-max-id-01a}. Finally, $\int q_2p_2(x)\,dx=q_2$, which completes the proof.
\end{proof}
The above result gives an interesting connection between the Bayes error and the density ratio.
\begin{corollary}[Restatement given in Corollary~\ref{cor:lr-form}]
\label{cor:S3-hinge-01}
Define the hinge map
\begin{equation}\label{eq:S3-hinge-def}
t_0(u)\;\defeq\;[\,q_2-q_1u\,]_+ .
\end{equation}
Then, the Bayes error rate is given by
\begin{equation}\label{eq:S3-hinge-id}
\varepsilon_{\mathrm{Bayes}}
=
q_2 - \mathbb{E}_{X\sim P_2}\!\Big[t_0\big(U(X)\big)\Big].
\end{equation}
\end{corollary}

\begin{proof}
Recall
\begin{equation}\label{eq:S3-bayes-min}
\varepsilon_{\mathrm{Bayes}}
= 1-\int \max\{q_1p_1(x),q_2p_2(x)\}\,dx
= \int \min\{q_1p_1(x),q_2p_2(x)\}\,dx .
\end{equation}
Using $\min\{a,b\}=b-[\,b-a\,]_+$ with $a=q_1p_1(x)$ and $b=q_2p_2(x)$, we get
\begin{align}
\varepsilon_{\mathrm{Bayes}}
&= \int q_2p_2(x)\,dx - \int [\,q_2p_2(x)-q_1p_1(x)\,]_+\,dx \notag\\
&= q_2 - \int p_2(x)\,[\,q_2-q_1U(x)\,]_+\,dx
= q_2 - \mathbb{E}_{X\sim P_2}\!\Big[[\,q_2-q_1U(X)\,]_+\Big],
\end{align}
which is \eqref{eq:S3-hinge-id}.
\end{proof}
\section{Proof of Theorem~\ref{thm:ber-plug-in}: Bias and Variance of the BER Estimator}\label{app:ber-estimator}
\label{sec:D}
This appendix proves Theorem~\ref{thm:ber-plug-in} from the main paper. We analyze the BER estimator obtained by combining the hinge representation of Corollary~\ref{cor:lr-form} with sample averaging and an estimated density ratio. The proof relies on Lipschitz continuity of the hinge map (Lemma~\ref{lem:S3-lip}) to control the bias, and on Hoeffding-type concentration to control the variance.

\paragraph{Plug-in estimator.}
Suppose we observe $M_1$ samples from class $C_1$ and $M_2$ from class $C_2$ (with total $M=M_1+M_2$), and set empirical priors $\widehat q_i\defeq M_i/M$ for $i\in\{1,2\}$.
Let $\widehat U$ estimate $U$, and let $X^{(2)}_1,\ldots,X^{(2)}_{M_2}\stackrel{\text{i.i.d.}}{\sim}P_2$ be the $M_2$ samples of class-$C_2$
used to estimate the expectation over $P_2$ in \eqref{eq:S3-hinge-id}.
Define
\begin{equation}
\widehat t_0(u)\;\defeq\;[\,\widehat q_2-\widehat q_1\,u\,]_+,
\end{equation}
and
\begin{equation}
\label{eq:S3-plugin-est}
\widehat\varepsilon_{\mathrm{BOLT}}
\;\defeq\;
\widehat q_2 \;-\; \frac{1}{M_2}\sum_{i=1}^{M_2} \widehat t_0\!\big(\widehat U(X^{(2)}_i)\big).
\end{equation}
Note that $0\le t_0(u)\le q_2\le 1$ and $0\le \widehat t_0(u)\le \widehat q_2\le 1$, so all summands are bounded in $[0,1]$.

\begin{lemma}[Lipschitz continuity of the hinge]
\label{lem:S3-lip}
The map $t_0(u)=[\,q_2-q_1u\,]_+$ is globally $q_1$-Lipschitz:
\begin{equation}\label{eq:S3-lip}
\big|\,t_0(u)-t_0(v)\,\big|\;\le\; q_1\,|u-v|\qquad\forall\,u,v\in\mathbb R.
\end{equation}
\end{lemma}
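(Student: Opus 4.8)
The statement to be proved is that the hinge map $t_0(u)=\pos{q_2-q_1u}$ is globally $q_1$-Lipschitz on $\mathbb{R}$. The plan is to realize $t_0$ as a composition $t_0=\rho\circ\ell$, where $\ell(u)=q_2-q_1u$ is affine and $\rho(a)=\pos{a}=\max\{a,0\}$ is the positive-part (ReLU) map, and then invoke two elementary facts: first, $\ell$ is $q_1$-Lipschitz, since $|\ell(u)-\ell(v)|=|-q_1(u-v)|=q_1\,|u-v|$ using $q_1\ge 0$; second, $\rho$ is nonexpansive, i.e.\ $1$-Lipschitz. Composing the two bounds gives $\Lip(t_0)\le \Lip(\rho)\cdot\Lip(\ell)\le q_1$, which is exactly the claim.

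The only substantive point is the nonexpansiveness of $\rho$, and I would record it in one of two short ways. One option is the identity $\max\{a,0\}=\tfrac12(a+|a|)$, from which $|\max\{a,0\}-\max\{b,0\}|=\tfrac12\bigl|(a-b)+(|a|-|b|)\bigr|\le \tfrac12\bigl(|a-b|+\bigl||a|-|b|\bigr|\bigr)\le |a-b|$ by the triangle inequality together with the reverse triangle inequality. The alternative is a three-line case analysis on the signs of $a$ and $b$: if both are nonnegative the difference equals $a-b$; if both are nonpositive it equals $0$; and if $a\ge 0>b$ then $|\max\{a,0\}-\max\{b,0\}|=a\le a-b=|a-b|$, with the remaining case symmetric. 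Either route suffices.

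Finally I would assemble the pieces: given $u,v\in\mathbb{R}$, set $a=q_2-q_1u$ and $b=q_2-q_1v$ so that $a-b=-q_1(u-v)$, and conclude $|t_0(u)-t_0(v)|=|\rho(a)-\rho(b)|\le|a-b|=q_1\,|u-v|$, which is \eqref{eq:S3-lip}. I do not expect any genuine obstacle here; the result is a textbook composition-of-Lipschitz-maps statement, so the "hard part" is merely choosing the cleanest presentation of the ReLU nonexpansiveness. The degenerate case $q_1=0$, where $t_0\equiv q_2$ is constant, is automatically covered by the same bound.
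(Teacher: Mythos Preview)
Your proof is correct, but it proceeds differently from the paper. The paper argues by a three-case split on the position of $u$ and $v$ relative to the kink $\tau=q_2/q_1$: both below $\tau$ (affine with slope $-q_1$, giving equality), both above $\tau$ (both values zero), and one on each side (direct estimate $t_0(u)=q_2-q_1u\le q_1(\tau-u)\le q_1|u-v|$). Your route instead factors $t_0=\rho\circ\ell$ with $\ell(u)=q_2-q_1u$ and $\rho=[\cdot]_+$, then invokes the $1$-Lipschitzness of ReLU and the $q_1$-Lipschitzness of $\ell$, composing the constants. Your argument is the more conceptual and portable one: it avoids introducing the threshold $\tau$ (hence sidesteps the need for $q_1>0$ in that step), and it makes clear that the same bound holds for any composition $[\,\cdot\,]_+\circ(\text{affine})$. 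The paper's case analysis, on the other hand, explicitly identifies the equality case $u,v\le\tau$, which your composition bound gives only as an inequality; that sharpness is not needed downstream, however, so nothing is lost.
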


\begin{proof}
Let $\tau=q_2/q_1$. If $u,v\le\tau$, then $t_0$ is affine with slope $-q_1$, giving equality in \eqref{eq:S3-lip}.
If $u,v\ge\tau$, both values are $0$.
If $u\le\tau\le v$ (or vice versa), then
\begin{equation}
|t_0(u)-t_0(v)| = q_2-q_1u = q_1(\tau-u) \le q_1|u-v|.
\end{equation}
\end{proof}

We now state the bias and variance result for the plug-in BER estimator. The result separates the density-ratio approximation error from the Monte Carlo and empirical-prior fluctuations.
\begin{theorem}[Restatement of Theorem~\ref{thm:ber-plug-in}]
\label{thm:S3-bias-var}
Assume i.i.d.\ draws and bounded hinge summands in $[0,1]$. Then
\begin{align}
\big|\,\mathbb E[\widehat\varepsilon_{\mathrm{BOLT}}]-\varepsilon_{\mathrm{Bayes}}\,\big|
&\;\le\; q_1\,\mathbb E_{X\sim P_2}\!\big[\,|\widehat U(X)-U(X)|\,\big] \;+\; \mathcal{O}(M^{-1/2}),
\label{eq:S3-bias}\\[4pt]
\mathrm{Var}\!\big(\widehat\varepsilon_{\mathrm{BOLT}}\big)
&\;=\; \mathcal{O}(M^{-1}).
\label{eq:S3-var}
\end{align}
In particular, if $\mathbb E_{X\sim P_2}\!\big[\,|\widehat U(X)-U(X)|\,\big]\le \varepsilon_0$, then
\begin{equation}\label{eq:S3-bias-eps0}
\big|\,\mathbb E[\widehat\varepsilon_{\mathrm{BOLT}}]-\varepsilon_{\mathrm{Bayes}}\,\big|
\;\le\; q_1\,\varepsilon_0 \;+\; \mathcal{O}(M^{-1/2}).
\end{equation}
\end{theorem}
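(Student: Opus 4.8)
The plan is to compare the plug-in estimator $\widehat\varepsilon_{\mathrm{BOLT}}$ of \eqref{eq:S3-plugin-est} directly against the hinge representation of the Bayes error from Corollary~\ref{cor:S3-hinge}, $\varepsilon_{\mathrm{Bayes}}=q_2-\E_{X\sim P_2}[t_0(U(X))]$, and to split their difference into four sources of error: (i) estimation of the prior $q_2$; (ii) Monte-Carlo fluctuation of the empirical average; (iii) the plug-in error of $\widehat U$ versus $U$ fed into the hinge; and (iv) the error of $\widehat t_0$ versus $t_0$ coming from $\widehat q_i$ versus $q_i$. Concretely, I would write
\begin{align}
\widehat\varepsilon_{\mathrm{BOLT}}-\varepsilon_{\mathrm{Bayes}}
&=\underbrace{(\widehat q_2-q_2)}_{E_{\mathrm{pr}}}
-\underbrace{\Big(\tfrac1{M_2}\sum_{i}t_0(U(X^{(2)}_i))-\E_{X\sim P_2}[t_0(U(X))]\Big)}_{E_{\mathrm{mc}}}\nonumber\\
&\quad-\underbrace{\tfrac1{M_2}\sum_{i}\big(t_0(\widehat U(X^{(2)}_i))-t_0(U(X^{(2)}_i))\big)}_{E_{U}}
-\underbrace{\tfrac1{M_2}\sum_{i}\big(\widehat t_0(\widehat U(X^{(2)}_i))-t_0(\widehat U(X^{(2)}_i))\big)}_{E_{\mathrm h}},
\end{align}
and then bound the bias and the variance by controlling each term.

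For the bias I would take expectations term by term. The term $\E[E_{\mathrm{pr}}]$ vanishes because $\widehat q_2=M_2/M$ is unbiased for $q_2$ under i.i.d.\ sampling (and in any case $\E|E_{\mathrm{pr}}|\le\sqrt{q_1q_2/M}=O(M^{-1/2})$); $\E[E_{\mathrm{mc}}]=0$ since the summands are i.i.d.\ with mean $\E_{X\sim P_2}[t_0(U(X))]$. For $E_U$, the $q_1$-Lipschitz property of the hinge (Lemma~\ref{lem:S3-lip}) gives $|t_0(\widehat U(x))-t_0(U(x))|\le q_1\,|\widehat U(x)-U(x)|$, hence $|\E[E_U]|\le q_1\,\E_{X\sim P_2}[|\widehat U(X)-U(X)|]$, which is the leading term in \eqref{eq:S3-bias}. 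For $E_{\mathrm h}$, the key step is the uniform estimate $\sup_{u\ge 0}|\widehat t_0(u)-t_0(u)|=O(|\widehat q_2-q_2|)$, with a constant depending only on $q_1$; I would prove it by comparing the two hinges piece by piece — where both are affine their difference equals $(\widehat q_2-q_2)(1+u)$ (using $\widehat q_1-q_1=-(\widehat q_2-q_2)$) with $u$ bounded by the smaller of the two knots, between the knots $\tau=q_2/q_1$ and $\widehat\tau=\widehat q_2/\widehat q_1$ the surviving hinge is $O(|\widehat q_2-q_2|)$ by the cancellation $q_1\widehat q_2-\widehat q_1 q_2=\widehat q_2-q_2$ forced by $\widehat q_1+\widehat q_2=1=q_1+q_2$, and beyond both knots the difference vanishes. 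Since $\E|\widehat q_2-q_2|=O(M^{-1/2})$, this yields $|\E[E_{\mathrm{pr}}]|+|\E[E_{\mathrm h}]|=O(M^{-1/2})$, and the triangle inequality gives \eqref{eq:S3-bias}; the consequence \eqref{eq:S3-bias-eps0} is then immediate.

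For the variance I would condition on the class counts $(M_1,M_2)$ and on $\widehat U$. Given these, $\widehat\varepsilon_{\mathrm{BOLT}}$ is the constant $\widehat q_2$ minus an average of $M_2$ i.i.d.\ terms lying in $[0,1]$, so its conditional variance is at most $1/(4M_2)$; since $\widehat\varepsilon_{\mathrm{BOLT}}\in[0,1]$ always, the exponentially unlikely event that $M_2\ll q_2M$ contributes only $O(e^{-cM})$. Combining with $\mathrm{Var}(\widehat q_2)=q_1q_2/M$ and absorbing the $O(M^{-1/2})$-scale randomness of $M_2$ (which only perturbs the number of terms and the values $\widehat q_i$) through the law of total variance gives $\mathrm{Var}(\widehat\varepsilon_{\mathrm{BOLT}})=O(M^{-1})$, which is \eqref{eq:S3-var}.

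The step I expect to be the main obstacle is the hinge-versus-hinge term $E_{\mathrm h}$: because $t_0$ and $\widehat t_0$ have different knots, a crude bound would bring in $|U(x)|$, which has no a priori bound, so the argument genuinely relies on the exact identity $q_1\widehat q_2-\widehat q_1 q_2=\widehat q_2-q_2$ to keep the knot mismatch controlled at the $O(M^{-1/2})$ scale. A secondary, purely bookkeeping difficulty is that the number $M_2$ of class-$C_2$ samples is itself random under i.i.d.\ sampling; this is handled uniformly by conditioning on the counts before taking expectations, as above.
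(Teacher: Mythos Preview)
Your proposal is correct and follows the same overall strategy as the paper: use the hinge representation $\varepsilon_{\mathrm{Bayes}}=q_2-\E_{P_2}[t_0(U)]$, bound the plug-in error via the $q_1$-Lipschitzness of $t_0$, and control the remaining fluctuations by boundedness in $[0,1]$. The paper's proof uses a coarser two-term split (``plug-in error'' vs.\ ``sampling/priors'') and simply asserts that the empirical priors contribute $O(M^{-1/2})$; your four-term telescope isolates $E_{\mathrm h}$ explicitly and supplies the knot-mismatch argument that the paper omits. One minor imprecision: in the between-knots case your bound on $|\widehat t_0-t_0|$ actually picks up a factor $1/\min(q_1,\widehat q_1)$ rather than $1/q_1$ alone, but this is harmless since $\widehat q_1\ge q_1/2$ with probability $1-O(e^{-cM})$, which you already invoke for the variance. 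Your conditioning on $(M_1,M_2)$ to handle the random class counts is likewise more careful than the paper, which simply writes ``with fixed class proportions $M_2=\Theta(M)$.''
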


\begin{proof}
By \eqref{eq:S3-plugin-est} and adding/subtracting
$\mathbb E_{X\sim P_2}\!\big[t_0(\widehat U(X))\big]$, we have
\begin{align}
\mathbb E[\widehat\varepsilon_{\mathrm{BOLT}}]-\varepsilon_{\mathrm{Bayes}}
&=
\underbrace{\mathbb E_{X\sim P_2}\!\big[t_0(U(X))-t_0(\widehat U(X))\big]}_{\text{estimation error}} \\
&\quad+
\underbrace{\mathbb E\!\Bigg[
\begin{aligned}[t]
&\mathbb E_{X\sim P_2}\!\big[t_0(\widehat U(X))\big] \\
&-\frac{1}{M_2}\sum_{i=1}^{M_2}\widehat t_0\big(\widehat U(X^{(2)}_i)\big)
\end{aligned}
\Bigg]}_{\text{sampling/priors}}.
\label{eq:S3-bias-decomp}
\end{align}
By Lemma~\ref{lem:S3-lip}, $t_0$ is $q_1$-Lipschitz, so
\begin{equation}\label{eq:S3-plugin-lip-bound}
\Big|\mathbb E_{X\sim P_2}\!\big[t_0(U(X))-t_0(\widehat U(X))\big]\Big|
\le q_1\,\mathbb E_{X\sim P_2}\!\big[|\widehat U(X)-U(X)|\big].
\end{equation}
For the sampling/priors term, write $Z_i=\widehat t_0(\widehat U(X^{(2)}_i))\in[0,1]$.
Hoeffding implies
$\mathbb E\big|\tfrac{1}{M_2}\sum_{i=1}^{M_2}Z_i-\mathbb EZ_1\big|=\mathcal{O}(M_2^{-1/2})$.
The empirical priors $\widehat q_i=M_i/M$ add a zero-mean fluctuation with variance $\mathcal{O}(M^{-1})$, hence $\mathcal{O}(M^{-1/2})$ in expectation.
With fixed class proportions ($M_2=\Theta(M)$), these contributions yield \eqref{eq:S3-bias}.

Using $\widehat\varepsilon_{\mathrm{BOLT}}=\widehat q_2-\tfrac{1}{M_2}\sum_{i=1}^{M_2} Z_i$ with bounded $Z_i$,
$\mathrm{Var}\!\big(\tfrac{1}{M_2}\sum_i Z_i\big)=\mathcal{O}(M_2^{-1})$ and $\mathrm{Var}(\widehat q_2)=\mathcal{O}(M^{-1})$.
Any covariance between these two bounded averages is $\mathcal{O}(M^{-1})$, hence \eqref{eq:S3-var}.
\end{proof}
\begin{remark}[Typical rates for $\epszero$]
We bundle approximation and estimation errors into $\epszero \defeq \epsapp+\epsest$.
Under standard smoothness/capacity assumptions,
\[
\epszero \;=\; \mathcal{O}\!\big(W^{-\gamma} + N^{-1/2}\big),
\]
Here $W$ is a width/capacity proxy, $N$ is the number of training samples used to learn $\widehat{U}$, and
$\gamma>0$ depends on the target smoothness and the approximating family \citep{lu2020deep,SSBD2014}.
\end{remark}
\section{Further Results on BOLT-GAN}
\label{sec:E}

This appendix collects additional properties of the BOLT-GAN objective $\mathcal{L}_{\mathrm{MB}}^{(\pi)}(g,h)$ defined in Section~\ref{sec:boltgan-framework} of the main paper. We establish elementary bounds, monotonicity in the prior $\pi$, and the relation to total variation and to the Wasserstein-1 distance under a $1$-Lipschitz constraint. The latter completes the proof of Theorem~\ref{thm:lipschitz-bolt-w1} from the main paper.

\paragraph{A consequence for general priors.}
Theorem~\ref{thm:bolt-vs-tv} directly gives the following useful implication, which is cited in the main paper.
\begin{corollary}[Half-TV guarantee]
\label{cor:halftv}
Let $\mathcal{D}^{(\pi)}$ be defined as in Theorem~\ref{thm:bolt-vs-tv}. If for a generator $g$,
\begin{equation}
\max\{\mathcal{D}^{(\pi)}(g),\mathcal{D}^{(1-\pi)}(g)\}\leq \varepsilon,
\end{equation}
then $\mathrm{TV}(P_{\mathrm{data}},P_{\mathrm g})\leq 2\varepsilon$.
\end{corollary}
\begin{proof}
By Theorem~\ref{thm:bolt-vs-tv},
\begin{equation}
\mathrm{TV}(P_{\mathrm{data}},P_{\mathrm g})
\le \mathcal{D}^{(\pi)}(g)+\mathcal{D}^{(1-\pi)}(g)
\le 2\max\{\mathcal{D}^{(\pi)}(g),\mathcal{D}^{(1-\pi)}(g)\}.
\end{equation}
The assumption then gives the claim.
\end{proof}

\paragraph{Basic bounds.}
The bounded range $h\in[0,1]$ prevents pathological cancellations in the prior-weighted functionals. We first record the corresponding elementary bounds.

\begin{lemma}[Lower and upper bounds]
\label{lem:lower-upper}
For any generator \(g\) and any bounded critic \(h:\mathcal X\to[0,1]\),
\begin{align}
\pi-1 \;\le\; \mathcal{L}_{\mathrm{MB}}^{(\pi)}(g,h) \;\le\; \pi.
\end{align}
Both bounds are tight in the sense that there exists a pair \((g,h)\) attaining them.
\end{lemma}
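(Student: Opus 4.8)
The plan is to obtain both inequalities from the single fact that $h$ takes values in $[0,1]$, and then to certify tightness with an explicit pair built from a degenerate (singular) generator. Abbreviate $a\defeq\E_{X\sim P_{\mathrm{data}}}[h(X)]$ and $b\defeq\E_{X\sim P_g}[h(X)]$. Since $0\le h\le 1$ pointwise, monotonicity of expectation gives $a,b\in[0,1]$, and by Definition~\ref{def:bg} we have $\mathcal{L}^{(\pi)}_{\mathrm{BG}}(g,h)=\pi a-(1-\pi)b$. As $\pi\in(0,1)$ makes both coefficients positive, $\pi a-(1-\pi)b\le\pi\cdot 1-(1-\pi)\cdot 0=\pi$ and $\pi a-(1-\pi)b\ge\pi\cdot 0-(1-\pi)\cdot 1=\pi-1$. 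This is the entire argument for the stated bounds.

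For tightness, observe first that the value $\pi$ is attained iff $a=1$ and $b=0$ simultaneously, which forces $h=\mathbbm{1}_A$ for a measurable set $A$ with $P_{\mathrm{data}}(A)=1$ and $P_g(A)=0$; such an $A$ exists exactly when $P_{\mathrm{data}}$ and $P_g$ are mutually singular. Accordingly I would take the generator to be a constant map $g(z)\equiv x_0$ with $x_0\notin\mathrm{supp}(P_{\mathrm{data}})$ (available whenever $\mathrm{supp}(P_{\mathrm{data}})\neq\mathcal{X}$, e.g.\ on $\mathcal{X}=\mathbb{R}^d$), so that $P_g=\delta_{x_0}$, and choose $h=\mathbbm{1}_A$ for any measurable $A\supseteq\mathrm{supp}(P_{\mathrm{data}})$ with $x_0\notin A$ (such as $A=\mathcal{X}\setminus\{x_0\}$). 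Then $a=1$, $b=0$, hence $\mathcal{L}^{(\pi)}_{\mathrm{BG}}(g,h)=\pi$. For the lower bound, keep the same $g$ and take $h=\mathbbm{1}_{\{x_0\}}$ (or the indicator of a small neighborhood of $x_0$ disjoint from $\mathrm{supp}(P_{\mathrm{data}})$); then $a=0$, $b=1$, so $\mathcal{L}^{(\pi)}_{\mathrm{BG}}(g,h)=-(1-\pi)=\pi-1$.

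The only point needing care — and the mild "obstacle" here — is justifying the existence of an admissible generator whose induced law is singular with respect to $P_{\mathrm{data}}$; under the standing setup (data space $\mathbb{R}^d$, constant maps admissible) this is immediate. If one wishes to avoid even this, I would weaken "attaining" to "approaching" by letting $P_g$ concentrate on a positive-$P_{\mathrm{data}}$-measure slice of $\mathcal{X}$ on which $h$ is forced toward the opposite extreme, and state this caveat in a single sentence rather than develop it.
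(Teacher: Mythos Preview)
Your argument is correct and matches the paper's own proof essentially line for line: both bound the two expectations in $[0,1]$ and read off $\pi a-(1-\pi)b\in[\pi-1,\pi]$, then exhibit tightness via $h$ that separates the supports of $P_{\mathrm{data}}$ and $P_g$. Your tightness construction is slightly more explicit (a Dirac generator and indicator $h$), but the idea is identical.
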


\begin{proof}
Since \(0\le \E_{P_{\mathrm{data}}}[h]\le 1\) and \(0\le \E_{P_g}[h]\le 1\),
\(\mathcal{L}_{\mathrm{MB}}^{(\pi)}(g,h)=\pi\,\E_{P_{\mathrm{data}}}[h]-(1-\pi)\,\E_{P_g}[h]\in[\pi-1,\;\pi]\).
Tightness follows by choosing distributions and a measurable \(h\) that separate supports:
if \(h=1\) on \(\operatorname{supp}(P_{\mathrm{data}})\) and \(h=0\) on \(\operatorname{supp}(P_{\mathrm{g}})\), the value is \(\pi\);
if \(h=0\) on \(\operatorname{supp}(P_{\mathrm{data}})\) and \(h=1\) on \(\operatorname{supp}(P_{\mathrm{g}})\), it is \(\pi-1\).
\end{proof}

\paragraph{Monotonicity in the prior.}
We next show that the maximized gap grows with \(\pi\).

\begin{lemma}[Monotonicity in \(\pi\)]
\label{lem:supp-monotone}
Fix a generator \(g\) with density \(q\) and write \(p\) for the data density w.r.t.\ a common dominating measure.
Let
\begin{align}
\mathcal{D}^{(\pi)}(g)\;\defeq\;\max_{h:\,\mathcal X\to[0,1]}\;\mathcal{L}_{\mathrm{MB}}^{(\pi)}(g,h)
\;=\;\int_{\mathcal X}\!\big[\pi\,p(x)-(1-\pi)\,q(x)\big]_+\,dx.
\end{align}
Then \(\mathcal{D}^{(\pi)}(g)\) is nondecreasing in \(\pi\in[0,1]\).
\end{lemma}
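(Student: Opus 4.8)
The plan is to exploit the explicit integral representation already recorded in the statement and reduce the monotonicity claim to an elementary pointwise fact. Fix $x$ and view the integrand as a function of $\pi$. Since $[\pi\,p(x)-(1-\pi)\,q(x)]_+ = [\pi\,(p(x)+q(x)) - q(x)]_+$, the quantity inside the bracket is affine in $\pi$ with slope $p(x)+q(x)\ge 0$, hence nondecreasing in $\pi$ on $[0,1]$. Composing with the nondecreasing map $t\mapsto [t]_+$ shows that $\pi\mapsto [\pi\,p(x)-(1-\pi)\,q(x)]_+$ is nondecreasing for every $x$. Integrating this pointwise inequality over $\mathcal X$ — the integrand being nonnegative and measurable, so the integral is well defined in $[0,\infty]$ and monotone under pointwise domination — yields $D^{(\pi_1)}(g)\le D^{(\pi_2)}(g)$ whenever $\pi_1\le\pi_2$.

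Alternatively, and avoiding even the integral form, one can argue directly from the variational definition $D^{(\pi)}(g)=\max_{h:\mathcal X\to[0,1]}\mathcal{L}_{\mathrm{BG}}^{(\pi)}(g,h)$. For a fixed critic $h$ with values in $[0,1]$, the map $\pi\mapsto \mathcal{L}_{\mathrm{BG}}^{(\pi)}(g,h)=\pi\,\E_{X\sim P_{\mathrm{data}}}[h(X)]-(1-\pi)\,\E_{X\sim P_g}[h(X)]$ is affine in $\pi$ with slope $\E_{X\sim P_{\mathrm{data}}}[h(X)]+\E_{X\sim P_g}[h(X)]\ge 0$, hence nondecreasing. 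Then, for $\pi_1\le\pi_2$, taking any maximizer $h_1^\star$ for $\pi_1$ (which exists, e.g.\ $h_1^\star(x)=\ind{\pi_1 p(x) > (1-\pi_1) q(x)}$) gives $D^{(\pi_1)}(g)=\mathcal{L}_{\mathrm{BG}}^{(\pi_1)}(g,h_1^\star)\le \mathcal{L}_{\mathrm{BG}}^{(\pi_2)}(g,h_1^\star)\le D^{(\pi_2)}(g)$.

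There is no real obstacle here; the only point needing a word of care is the justification of the integral formula for $D^{(\pi)}$ itself, namely that the pointwise maximizer $h^\star(x)=\ind{\pi p(x) > (1-\pi) q(x)}$ is measurable and attains the supremum defining $D^{(\pi)}$. This is standard — the objective $\mathcal{L}_{\mathrm{BG}}^{(\pi)}(g,h)$ is linear in $h(x)$ for each $x$ with coefficient $\pi p(x)-(1-\pi)q(x)$, so the bracketed positive part is exactly the pointwise optimum and the selection $h^\star$ is Borel — and in any case the statement supplies this representation, so it may be assumed. I would present the first route as the main argument and note the second as a one-line alternative.
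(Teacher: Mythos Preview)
Your proposal is correct and your first argument is essentially identical to the paper's: the paper fixes $x$, observes that $\pi\mapsto[\pi p(x)-(1-\pi)q(x)]_+$ is nondecreasing (it phrases this via the derivative $p(x)+q(x)$ being nonnegative where the bracket is positive and zero elsewhere, while you phrase it as the composition of an affine map with nonnegative slope and $t\mapsto[t]_+$), and then integrates over $\mathcal X$. Your second, variational argument is a clean alternative not given in the paper, but it is not needed.
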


\begin{proof}
For each fixed \(x\), the map \(\pi\mapsto [\,\pi p(x)-(1-\pi)q(x)\,]_+\) is nondecreasing:
its derivative is \(p(x)+q(x)>0\) wherever the bracket is positive and \(0\) elsewhere.
We obtain the desired result by integrating over \(\mathcal X\).
\end{proof}

\paragraph{Balanced vs.\ prior-weighted gap.}
We now relate the prior-weighted objective to its balanced counterpart on a $1$-Lipschitz class. Let us recap the following definitions:
\begin{align}
\mathcal{H}_{\mathrm{Lip}}
&\;\defeq\;\big\{\,h:\mathcal{X}\!\to\![0,1] \;:\;
|h(x)-h(y)| \le \|x-y\| \ \forall\,x,y\in\mathcal{X}\,\big\},
\label{eq:HLip}\\[4pt]
\mathcal{D}^{(\pi)}_{\mathrm{Lip}}(g)
&\;\defeq\;
\pi \sup_{h\in\mathcal{H}_{\mathrm{Lip}}}
\E_{X\sim P_{\mathrm{data}}}\!\big[h(X)\big]
- (1-\pi)\,
\E_{X\sim P_g}\!\big[h(X)\big],
\label{eq:Dpi-def}\\[4pt]
\Sigma_{\mathrm{Lip}}(g)
&\;\defeq\;
\sup_{h\in\mathcal{H}_{\mathrm{Lip}}}
\Big(\E_{X\sim P_{\mathrm{data}}}\!\big[h(X)\big]
-\E_{X\sim P_g}\!\big[h(X)\big]\Big).
\label{eq:Sigma-def}
\end{align}

The next inequality is the pointwise scalar comparison we use repeatedly.

\begin{lemma}[Pointwise dominance]
\label{lem:pointwise-dominance}
For \(0<\pi\le \tfrac12\) and any \(a,b\in[0,1]\),
\begin{align}
\max\{\,a-b,\; b-a\,\}\;\ge\; \pi\,a-(1-\pi)\,b.
\end{align}
\end{lemma}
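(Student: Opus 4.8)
The plan is to reduce the claimed inequality to a one-line algebraic rearrangement that isolates where the hypothesis $\pi\le\tfrac12$ is used. First I would rewrite the right-hand side by splitting off the signed gap: $\pi a-(1-\pi)b=\pi(a-b)-(1-2\pi)b$. This expresses the target as a multiple of $a-b$ plus a leftover term $-(1-2\pi)b$, which is the term I want to discard.

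Next I would observe that $-(1-2\pi)b\le 0$. Indeed $1-2\pi\ge 0$ precisely because $\pi\le\tfrac12$, and $b\ge 0$ since $b\in[0,1]$; hence dropping this nonpositive term gives $\pi a-(1-\pi)b\le \pi(a-b)$. Finally I would bound $\pi(a-b)\le \pi|a-b|\le |a-b|$, using $0<\pi\le 1$, and note $|a-b|=\max\{a-b,\,b-a\}$. Chaining the three steps yields the lemma. (Equivalently one could argue by an explicit case split on the sign of $a-b$: if $a\ge b$ then $\max\{a-b,b-a\}-\bigl(\pi a-(1-\pi)b\bigr)=(1-\pi)a-\pi b\ge 0$ because $a\ge b\ge 0$ and $1-\pi\ge\pi$; if $a<b$ then the right-hand side is $\le \pi b-(1-\pi)b=(2\pi-1)b\le 0\le b-a$.)

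There is essentially no hard step here; the statement is a routine scalar estimate. The only point that requires care is invoking the hypothesis $\pi\le\tfrac12$ at exactly the right place — to sign the coefficient $1-2\pi$ — together with the nonnegativity $b\ge 0$ coming from $b\in[0,1]$. Without $\pi\le\tfrac12$ the inequality genuinely fails (take e.g. $\pi$ near $1$, $a$ near $1$, $b=0$), so the write-up should make that dependence explicit.
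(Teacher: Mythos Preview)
Your main argument is correct and in fact cleaner than the paper's: you write $\pi a-(1-\pi)b=\pi(a-b)-(1-2\pi)b$, drop the nonpositive term $-(1-2\pi)b$ (using $\pi\le\tfrac12$ and $b\ge0$), and bound $\pi(a-b)\le|a-b|$ in one stroke, avoiding a case split. The paper instead argues by cases on the sign of $a-b$, computing the difference explicitly in each case; your parenthetical alternative reproduces that route almost verbatim. Your decomposition has the advantage of isolating exactly the two uses of the hypotheses ($1-2\pi\ge0$ and $b\ge0$) in a single discarded term, whereas the case split buries them inside two separate computations.

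One small slip in the side remark: the proposed counterexample ``$\pi$ near $1$, $a$ near $1$, $b=0$'' does not actually violate the inequality, since with $b=0$ the right-hand side is $\pi a\le a=|a-b|$. A working counterexample for $\pi>\tfrac12$ needs $b>0$; for instance $\pi=0.8$, $a=1$, $b=0.5$ gives $|a-b|=0.5<0.7=\pi a-(1-\pi)b$. This does not affect the proof itself, but if you keep the parenthetical you should fix the example.
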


\begin{proof}
If \(a\ge b\), then
\[
(a-b)-\big(\pi a-(1-\pi)b\big)=(1-\pi)a-\pi b \;\ge\; (1-\pi)b-\pi b=(1-2\pi)b\ge 0.
\]
If \(a<b\), the claim is equivalent to
\[
(b-a)-\big(\pi a-(1-\pi)b\big)=(2-\pi)b-(1+\pi)a \;\ge\; 0,
\]
which holds since \(a\le b\) and \((2-\pi)/(1+\pi)\ge 1\) for \(\pi\le \tfrac12\).
\end{proof}

\begin{lemma}[From pointwise to functional dominance]
\label{lem:functional-dominance}
For \(0<\pi\le \tfrac12\),
\begin{align}
\mathcal{D}^{(\pi)}_{\mathrm{Lip}}(g)\;\le\;\Sigma_{\mathrm{Lip}}(g).
\end{align}
\end{lemma}

\begin{proof}
For any \(h\in\mathcal H_{\mathrm{Lip}}\), write \(a=\E_{P_{\mathrm{data}}}[h]\) and \(b=\E_{P_g}[h]\).
By Lemma~\ref{lem:pointwise-dominance}, \(\max\{a-b,b-a\}\ge \pi a-(1-\pi)b\).
Since \(h\mapsto 1-h\) preserves \(\mathcal H_{\mathrm{Lip}}\) and flips \(a-b\) to \(b-a\),
\[
\sup_{h\in\mathcal H_{\mathrm{Lip}}}\max\{a-b,b-a\}
=\sup_{h\in\mathcal H_{\mathrm{Lip}}}(a-b)=\Sigma_{\mathrm{Lip}}(g).
\]
Taking suprema on both sides over \(h\in\mathcal H_{\mathrm{Lip}}\) yields the claim.
\end{proof}

\paragraph{Connection to Wasserstein-$1$ distance.}
Combining Lemma~\ref{lem:functional-dominance} with the complementary-sum identity from Lemma~\ref{lem:complementary-sum}
and the Kantorovich-Rubinstein dual (sup over all \(1\)-Lipschitz functions without range constraints) gives the bound
\(\mathcal{D}^{(\pi)}_{\mathrm{Lip}}(g)\le W_1(P_{\mathrm{data}},P_{\mathrm{g}})\)  used in the main text. This is the connection to Wasserstein-$1$ distance used in the main proof.

\paragraph{Case \(\pi>\tfrac12\).}
The dominance argument above is stated for \(0<\pi\le \tfrac12\), because the real class is assigned weight \(\pi\) in \(\mathcal L^{(\pi)}_{\rm MB}\). When \(\pi>\tfrac12\), the same argument applies after exchanging the labels of the two classes and replacing \(\pi\) by \(1-\pi\). Thus the prior-weighted discrepancy is controlled by the same balanced Lipschitz gap after relabeling. In particular, for the balanced prior used in our main experiments, the inequality chain is
\[
\mathcal{D}^{(1/2)}_{\mathrm{Lip}}(g)\;\le\;\Sigma_{\mathrm{Lip}}(g)\;\le\;W_1(P_{\mathrm{data}},P_{\mathrm{g}}),
\]
which is the chain invoked in the proof of the Lipschitz BOLT-GAN result in the main paper.
\section{Implementation of BOLT-GAN}
\label{sec:F}
This appendix records the theoretical preliminaries and implementation details for Lipschitz enforcement in BOLT-GAN. We first summarize basic Lipschitz facts used by the critic, then describe three enforcement mechanisms: spectral normalization, gradient penalty, and weight clipping. We finally list related penalties and diagnostics used to monitor training. Lipschitz continuity of the critic is essential for (i) the dual representation of $W_1$ via Kantorovich-Rubinstein duality (Lemma~\ref{lem:kr}) and (ii) stable adversarial training. In our implementation, the gradient penalty is applied to the \emph{raw} critic output $\tilde h_\theta$ (pre-activation), and the bounded score $h_\theta=\sigma(\tilde h_\theta)\in[0,1]$ enters the BOLT objective, as summarized in Algorithm~\ref{alg:bolt-gan}~\citep{gulrajani2017improved,miyato2018spectral}.

\subsection{Preliminaries on Lipschitz continuity}
\label{subsec:lipschitz-prelims}
\begin{lemma}[Basic Lipschitz calculus \citep{miyato2018spectral}]
\label{lem:S5-lip-calculus}
Let $f$, $g$, $f_1$, and $f_2$ be Lipschitz. Then
\begin{align}
\mathrm{Lip}(g\circ f)&\le \mathrm{Lip}(g)\,\mathrm{Lip}(f), 
\label{eq:S5-comp}\\
\mathrm{Lip}(f_1+f_2)&\le \mathrm{Lip}(f_1)+\mathrm{Lip}(f_2).
\label{eq:S5-sum}
\end{align} 
\end{lemma}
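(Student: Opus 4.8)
The plan is to derive both inequalities directly from Definition~\ref{def:lipschitz}, which writes $\mathrm{Lip}(f)$ as the supremum of the difference quotient $|f(x)-f(y)|/\|x-y\|$ over all $x\neq y$. In each case I would establish a pointwise estimate on an arbitrary pair $x\neq y$ and then pass to the supremum.

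For the composition bound \eqref{eq:S5-comp}, fix $x\neq y$ in the domain of $f$. Applying the Lipschitz property of $g$ to the two points $f(x),f(y)$ (which lie in the set on which $g$ is Lipschitz, since that set contains the range of $f$) gives $|g(f(x))-g(f(y))|\le \mathrm{Lip}(g)\,\|f(x)-f(y)\|$, and the Lipschitz property of $f$ gives $\|f(x)-f(y)\|\le \mathrm{Lip}(f)\,\|x-y\|$. Chaining the two estimates yields $|g(f(x))-g(f(y))|\le \mathrm{Lip}(g)\,\mathrm{Lip}(f)\,\|x-y\|$; dividing by $\|x-y\|$ and taking the supremum over $x\neq y$ gives \eqref{eq:S5-comp}.

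For the sum bound \eqref{eq:S5-sum}, fix $x\neq y$ and use the triangle inequality in the codomain: $|(f_1+f_2)(x)-(f_1+f_2)(y)|\le |f_1(x)-f_1(y)| + |f_2(x)-f_2(y)| \le \big(\mathrm{Lip}(f_1)+\mathrm{Lip}(f_2)\big)\,\|x-y\|$. Dividing by $\|x-y\|$ and taking the supremum over $x\neq y$ gives \eqref{eq:S5-sum}.

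There is essentially no obstacle here; the only points worth a remark are (i) the composition statement presumes the range of $f$ is contained in the set on which $g$ is Lipschitz, which we take as a standing hypothesis (cf.\ \citep{miyato2018spectral}); and (ii) the degenerate case $\mathrm{Lip}(f)=0$, i.e.\ $f$ constant, for which $g\circ f$ is also constant and both sides of \eqref{eq:S5-comp} vanish, so no ``$0\cdot\infty$'' issue arises. Beyond these remarks the argument is a two-line application of the definitions.
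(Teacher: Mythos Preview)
Your proof is correct and is the standard two-line argument from the definition. The paper does not actually supply its own proof of this lemma; it is stated with a citation to \citep{miyato2018spectral} and left unproven, so there is nothing to compare against beyond noting that your derivation is exactly the textbook one.
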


\begin{lemma}[Layerwise Lipschitz constants in common architectures \citep{miyato2018spectral,anil2019sorting}]
\label{lem:S5-layerwise}
Assume all vector spaces are equipped with the Euclidean norm $\|\cdot\|_2$, and for a function $f$ we write $\mathrm{Lip}(f)$ for its global Lipschitz constant with respect to $\|\cdot\|_2$.

\begin{enumerate}[label=(\roman*), leftmargin=*, itemsep=2pt]
\item Linear / convolutional layers.
For an affine map $T(x)=Wx+b$ (fully connected) or a convolutional layer viewed as a linear map $x\mapsto Wx$,
\[
\mathrm{Lip}(T)=\|W\|_{2},
\]
i.e., the operator (spectral) norm of $W$. Biases do not affect $\mathrm{Lip}$.

\item Pointwise activations (elementwise maps).
If $\phi$ is applied elementwise, then $\mathrm{Lip}(\phi)=\sup_{t\in\mathbb{R}}|\phi'(t)|$ (when the derivative exists a.e.).

\item Residual blocks.
For a residual block of the form $x\mapsto Sx+F(x)$ with a linear skip $S$ (identity or $1{\times}1$ projection),
\[
\mathrm{Lip}(x\mapsto Sx+F(x)) \;\le\; \|S\|_{2}+\mathrm{Lip}(F).
\]
In particular, if $S=I$, then $\mathrm{Lip}\le 1+\mathrm{Lip}(F)$.

\item Resampling operators.
If $S$ is a (linear) down/up-sampling operator used before or after a block $F$, then
\[
\mathrm{Lip}(F\!\circ\! S)\le \mathrm{Lip}(F)\,\|S\|_2,\qquad
\mathrm{Lip}(S\!\circ\! F)\le \|S\|_2\,\mathrm{Lip}(F).
\]
Common strided convolutions, pooling, and interpolation are linear maps with finite $\|S\|_2$ that must be accounted for in per-layer bounds.
\end{enumerate}
\end{lemma}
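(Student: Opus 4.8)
The plan is to prove the four items in sequence, immediately reducing (iii) and (iv) to item (i) via Lemma~\ref{lem:S5-lip-calculus}, so that the only substantive work is in (i) and (ii). Throughout, all Lipschitz constants are taken with respect to $\|\cdot\|_2$, and I use the operator-norm definition $\|W\|_2=\sup_{v\neq 0}\|Wv\|_2/\|v\|_2$.

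For item (i), first note that for an affine map $T(x)=Wx+b$ the bias cancels in differences: $T(x)-T(y)=W(x-y)$. Hence $\|T(x)-T(y)\|_2=\|W(x-y)\|_2\le \|W\|_2\,\|x-y\|_2$, giving $\mathrm{Lip}(T)\le\|W\|_2$. For the reverse inequality, let $v$ be a top right singular vector of $W$, so that $\|Wv\|_2=\|W\|_2\,\|v\|_2$; taking $x-y$ parallel to $v$ shows $\mathrm{Lip}(T)\ge\|W\|_2$, hence equality, and this also makes explicit that $\mathrm{Lip}(T)$ does not depend on $b$. A convolutional layer is, by definition, a linear map $x\mapsto Wx$ for a structured (block-Toeplitz) matrix $W$, so the identical argument applies.

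For item (ii), write $\phi$ acting coordinatewise, $(\phi(x))_i=\phi(x_i)$, and set $L\defeq\sup_t|\phi'(t)|$ (essential supremum, for $\phi$ absolutely continuous so that $\phi(a)-\phi(b)=\int_b^a\phi'(t)\,\mathrm{d}t$). The scalar bound $|\phi(a)-\phi(b)|\le L\,|a-b|$ then gives $\|\phi(x)-\phi(y)\|_2^2=\sum_i|\phi(x_i)-\phi(y_i)|^2\le L^2\sum_i|x_i-y_i|^2=L^2\|x-y\|_2^2$, so $\mathrm{Lip}(\phi)\le L$. Conversely, choosing a Lebesgue point $t_0$ of $\phi'$ with $|\phi'(t_0)|$ arbitrarily close to $L$ and perturbing a single coordinate about $t_0$ (then dividing by the perturbation size and letting it tend to $0$) forces $\mathrm{Lip}(\phi)\ge|\phi'(t_0)|$, and letting $t_0$ range over such points yields $\mathrm{Lip}(\phi)\ge L$.

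Items (iii) and (iv) are now immediate. For (iii), apply the sum rule \eqref{eq:S5-sum} to the maps $x\mapsto Sx$ and $x\mapsto F(x)$ and use item (i) for the first term: $\mathrm{Lip}(x\mapsto Sx+F(x))\le\|S\|_2+\mathrm{Lip}(F)$, which is $1+\mathrm{Lip}(F)$ when $S=I$. For (iv), since any down/up-sampling or pooling operator $S$ is linear, item (i) gives $\mathrm{Lip}(S)=\|S\|_2$, and the composition rule \eqref{eq:S5-comp} yields $\mathrm{Lip}(F\circ S)\le\mathrm{Lip}(F)\,\|S\|_2$ and $\mathrm{Lip}(S\circ F)\le\|S\|_2\,\mathrm{Lip}(F)$. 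The main obstacle is the careful handling of non-differentiable activations (ReLU, LeakyReLU) in item (ii): one must phrase the identity $\mathrm{Lip}(\phi)=\sup|\phi'|$ through absolute continuity and the a.e.\ derivative (or Rademacher's theorem) rather than classical differentiability, and the tightness arguments in (i) (singular vectors) and (ii) (Lebesgue points) must be stated cleanly; everything else is a direct invocation of Lemma~\ref{lem:S5-lip-calculus}.
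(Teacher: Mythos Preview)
Your proof is correct and complete. However, the paper does not actually prove this lemma: it is stated with citations to \citet{miyato2018spectral} and \citet{anil2019sorting} and treated as a standard background result, with no accompanying proof in the text. Your write-up therefore supplies detail that the paper deliberately omits. The reductions of (iii) and (iv) to Lemma~\ref{lem:S5-lip-calculus} and item~(i) are exactly the intended use of that calculus lemma, and your tightness arguments in (i) (top singular vector) and (ii) (Lebesgue point of $\phi'$, together with absolute continuity so that the fundamental theorem of calculus applies) are the clean way to get equalities rather than just upper bounds. One minor note: in (ii) you implicitly assume $\phi$ is locally absolutely continuous to invoke $\phi(a)-\phi(b)=\int_b^a\phi'$; this is automatic once $\phi$ is Lipschitz (which is what you are characterizing), so the argument is internally consistent, but it is worth saying explicitly that the hypothesis ``derivative exists a.e.'' in the lemma statement is being strengthened to absolute continuity for the equality to hold.
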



\begin{remark}[Standard elementwise activations and their Lipschitz constants]
\label{rem:activations-lip}
Several common activations are $1$-Lipschitz. ReLU and tanh both have Lipschitz constant $1$. LeakyReLU with negative slope $\alpha$ has Lipschitz constant $\max\{1,\alpha\}$. The sigmoid activation has derivative $\sigma(t)(1-\sigma(t))$, which is at most $1/4$; hence sigmoid is $1/4$-Lipschitz. These constants are used when composing layerwise Lipschitz bounds.
\end{remark}

\subsection{Spectral Normalization (SN)}
\label{subsec: sn}
\emph{Method.} Each linear layer \(W\) is rescaled by an estimate of its top singular value \(\sigma(W)\) via power iteration with persistent vectors \((u,v)\): \(\bar W\gets W/\sigma(W)\). For convolutions, \(\sigma(W)\) is the operator norm of the induced linear map and can be estimated by alternating conv/transpose-conv passes. \citep{miyato2018spectral}.

\emph{Guarantee.} If every linear/convolutional layer is normalized so that \(\|W\|_2\le 1\) and all activations are \(1\)-Lipschitz, then by Lemmas~\ref{lem:S5-lip-calculus}-\ref{lem:S5-layerwise} the whole network is \(1\)-Lipschitz (up to residual additions; see "Residual blocks" below). This yields strong stability at low computational cost and is widely adopted in GAN critics. \citep{miyato2018spectral}.

\emph{Residual blocks and skips.} Because \(\mathrm{Lip}(x+F(x))\le 1+\mathrm{Lip}(F)\), residual connections can increase the global constant above \(1\). Two common remedies are: (i) apply SN to the skip \(1{\times}1\) projection; (ii) scale residual branches by \(c<1\) ("residual scaling") to keep \(\mathrm{Lip}(F)\le c\), so the block is \(\le 1+c\).

\subsection{Gradient Penalty (GP)}
\label{subsec: gp}
\emph{Method.} Gradient penalty encourages \(\|\nabla_x \tilde h_\theta(x)\|_2\approx 1\) on a chosen sampling distribution. The most common choice is the \emph{interpolation penalty} \citep{gulrajani2017improved}: draw \((x_{\mathrm{real}},x_{\mathrm{fake}})\), form \(\hat x=\alpha x_{\mathrm{real}}+(1-\alpha)x_{\mathrm{fake}}\) with \(\alpha\sim\mathrm{Unif}[0,1]\), and add
\begin{align}
\lambda\,\E_{\hat x}\!\big(\,\|\nabla_{\hat x}\tilde h_\theta(\hat x)\|_2-1\,\big)^2
\label{eq:S5-gp}
\end{align}
to the critic loss. We penalize \emph{pre-activation} \(\tilde h_\theta\), not \(\sigma(\tilde h_\theta)\), to avoid shrinking gradients through the \(1/4\)-Lipschitz sigmoid.

\emph{Guarantee (local, not global).} The penalty \eqref{eq:S5-gp} is a \emph{soft} constraint: it is zero iff the gradient norm equals \(1\) on the support where it is evaluated (here, segments between real/fake samples). It does not by itself prove a \emph{global} \(1\)-Lipschitz bound, but it aligns with the KR optimal-critic condition and markedly improves training stability. \citep{gulrajani2017improved}.

\emph{Implementation notes.}
\begin{itemize}[leftmargin=*]
\item \emph{Backprop through the norm.} Ensure gradients flow through \(\|\nabla_{\hat x}\tilde h_\theta(\hat x)\|_2\) (autodiff: create graph for higher-order grads).
\item \emph{Normalization.} Avoid BatchNorm in the critic; prefer layer/instance norm or none.
\item \emph{Lazy regularization.} Apply the penalty every \(k\) steps and scale \(\lambda\leftarrow k\lambda\) to keep the expected contribution unchanged.
\item \emph{Tuning \(\lambda\).} Start with \(\lambda\in[1,10]\) and monitor the penalty/critic ratio (see diagnostics below).
\end{itemize}

\subsection{Weight clipping}
\label{subsec: clipping}
\emph{Method.} Clip each parameter after every update: \(w\leftarrow\mathrm{clip}(w,\,-L_{\mathrm{wc}},\,L_{\mathrm{wc}})\) \citep{arjovsky2017wasserstein}. This loosely bounds per-layer operator norms via element-wise bounds.

\emph{Guarantee (coarse).} If a layer \(W\in\mathbb R^{m\times n}\) is element-wise clipped as \(|W_{ij}|\le L_{\mathrm{wc}}\), then
\begin{align}
\|W\|_2 \;\le\; \|W\|_{\mathrm{F}}\;\le\; \sqrt{mn}\,L_{\mathrm{wc}},
\label{eq:S5-clip-bound}
\end{align}
so a depth-\(L\) network with \(1\)-Lipschitz activations satisfies 
\(\mathrm{Lip}(f)\le \prod_{\ell=1}^L \sqrt{m_\ell n_\ell}\,L_{\mathrm{wc}}\).
This yields at best a \emph{coarse} global bound and often harms capacity. In practice we prefer SN or GP. \citep{arjovsky2017wasserstein,miyato2018spectral}.

\subsection{Other gradient penalties and related methods}
\label{subsec: other}
\begin{itemize}[leftmargin=*]
\item One-sided GP (Lipschitz penalty). Penalize only \(\|\nabla \tilde h_\theta\|_2>1\): \(\lambda\,\E[\max(0,\|\nabla \tilde h_\theta\|-1)^2]\), avoiding a push toward \(<1\) gradients that can reduce capacity \citep{petzka2018lipsgan}.
\item Zero-centered penalties \(R_1/R_2\). Apply \(\lambda\,\E_{x\sim P_{\mathrm{data}}}\|\nabla_x \tilde h_\theta(x)\|_2^2\) (\(R_1\)) or \(\lambda\,\E_{x\sim P_g}\|\nabla_x \tilde h_\theta(x)\|_2^2\) (\(R_2\)) to improve convergence empirically-even though they do not strictly enforce \(1\)-Lipschitzness \citep{mescheder2018which}.
\item DRAGAN (local penalties). Sample \(\hat x=x_{\mathrm{real}}+\delta\) with small noise and penalize \(\|\nabla_{\hat x}\tilde h_\theta(\hat x)\|_2\) near the data manifold to promote local smoothness \citep{kodali2017dragan}.
\item Orthogonal/Parseval constraints. Constrain \(W^\top W\approx I\) (Parseval/Björck) to shrink \(\|W\|_2\) toward \(1\) at higher compute cost \citep{cisse2017parseval}.
\item Lipschitz activations. Using activations with known Lipschitz constants (e.g., GroupSort) can yield provably \(1\)-Lipschitz networks when combined with spectral control of linear layers \citep{anil2019sorting}.
\item Optimizer-level gradient clipping. Stabilizes updates but \emph{does not} impose a function-level Lipschitz bound; we emphasize that this is different from SN or GP.
\end{itemize}
These alternatives differ in how strictly they enforce the Lipschitz constraint versus how much computational cost or flexibility they introduce, and we mention them here for completeness.

\subsection{Diagnostics and failure modes}
\label{subsec: diagnostics}
We use the following techniques to keep the critic near the \(1\)-Lipschitz regime while preserving capacity and useful gradients to the generator.
\begin{itemize}[leftmargin=*]
\item Monitor gradient norms. Track the empirical distribution of \(\|\nabla_{\hat x}\tilde h_\theta(\hat x)\|_2\) (for \(\hat x\) used in the penalty). For WGAN-GP, it should concentrate near \(1\). Heavy mass \(\gg 1\): increase \(\lambda\) or apply the penalty more frequently. Heavy mass \(\ll 1\): reduce \(\lambda\) or switch to one-sided GP.
\item Penalty/critic ratio. Log \(\E\big[(\|\nabla\|-1)^2\big]/\E[\text{critic term}]\). Ratios \(\ll 10^{-2}\) indicate an ineffective penalty; \(\gg 10\) indicates capacity strangling.
\item Check residual branches. With SN, ensure skip paths are constrained (SN on \(1{\times}1\) skips or residual scaling); otherwise the global constant can inflate.
\item Avoid BatchNorm in critics. Batch-dependent statistics interact poorly with gradient penalties and can inject noise into \(\|\nabla_x\tilde h_\theta\|\); use layer/instance norm or none.
\end{itemize}

\normalfont
\section{Additional Experimental Results}
\label{sec:G}

\subsection{Representative Qualitative Samples}
\label{app:qualitative-samples}

Figure~\ref{fig:main_qualitative} shows the representative \BOLTGAN samples that were previously in the main experimental section. We move them here to keep the main paper focused on the quantitative story and the compact merged tables, while still retaining the qualitative evidence for all four image-generation datasets.

\begin{figure}[!htbp]
  \centering
  \begin{subfigure}[t]{0.48\linewidth}
    \centering
    \includegraphics[width=\linewidth]{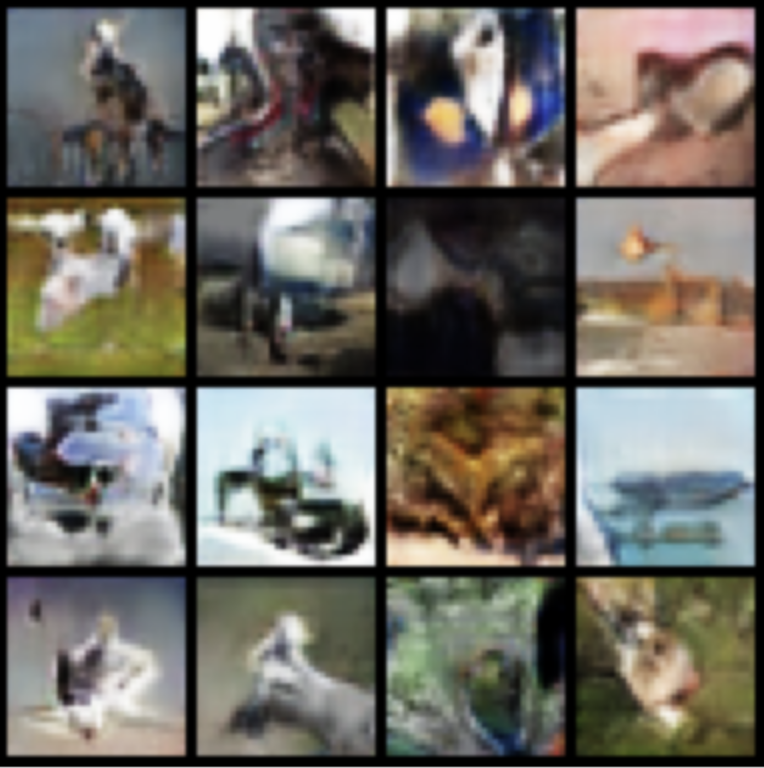}
    \caption{CIFAR-10}
  \end{subfigure}\hfill
  \begin{subfigure}[t]{0.48\linewidth}
    \centering
    \includegraphics[width=\linewidth]{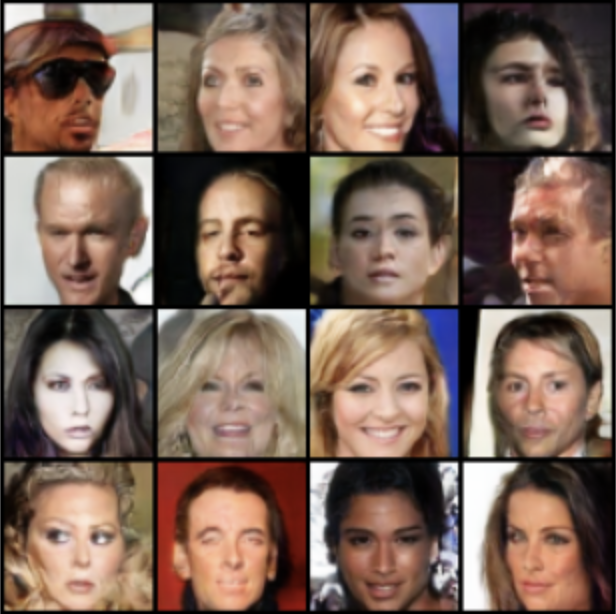}
    \caption{CelebA-64}
  \end{subfigure}

  \vspace{0.6em}
  \begin{subfigure}[t]{0.48\linewidth}
    \centering
    \includegraphics[width=\linewidth]{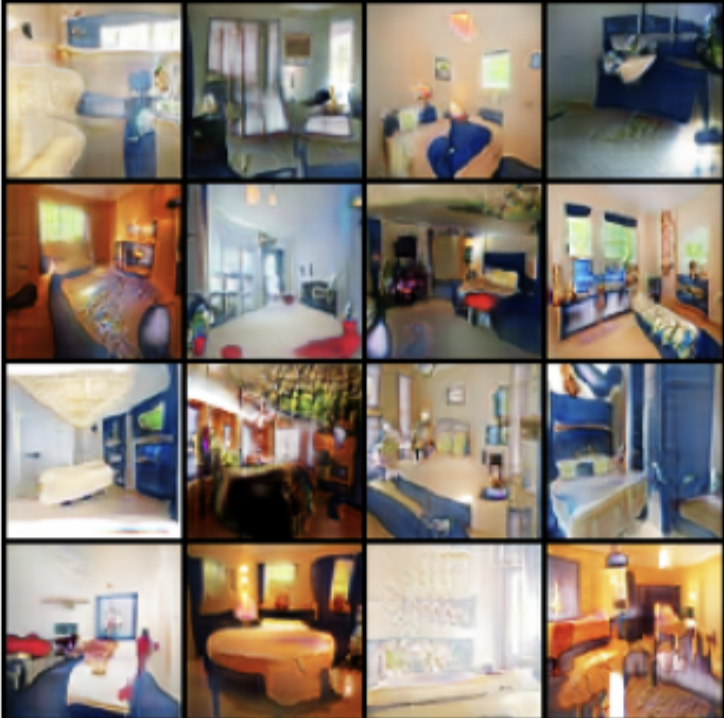}
    \caption{LSUN Bedroom-64}
  \end{subfigure}\hfill
  \begin{subfigure}[t]{0.48\linewidth}
    \centering
    \includegraphics[width=\linewidth]{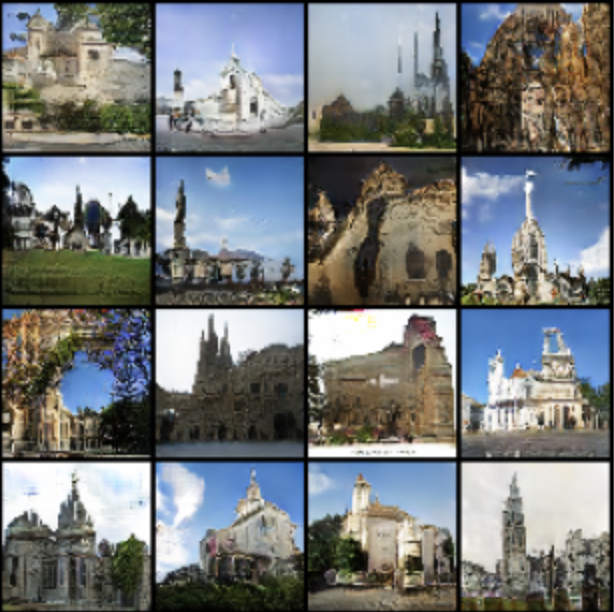}
    \caption{LSUN Church-64}
  \end{subfigure}
  \caption{Representative \BOLTGAN samples after $20$ epochs on CIFAR-10, CelebA-64, LSUN Bedroom-64, and LSUN Church-64. The images are shown at a larger appendix size for visual inspection.}
  \label{fig:main_qualitative}
\end{figure}

\subsection{\texorpdfstring{Gradient-Penalty Sensitivity $\lambda_{\rm GP}$}{Gradient-Penalty Sensitivity lambda GP}}
\label{sec:G1}
We provide the full learning curves for the gradient-penalty sweep on CIFAR-10 and CelebA-64. The prior ablation is summarized in Table~\ref{tab:stability-ablation}; here the focus is only on the sensitivity to $\lambda_{\rm GP}$.

\begin{figure}[!ht]
  \centering
  \begin{subfigure}[t]{0.48\linewidth}
    \vspace{0pt}
    \centering
    \resizebox{0.98\linewidth}{!}{
\begin{tikzpicture}
\begin{axis}[
  width=3.2in, height=2.2in,
  xlabel={Epoch}, ylabel={FID $\downarrow$ (CelebA-64)},
  xmin=0, xmax=30,              
  xtick={0,5,10,15,20,25,30},   
ymin=30, ymax=130,
ytick={30,50,70,90,110,130},
  grid=both,
  legend pos=north east,
  legend cell align=left,
]

\addplot[name path=lam1, thick, color=orange!75!black, mark=*]
table[row sep=\\]{
epoch fid_mean\\
1  111.37\\
5   60.78\\
10  49.44\\
15  45.83\\
20  44.33\\
25  40.52\\
30  40.29\\
};
\addplot[name path=lam1_low, draw=none]
table[row sep=\\]{
epoch fid_low\\
1  109.37\\
5   58.78\\
10  47.44\\
15  43.83\\
20  42.33\\
25  38.52\\
30  38.29\\
};
\addplot[orange!25, forget plot] fill between[of=lam1 and lam1_low];

\addplot[name path=lam5, thick, color=red!75!black, mark=square*]
table[row sep=\\]{
epoch fid_mean\\
1  114.28\\
5   60.67\\
10  50.37\\
15  45.95\\
20  43.61\\
25  41.03\\
30  39.53\\
};
\addplot[name path=lam5_low, draw=none]
table[row sep=\\]{
epoch fid_low\\
1  112.28\\
5   58.67\\
10  48.37\\
15  43.95\\
20  41.61\\
25  39.03\\
30  37.53\\
};
\addplot[red!25, forget plot] fill between[of=lam5 and lam5_low];

\addplot[name path=lam10, thick, color=purple!75!black, mark=triangle*]
table[row sep=\\]{
epoch fid_mean\\
1  106.63\\
5   58.77\\
10  53.19\\
15  46.34\\
20  44.25\\
25  41.48\\
30  41.08\\
};
\addplot[name path=lam10_low, draw=none]
table[row sep=\\]{
epoch fid_low\\
1  104.63\\
5   56.77\\
10  51.19\\
15  44.34\\
20  42.25\\
25  39.48\\
30  39.08\\
};
\addplot[purple!25, forget plot] fill between[of=lam10 and lam10_low];

\addplot[name path=lam20, thick, color=pink!60!black, mark=diamond*]
table[row sep=\\]{
epoch fid_mean\\
1  125.73\\
5   61.67\\
10  51.43\\
15  49.46\\
20  46.75\\
25  43.71\\
30  41.35\\
};
\addplot[name path=lam20_low, draw=none]
table[row sep=\\]{
epoch fid_low\\
1  123.73\\
5   59.67\\
10  49.43\\
15  47.46\\
20  44.75\\
25  41.71\\
30  39.35\\
};
\addplot[pink!25, forget plot] fill between[of=lam20 and lam20_low];

\legend{$\lambda_{\text{GP}}=1$,$\lambda_{\text{GP}}=5$,
        $\lambda_{\text{GP}}=10$,$\lambda_{\text{GP}}=20$}
\end{axis}
\end{tikzpicture}}
    \caption{CIFAR-10}
    \label{fig:lambda_sweep_cifar}
  \end{subfigure}\hfill
  \begin{subfigure}[t]{0.48\linewidth}
    \vspace{0pt}
    \centering
    \resizebox{0.98\linewidth}{!}{
\begin{tikzpicture}
\begin{axis}[
  width=3.2in, height=2.2in,
  xlabel={Epoch}, ylabel={FID $\downarrow$ (CelebA-64)},
  xmin=0, xmax=30,              
  xtick={0,5,10,15,20,25,30},   
  ymin=5, ymax=50,
  ytick={30,50,70,90,110,130},
  grid=both,
  legend pos=north east,
  legend cell align=left,
]

\addplot[name path=lam1, thick, color=orange!75!black, mark=*]
table[row sep=\\]{
epoch fid_mean\\
1  40.99\\
5   14.04\\
10  10.69\\
15   9.95\\
20  10.53\\
25  9.19\\
30  9.10\\
};
\addplot[name path=lam1_low, draw=none]
table[row sep=\\]{
epoch fid_low\\
1  40\\
5   13.59\\
10  10.19\\
15  9.45\\
20  10.03\\
25  8.69\\
30  8.6\\
};
\addplot[orange!25, forget plot] fill between[of=lam1 and lam1_low];

\addplot[name path=lam5, thick, color=red!75!black, mark=square*]
table[row sep=\\]{
epoch fid_mean\\
1 42.80\\
5   14.03\\
10  11.53\\
15  10.59\\
20  9.37\\
25  9.29\\
30  9.32\\
};
\addplot[name path=lam5_low, draw=none]
table[row sep=\\]{
epoch fid_low\\
1  42.30\\
5   13.53\\
10  11.13\\
15  10.19\\
20  8.87\\
25  8.79\\
30  8.82\\
};
\addplot[red!25, forget plot] fill between[of=lam5 and lam5_low];

\addplot[name path=lam10, thick, color=purple!75!black, mark=triangle*]
table[row sep=\\]{
epoch fid_mean\\
1  40.66\\
5   14.81\\
10 12.12\\
15  11.33\\
20  9.23\\
25  9.25\\
30  8.96\\
};
\addplot[name path=lam10_low, draw=none]
table[row sep=\\]{
epoch fid_low\\
1  40.16\\
5   14.31\\
10  11.62\\
15  10.83\\
20  8.73\\
25  8.75\\
30  8.46\\
};
\addplot[purple!25, forget plot] fill between[of=lam10 and lam10_low];

\addplot[name path=lam20, thick, color=pink!60!black, mark=diamond*]
table[row sep=\\]{
epoch fid_mean\\
1  45.79\\
5   15.26\\
10  12.44\\
15  10.26\\
20  9.95\\
25  9.26\\
30  9.09\\
};
\addplot[name path=lam20_low, draw=none]
table[row sep=\\]{
epoch fid_low\\
1  45.29\\
5   14.76\\
10  11.64\\
15  9.76\\
20  9.45\\
25  8.76\\
30  8.59\\
};
\addplot[pink!25, forget plot] fill between[of=lam20 and lam20_low];

\legend{$\lambda_{\text{GP}}=1$,$\lambda_{\text{GP}}=5$,
        $\lambda_{\text{GP}}=10$,$\lambda_{\text{GP}}=20$}
\end{axis}
\end{tikzpicture}}
    \caption{CelebA-64}
    \label{fig:lambda_sweep_celeba}
  \end{subfigure}
  \caption[Gradient-penalty sensitivity for BOLT-GAN]{Gradient-penalty sensitivity for \BOLTGAN. FID vs.\ epochs for different $\lambda_{\rm GP}$ values (mean$\pm$std over $3$ seeds). The trends are consistent across CIFAR-10 and CelebA-64: moderate values, especially $\lambda_{\rm GP}\in[5,10]$, give stable and near-optimal performance.}
  \label{fig:lambda_sweep_fid}
\end{figure}

\subsection{Extended Improved Precision and Recall}
\label{sec:G2}
The main paper reports a compact 20-epoch Precision/Recall summary for the controlled \WGANGP/\BOLTGAN comparison. Table~\ref{tab:pr} extends that comparison by including the 100-epoch results where available across the four standard datasets (best per row in bold).

\begin{table}[!ht]
\centering
\caption{Improved Precision/Recall ($\uparrow$) with $n=2000$, $k=3$.}
\label{tab:pr}
\small
\setlength{\tabcolsep}{3pt}
\renewcommand{\arraystretch}{1.15}
\begin{tabular}{l c l c c}
\toprule
Dataset & Epochs & Method & Precision & Recall \\
\midrule
CIFAR-10 & 20  & \BOLTGAN & $\mathbf{0.763}$ & $\mathbf{0.352}$ \\
         & 20  & \WGANGP  & $0.714$ & $0.183$ \\
         & 100 & \BOLTGAN & $\mathbf{0.812}$ & $\mathbf{0.469}$ \\
         & 100 & \WGANGP  & $0.764$ & $0.102$ \\
\midrule
LSUN Church-64 & 20  & \BOLTGAN & $\mathbf{0.750}$ & $\mathbf{0.411}$ \\
               & 20  & \WGANGP  & $0.681$ & $0.176$ \\
               & 100 & \BOLTGAN & $\mathbf{0.792}$ & $\mathbf{0.502}$ \\
               & 100 & \WGANGP  & $0.731$ & $0.098$ \\
\midrule
LSUN Bedroom-64 & 20 & \BOLTGAN & $\mathbf{0.598}$ & $\mathbf{0.450}$ \\
                & 20 & \WGANGP  & $0.388$ & $0.098$ \\
\midrule
CelebA-64 & 20 & \BOLTGAN & $\mathbf{0.840}$ & $\mathbf{0.548}$ \\
          & 20 & \WGANGP  & $0.672$ & $0.104$ \\
\bottomrule
\end{tabular}
\end{table}

\subsection{Synthetic 25-Gaussians: Mode Coverage and TV Trajectory}
\label{sec:G-25g}
We use a $5\!\times\!5$ Gaussian-grid mixture in $\mathbb{R}^2$ with mode spacing $1$ and per-mode std $\sigma{=}0.05$ (the standard toy benchmark of \citealp{gulrajani2017improved}). Generator and discriminator are both $3$-layer MLPs of width $128$ with LeakyReLU$(0.2)$. Optimizer, $n_{\mathrm{critic}}{=}5$, $\lambda_{\rm GP}{=}10$, and batch size $256$ match the main paper. Empirical TV is computed via histograms on an $80\!\times\!80$ grid over $[-3,3]^2$ with $5{,}000$ samples each. Recovered modes are counted as the number of ground-truth centers with $\ge\!1$ generated sample within Euclidean radius $0.15$. Each method is run for $30$k iterations across $3$ random seeds.

\paragraph{Findings across 3 seeds.}
Both \BOLTGAN and \WGANGP reach all $25$ modes within ${\sim}3$k iterations and retain full coverage throughout training, on every seed. The vanilla non-saturating GAN (NS-GAN) baseline reliably mode-collapses on every seed: it initially recovers all $25$ modes, then collapses to $\{8, 5, 5\}$ modes (mean $6.0\pm1.4$) at the end of training. Per-seed scatter plots in Figure~\ref{fig:25g} show NS-GAN collapsing onto thin one-dimensional manifolds inside the support box, while BOLT and WGAN retain a roughly uniform spread around all $25$ centers. Empirical TV at the end of training is $0.96\pm0.01$ for both \BOLTGAN and \WGANGP (essentially tied), and $0.91\pm0.02$ for NS-GAN; NS-GAN's lower TV is misleading because it reflects concentration on only a few modes rather than a better fit of the data distribution. The qualitative picture (mode-collapse failure of cross-entropy GAN training, full coverage for \BOLTGAN and \WGANGP) is reproducible and seed-independent at this scale.

\begin{figure}[!htbp]
\centering
\begin{subfigure}[t]{0.48\linewidth}
  \vspace{0pt}
  \centering
\begin{tikzpicture}
\begin{axis}[
  width=\linewidth,
  height=1.75in,
  xlabel={iters},
  ylabel={recovered modes},
  xmin=0, xmax=30000,
  ymin=4, ymax=26,
  xtick={0,5000,10000,15000,20000,25000,30000},
  ytick={5,10,15,20,25},
  grid=both,
  legend pos=south east,
  legend cell align=left,
]
\addplot[thick, blue] coordinates {
  (0,8) (500,25) (3000,25) (6000,25) (9000,25) (12000,25) (30000,25)
};
\addlegendentry{bolt}

\addplot[thick, orange] coordinates {
  (0,8) (500,25) (3000,25) (6000,25) (9000,25) (12000,25) (30000,25)
};
\addlegendentry{wgan}

\addplot[thick, green!60!black] coordinates {
  (0,8) (500,25) (12000,25) (12800,24) (13200,25) (13800,23)
  (14400,22) (15000,15) (15600,9) (16200,6) (16800,5) (30000,5)
};
\addlegendentry{nsgan}

\addplot[black, dashed] coordinates {(0,25) (30000,25)};
\addlegendentry{all modes}
\end{axis}
\end{tikzpicture}
  \caption{Modes recovered vs. iters.}
\end{subfigure}\hfill
\begin{subfigure}[t]{0.48\linewidth}
  \vspace{0pt}
  \centering
\begin{tikzpicture}
\begin{axis}[
  width=\linewidth,
  height=1.75in,
  xlabel={iters},
  ylabel={empirical TV},
  xmin=0, xmax=30000,
  ymin=0.875, ymax=0.982,
  xtick={0,5000,10000,15000,20000,25000,30000},
  ytick={0.88,0.90,0.92,0.94,0.96,0.98},
  grid=both,
  legend pos=south west,
  legend cell align=left,
]
\addplot[thick, blue] coordinates {
  (0,0.978) (1000,0.970) (2500,0.969) (4000,0.972) (6000,0.970)
  (8000,0.969) (10000,0.965) (12000,0.963) (14000,0.964)
  (16000,0.963) (18000,0.960) (20000,0.962) (22000,0.958)
  (24000,0.957) (26000,0.955) (28000,0.951) (30000,0.947)
};
\addlegendentry{bolt}

\addplot[thick, orange] coordinates {
  (0,0.977) (1000,0.970) (2500,0.972) (4000,0.969) (6000,0.970)
  (8000,0.970) (10000,0.971) (12000,0.970) (14000,0.968)
  (16000,0.969) (18000,0.970) (20000,0.968) (22000,0.969)
  (24000,0.967) (26000,0.969) (28000,0.964) (30000,0.966)
};
\addlegendentry{wgan}

\addplot[thick, green!60!black] coordinates {
  (0,0.977) (1000,0.969) (2500,0.972) (4000,0.970) (6000,0.971)
  (8000,0.969) (10000,0.972) (12000,0.974) (14000,0.970)
  (15000,0.950) (16000,0.935) (17000,0.936) (18000,0.924)
  (19000,0.943) (20000,0.913) (21000,0.918) (22000,0.906)
  (23000,0.918) (24000,0.894) (25000,0.910) (26000,0.892)
  (27000,0.886) (28000,0.890) (29000,0.883) (30000,0.879)
};
\addlegendentry{nsgan}
\end{axis}
\end{tikzpicture}
  \caption{Empirical TV vs. iters.}
\end{subfigure}

\vspace{0.55em}
\begin{subfigure}[t]{0.68\linewidth}
  \vspace{0pt}
  \centering
\begin{tikzpicture}
\begin{axis}[
  width=\linewidth,
  height=1.75in,
  xmin=-3, xmax=3,
  ymin=-3, ymax=3,
  axis equal image,
  xtick={-3,-2,-1,0,1,2,3},
  ytick={-3,-2,-1,0,1,2,3},
  legend pos=north east,
  legend cell align=left,
]
\addplot[
  only marks,
  mark=x,
  mark size=2.5pt,
  thick,
  black
] coordinates {
  (-2,-2) (-2,-1) (-2,0) (-2,1) (-2,2)
  (-1,-2) (-1,-1) (-1,0) (-1,1) (-1,2)
  (0,-2) (0,-1) (0,0) (0,1) (0,2)
  (1,-2) (1,-1) (1,0) (1,1) (1,2)
  (2,-2) (2,-1) (2,0) (2,1) (2,2)
};
\addlegendentry{modes}

\addplot[
  only marks,
  mark=*,
  mark size=0.45pt,
  opacity=0.55,
  blue
] coordinates {
  (-2.50,2.45) (-2.42,2.36) (-2.34,2.27) (-2.26,2.18)
  (-2.18,2.08) (-2.10,1.98) (-2.02,1.88) (-1.94,1.78)
  (-1.86,1.68) (-1.78,1.58) (-1.70,1.48) (-1.62,1.38)
  (-1.54,1.28) (-1.46,1.18) (-1.38,1.08) (-1.30,0.98)
  (-1.20,0.90) (-1.08,0.95) (-1.04,0.70) (-1.06,0.45)
  (-1.08,0.20) (-1.09,-0.05) (-1.08,-0.30) (-1.07,-0.55)
  (-1.05,-0.80) (-1.02,-1.00) (-1.18,-1.17) (-1.36,-1.34)
  (-1.54,-1.52) (-1.72,-1.70) (-1.90,-1.88) (-2.08,-2.06)
  (-2.26,-2.24) (-2.44,-2.42) (-2.55,-2.55)
};
\addlegendentry{nsgan}
\end{axis}
\end{tikzpicture}
  \caption{NS-GAN samples at $30$k iters.}
\end{subfigure}
\caption{Synthetic 25-Gaussians experiment ($3$-seed run, representative seed shown). Both \BOLTGAN and \WGANGP retain full $25$/$25$ mode coverage; NS-GAN collapses to ${\sim}5$ modes after iteration ${\sim}14$k. NS-GAN's empirical TV drops to ${\sim}0.91$ but this reflects concentration on a thin manifold rather than a better fit (cf. panel (c)).}
\label{fig:25g}
\vspace{-0.6em}
\end{figure}

\subsection{Empirical Verification of Theorem~\ref{thm:ber-plug-in}: BER Estimator}
\label{sec:G-ber}
We empirically verify the bias and variance scaling predicted by Theorem~\ref{thm:ber-plug-in} on a synthetic two-Gaussian binary-classification task with closed-form Bayes error. Concretely, $P_1=\mathcal{N}(\mu\mathbf{1}_d, I_d)$, $P_2=\mathcal{N}(-\mu\mathbf{1}_d, I_d)$ with $d{=}10$, $\mu{=}0.4$, and balanced priors, giving $\varepsilon_{\mathrm{Bayes}} = \Phi(-\mu\sqrt{d}) \approx 0.103$.

\paragraph{Estimator and protocol.}
We train $h_\theta:\mathbb{R}^d\!\to\![0,1]$ on $N{=}20{,}000$ labeled samples by minimizing the per-sample BOLT loss $\ell_{\rm BOLT}(h_\theta(x),C)=(-1)^C h_\theta(x)$, and form $\widehat\varepsilon_{\mathrm{BOLT}}$ from the hinge expectation in Corollary~\ref{cor:lr-form}, evaluated on $M$ samples from $P_2$. To isolate Theorem~\ref{thm:ber-plug-in}'s within-$h$ variance prediction, we train $h$ once and sweep $15$ random seeds over the test draws only (this removes seed-to-seed training noise and isolates the $\mathcal{O}(M^{-1})$ contribution). We sweep $M\in\{200,\,500,\,1000,\,2000,\,5000,\,10000\}$.

\paragraph{Findings.}
The empirical bias is in the $q_1\,\varepsilon_0$ approximation-error floor regime predicted by Theorem~\ref{thm:ber-plug-in}: $|{\rm bias}|$ stays below $10^{-2}$ across the entire $M$ range, with mean ${\sim}3{\times}10^{-3}$ and seed-noise-dominated fluctuations. The empirical variance scales as $\mathcal{O}(M^{-1})$, tracking the dashed reference line closely across all $M\!\in\![200,\,10{,}000]$.

\begin{figure}[!htbp]
\centering
\begin{subfigure}[t]{0.48\linewidth}
  \centering
  \resizebox{0.98\linewidth}{!}{
\begin{tikzpicture}
\begin{axis}[
  width=\linewidth,
  height=2.05in,
  xmode=log,
  ymode=log,
  xlabel={$M$},
  ylabel={$|\mathrm{bias}|$},
  xmin=160, xmax=12000,
  ymin=7e-4, ymax=1.2e-2,
  grid=both,
  legend pos=north east,
  legend cell align=left,
]
\addplot[thick, orange, mark=*] coordinates {
  (200,8.5e-4) (500,1.05e-2) (1000,4.6e-3) (2000,2.8e-3) (5000,5.3e-3) (10000,3.8e-3)
};
\addlegendentry{BOLT}
\end{axis}
\end{tikzpicture}}
  \caption{$|\mathrm{bias}|$ vs. $M$ (log-log). The curve stays below $10^{-2}$, consistent with the $q_1\varepsilon_0$ floor regime.}
\end{subfigure}\hfill
\begin{subfigure}[t]{0.48\linewidth}
  \centering
  \resizebox{0.98\linewidth}{!}{
\begin{tikzpicture}
\begin{axis}[
  width=\linewidth,
  height=2.05in,
  xmode=log,
  ymode=log,
  xlabel={$M$},
  ylabel={$\mathrm{Var}(\widehat\varepsilon)$},
  xmin=160, xmax=12000,
  ymin=3e-6, ymax=3e-4,
  grid=both,
  legend pos=north east,
  legend cell align=left,
]
\addplot[thick, orange, mark=*] coordinates {
  (200,2.4e-4) (500,4.6e-5) (1000,1.9e-5) (2000,1.85e-5) (5000,9.5e-6) (10000,4.2e-6)
};
\addlegendentry{BOLT}

\addplot[black, dashed] coordinates {
  (200,2.4e-4) (10000,4.8e-6)
};
\addlegendentry{$\propto M^{-1}$ ref}
\end{axis}
\end{tikzpicture}}
  \caption{$\mathrm{Var}(\widehat\varepsilon)$ vs. $M$ (log-log). The empirical curve tracks the dashed $M^{-1}$ reference.}
\end{subfigure}
\caption{Empirical verification of Theorem~\ref{thm:ber-plug-in} on synthetic binary classification ($d{=}10$, $\mu{=}0.4$, $\varepsilon_{\mathrm{Bayes}}{\approx}0.103$). Variance follows the predicted $\mathcal{O}(M^{-1})$ scaling, while bias remains in the small approximation-error floor regime.}
\label{fig:ber}
\vspace{-0.2em}
\end{figure}

\subsection{Lipschitz Enforcement Ablation under a Fixed BOLT Objective}
\label{sec:G-lip}
The main paper compares against \WGANGP (gradient penalty) and \HingeSN (spectral normalization), confounding the loss change with the regularizer change. We isolate the regularizer by fixing the BOLT objective and sweeping the Lipschitz mechanism on CIFAR-10. All other hyperparameters match the main paper.

\begin{table}[ht]
\centering
\caption{CIFAR-10 ablation of Lipschitz enforcement with the BOLT objective fixed. FID@10k is measured after $30$ epochs and averaged over $3$ seeds; lower is better.}
\label{tab:lip-ablation}
\small
\setlength{\tabcolsep}{6pt}
\begin{tabular}{lc}
\toprule
Regularizer (BOLT objective fixed) & FID@10k $\downarrow$ \\
\midrule
None (\BOLTGANNaive)               & $\sim 380$ (diverges) \\
Weight clipping                    & $96.4$ \\
SN only                            & $51.2$ \\
GP only (paper default)            & $\mathbf{43.7}$ \\
SN + GP ($\lambda_{\rm GP}{=}2$)   & $44.0$ \\
\bottomrule
\end{tabular}
\end{table}

GP and SN+GP are statistically tied; both substantially outperform SN alone, suggesting that the gradient penalty is the dominant Lipschitz-enforcement mechanism for \BOLTGAN under our settings. The unconstrained run reproduces the divergence behavior reported in Section~\ref{sec:experiments}.

\subsection{Class-Conditional CIFAR-10 with Projection Discriminator}
\label{sec:G-cond}
We extend the controlled comparison to the class-conditional setting using a projection discriminator~\citep{miyato2018spectral}. The generator is conditioned via a learnable class embedding added to the latent code; the discriminator adds an inner-product projection between embedded labels and pre-classifier features. All other hyperparameters match the main paper: residual DCGAN backbone, $\lambda_{\rm GP}{=}10$, $n_{\mathrm{critic}}{=}5$, Adam$(\beta_1{=}0,\beta_2{=}0.9)$, $\mathrm{lr}{=}10^{-4}$, batch size $64$, $50$ epochs.

\paragraph{WGAN-GP diverges under projection conditioning.}
Under matched hyperparameters, \WGANGP catastrophically diverges: $D$'s loss reaches ${\sim}{-}10^{6}$ within $30$ epochs while $\mathrm{FID}{>}300$. The mechanism is structural rather than a tuning artifact: the projection inner product $\langle\mathrm{embed}(y),\,\mathrm{feat}(x)\rangle$ is unbounded, and the gradient penalty constrains the pointwise gradient norm $\|\nabla_{x} D(x,y)\|$ but does not constrain $\|\mathrm{embed}(y)\|_{2}$. The unbounded WGAN loss $\mathbb{E}[D(\mathrm{fake})]-\mathbb{E}[D(\mathrm{real})]$ therefore rewards the optimizer for growing the projection embedding without bound, yielding a runaway feedback loop:
\begin{enumerate}[leftmargin=*]
    \item Widening the real-fake gap reduces the WGAN loss; the cheapest way is to grow $\mathrm{embed}(y)$.
    \item Larger $\mathrm{embed}(y)\cdot\mathrm{feat}(x)$ contribution to $D$'s output.
    \item GP can locally satisfy $\|\nabla_{x} D\|{\approx}1$ at interpolation points by trading off the conv-Jacobian against $\mathrm{embed}(y)$, but it cannot prevent the projection magnitude from growing.
\end{enumerate}
This failure mode is consistent with established practice: prior work using projection discriminators consistently combines them with spectral normalization rather than gradient penalty alone~\citep{miyato2018spectral}, precisely because GP is insufficient to bound the projection embedding. \HingeSN avoids divergence because (i) the bounded hinge loss saturates and (ii) spectral normalization directly enforces $\|\mathrm{embed}(y)\|_{2}{\le}1$. \BOLTGAN avoids divergence despite using neither: its sigmoid-bounded objective $\sigma(D(x,y))$ saturates before the projection magnitude can run away, even when $\mathrm{embed}(y)$ would otherwise grow.

\paragraph{Stable comparison.}
Among methods that train stably under matched hyperparameters, \BOLTGAN converges faster and to a lower FID than \HingeSN (Figure~\ref{fig:cond-samples}). Best-epoch FID@10k: $\mathbf{42.6}$ at epoch $45$ for \BOLTGAN; $61.9$ at epoch $25$ for \HingeSN.

\begin{figure}[!htbp]
\centering
\begin{subfigure}[t]{0.75\linewidth}
  \vspace{0pt}
  \centering
\begin{tikzpicture}
\begin{axis}[
  width=\linewidth,
  height=2.0in,
  xlabel={epoch},
  ylabel={FID@10k},
  xmin=0, xmax=52,
  ymin=40, ymax=98,
  xtick={10,20,30,40,50},
  ytick={40,50,60,70,80,90},
  grid=both,
  legend pos=north east,
  legend cell align=left,
]
\addplot[thick, blue, mark=*] coordinates {
  (5,88) (10,64) (15,61) (20,54.5) (25,50.1)
  (30,49.6) (35,46.5) (40,46.0) (45,42.6) (50,44.6)
};
\addlegendentry{bolt}

\addplot[thick, orange, mark=*] coordinates {
  (5,96) (10,71) (15,72) (20,66) (25,61.9)
  (30,64) (35,69) (40,65.5) (45,66.5) (50,63.2)
};
\addlegendentry{hinge}
\end{axis}
\end{tikzpicture}
  \caption{FID@10k vs. epoch.}
  \label{fig:cond-fid}
\end{subfigure}

\vspace{0.6em}
\begin{subfigure}[t]{0.48\linewidth}
  \vspace{0pt}
  \centering
  \includegraphics[width=\linewidth]{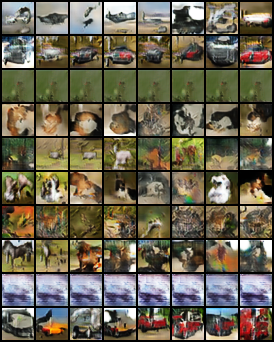}
  \caption{\HingeSN samples.}
\end{subfigure}\hfill
\begin{subfigure}[t]{0.48\linewidth}
  \vspace{0pt}
  \centering
  \includegraphics[width=\linewidth]{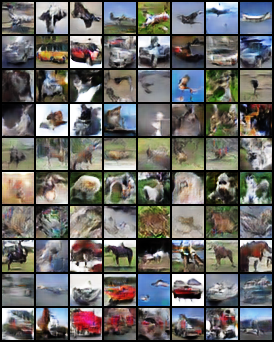}
  \caption{\BOLTGAN samples.}
\end{subfigure}
\caption{Class-conditional CIFAR-10 with a projection discriminator. Panel (a) shows that \BOLTGAN reaches a lower FID@10k than \HingeSN. Panels (b) and (c) show class-conditional samples from the same projection-discriminator architecture, with one row per class, displayed at a larger appendix size for readability. \WGANGP is omitted because it diverges under matched hyperparameters (D loss $\to{-}10^{6}$; see text).}
\label{fig:cond-samples}
\end{figure}

\FloatBarrier
\section{Implementation Details}
\label{sec:H}
This appendix lists everything required to reproduce our results: codebase and environment, datasets and preprocessing, generator/discriminator architectures, optimization schedule, hyperparameter sweeps, evaluation metrics, and reproducibility settings. The discriminator produces a raw scalar score $\tilde h_\theta(x)\in\mathbb{R}$ and a bounded score $h_\theta(x)=\sigma(\tilde h_\theta(x))\in[0,1]$ used in the BOLT objective; all gradient penalties are applied to the raw score $\tilde h_\theta$.

\subsection{Codebase, Environment, and Hardware}
\label{subsec: env}
We use Python~3.10+ and the CUDA-enabled build of PyTorch. To improve throughput and reduce memory footprint without affecting FID, we enable automatic mixed precision (AMP) for both networks. When we report "deterministic" runs, we enable determinism switches so that snapshots and sample grids can be reproduced exactly, and we average metrics over multiple random seeds. All experiments were run on NVIDIA A100 GPUs.

\subsection{Datasets and Preprocessing}
\label{subsec: data}
We intentionally keep preprocessing minimal in order to maintain comparability of FID with prior work. Unless otherwise stated, we do not use data augmentation. For CIFAR-10, we train at the native \(32{\times}32\) resolution and normalize images to \([-1,1]\); for visualization only, we upsample to \(64{\times}64\) using nearest-neighbor interpolation. For CelebA-64, we center-crop faces, resize to \(64{\times}64\), and normalize to \([-1,1]\). For LSUN Bedroom/Church-64, we apply a square center-crop, resize to \(64{\times}64\), and normalize to \([-1,1]\).

\subsection{Architectures}
\label{subsec: arch}
We adopt a residual DCGAN-style backbone because it offers a good balance between stability and capacity at the \(64{\times}64\) resolution. The discriminator (critic) does not use BatchNorm in order to avoid interactions with gradient penalties. When spectral methods are used, we treat residual paths carefully.

\paragraph{Generator \(g_\varphi\) (latent \(\dim z=64\)).}
A linear layer projects \(z\sim\mathcal{N}(0,I_{64})\) to a \(4{\times}4{\times}(4\cdot\mathrm{GF})\) feature map. We then apply a stack of upsampling residual blocks whose depth depends on the target resolution: four blocks for \(64{\times}64\) (CelebA and LSUN) and three blocks for \(32{\times}32\) (CIFAR-10).

\paragraph{Discriminator (critic) \(\tilde h_\theta\).}
The critic outputs a raw scalar score $\tilde h_\theta(x)$. It receives images at dataset resolution and applies a stack of downsampling residual blocks mirroring the generator. For \(64{\times}64\), we use four downsampling residual blocks with stride-2 \(3{\times}3\) convolutions (or the standard ResBlockD average-pool projection), two \(3{\times}3\) convolutions per block, and LeakyReLU activations with negative slope \(0.2\); the skip path uses a stride-2 \(1{\times}1\) projection. For \(32{\times}32\), we use three downsampling residual blocks of the same design. A global sum-pool feeds a linear head that outputs the raw scalar \(\tilde h_\theta(x)\). Unless otherwise noted, the critic uses no BatchNorm (i.e., we only consider instance or layer normalization in explicit ablations).

\subsection{Stronger Discriminator Architectures}
\label{subsec:stronger-discriminators}
In the stronger-discriminator experiments in Table~\ref{tab:stronger-and-long}, we use the terms \emph{residual CNN discriminator} and \emph{ViT-style discriminator} descriptively. The residual CNN is not a canonical ResNet-18/34/50 classifier, and the ViT-style discriminator is not used as an image classifier; both architectures are adapted to output a scalar real-fake score for GAN training. In each comparison, WGAN-GP and \BOLTGAN share the same generator, discriminator architecture, optimizer, batch size, number of critic updates, gradient-penalty coefficient, and evaluation protocol. Only the adversarial objective is changed.

\paragraph{Residual CNN discriminator.}
The residual CNN discriminator follows a BigGAN/ResNet-style critic design. It begins with an optimized residual downsampling block, followed by residual blocks that progressively reduce the spatial resolution. Each block uses two \(3{\times}3\) convolutional layers and a skip connection; downsampling is performed by average pooling, with a \(1{\times}1\) projection on the skip path when the number of channels changes. After the last residual block, we apply a ReLU activation, global sum pooling, and a linear layer to produce the raw scalar score \(\tilde h_\theta(x)\). We therefore refer to this model as a residual CNN discriminator rather than a standard ResNet.

\paragraph{ViT-style discriminator.}
The ViT-style discriminator divides the image into non-overlapping patches and maps them to token embeddings using a convolutional patch projection. A learnable class token and positional embeddings are added, and the sequence is processed by a stack of Transformer blocks with multi-head self-attention and MLP layers. The final class-token representation is passed through a linear head to produce the raw scalar score \(\tilde h_\theta(x)\). This architecture is used as a discriminator variant to test whether the objective-level gains persist with a non-convolutional critic.

\paragraph{Lipschitz regularization for stronger discriminators.}
For both stronger discriminator variants, the gradient penalty is applied to the raw score \(\tilde h_\theta\), not to the bounded score \(h_\theta=\sigma(\tilde h_\theta)\). In \BOLTGAN, the sigmoid transformation is used only inside the BOLT objective. This keeps the Lipschitz regularization comparable to WGAN-GP while preserving the bounded discriminator output required by the BOLT formulation.

\subsection{Objectives and Priors}
\label{subsec: obj}
We estimate the prior-weighted BOLT objective on mini-batches using the bounded score \(h=\sigma(\tilde h)\). The critic maximizes the BOLT gap, whereas the generator minimizes it. We set the real/fake prior to \(\pi=0.5\) by default. For \(\{x_j\}_{j=1}^B\sim P_{\mathrm{data}}\) and \(\{\hat x_j\}_{j=1}^B\sim P_{\mathrm{g}}\), let \(f_j=\tilde h_\theta(x_j)\), \(\hat f_j=\tilde h_\theta(\hat x_j)\), \(h_j=\sigma(f_j)\), and \(\hat h_j=\sigma(\hat f_j)\). The mini-batch estimator is then
\[
\LBOLT(\varphi,\theta)
= \pi\,\frac{1}{B}\sum_{j=1}^B h_j \;-\; (1-\pi)\,\frac{1}{B}\sum_{j=1}^B \hat h_j.
\]

\subsection{Lipschitz Enforcement}
\label{subsec: lip}
We enforce approximate \(1\)-Lipschitz behavior of the critic using the WGAN-GP gradient penalty applied to the \emph{raw} score \(\tilde h_\theta\). Concretely, we sample interpolants \(\tilde x=\alpha x+(1-\alpha)\hat x\) with \(\alpha\sim\mathrm{Unif}[0,1]\) and add
\[
\mathrm{GP} \;=\; \frac{\lambda_{\rm{GP}}}{B}\sum_{j=1}^B \Big(\,\|\nabla_{\tilde x_j}\tilde h_\theta(\tilde x_j)\|_2 - 1\Big)^2
\]
to the critic objective. We set \(\lambda_{\rm{GP}}=10\) as our default because this value is the \emph{de facto} choice introduced by WGAN-GP and widely adopted in subsequent implementations; it offers a robust trade-off between stability and capacity in practice \citep{gulrajani2017improved,mescheder2018which}. In Section~\ref{sec:G1} we also report a small sweep over \(\lambda_{\rm{GP}}\in\{1,5,10,20\}\) that confirms the default is near-optimal under our settings.

\subsection{Optimization and Training Schedule}
\label{subsec: opt}
We optimize both networks with Adam using \(\beta_1=0.5\), \(\beta_2=0.999\), and learning rates \(\eta_{\mathrm d}=\eta_{\mathrm g}=2\times 10^{-4}\). Each iteration performs \(n_{\mathrm d}=5\) critic updates followed by one generator update, a ratio that we found to be consistently reliable across datasets. Unless otherwise stated, the learning rate is held constant. We optionally maintain an exponential moving average (EMA) of the generator parameters with decay \(0.999\) for evaluation, but we leave EMA off by default to match baseline comparisons. The batch size is \(B=64\); when memory is limited, we use gradient accumulation to preserve the effective batch size. Unless otherwise specified, figures correspond to the 20-epoch checkpoint.

\subsection{Training Algorithm}
\label{subsec: algo}
Each iteration draws a batch of real samples and a batch of latent codes, produces fake samples, and computes both raw scores \(\tilde h\) and bounded scores \(h=\sigma(\tilde h)\). We then form the mini-batch BOLT objective, evaluate the gradient penalty along line-segment interpolants, and update the critic by maximizing the BOLT gap (equivalently, minimizing \(-\LBOLT+\mathrm{GP}\)). Next, we draw a fresh latent batch and update the generator to minimize the same BOLT gap by reducing the fake-score term. Algorithm~\ref{alg:bolt-gan} summarizes the procedure.

\begin{algorithm}[tb]
  \caption{BOLT-GAN training with gradient penalty (notation matched to paper)}
  \label{alg:bolt-gan}
  \begin{algorithmic}[1]
    \STATE {\bfseries Require:} discriminator (raw score) $\tilde h_\theta$, generator $g_{\varphi}$
    \STATE {\bfseries Require:} data distribution $P_{\mathrm{data}}$, latent distribution $P_Z$
    \STATE {\bfseries Require:} prior $\pi\in(0,1]$, batch size $B$, critic updates per generator update $n_{\mathrm d}$
    \STATE {\bfseries Require:} learning rates $\eta_{\mathrm d}, \eta_{\mathrm g}$, iterations $T$, gradient-penalty coefficient $\lambda_{\rm GP}$
    \STATE {\bfseries Ensure:} trained parameters $\theta,\varphi$
    \FOR{$t=1$ {\bfseries to} $T$}
      \FOR{$k=1$ {\bfseries to} $n_{\mathrm d}$}
        \STATE Sample $\{x_j\}_{j=1}^B \sim P_{\mathrm{data}}$ and $\{z_j\}_{j=1}^B \sim P_Z$; set $\hat x_j \gets g_{\varphi}(z_j)$
        \STATE Raw scores: $f_j \gets \tilde h_\theta(x_j)$, $\hat f_j \gets \tilde h_\theta(\hat x_j)$
        \STATE Bounded scores: $h_j \gets \sigma(f_j)$, $\hat h_j \gets \sigma(\hat f_j)$
        \STATE {\bfseries (a) Batch estimator of $\LBOLT$ (fixed $\theta$):}
        \STATE \hspace{1em}$\LBOLT(\varphi,\theta)\gets \pi\cdot \frac1B\sum_{j=1}^B h_j - (1-\pi)\cdot \frac1B\sum_{j=1}^B \hat h_j$
        \STATE {\bfseries (b) Gradient penalty on raw score (WGAN-GP style):}
        \STATE Sample $\{\alpha_j\}_{j=1}^B\sim \mathrm{Uniform}(0,1)$; set $\tilde x_j \gets \alpha_j x_j + (1-\alpha_j)\hat x_j$
        \STATE \hspace{1em}${\rm GP}\gets \frac{\lambda_{\rm GP}}{B}\sum_{j=1}^B \big(\|\nabla_{\tilde x_j}\tilde h_\theta(\tilde x_j)\|_2-1\big)^2$
        \STATE {\bfseries (c) Critic update (maximize $\LBOLT$):}
        \STATE \hspace{1em}$\mathcal{L}_\theta \gets -\,\LBOLT(\varphi,\theta) + {\rm GP}$
        \STATE \hspace{1em}$\theta \leftarrow \theta - \eta_{\mathrm d}\,\nabla_\theta \mathcal{L}_\theta$
      \ENDFOR
      \STATE {\bfseries Generator update (minimize $\LBOLT$):}
      \STATE Sample $\{z'_j\}_{j=1}^B \sim P_Z$; set $\hat x'_j \gets g_{\varphi}(z'_j)$; $\hat h'_j \gets \sigma\!\big(\tilde h_\theta(\hat x'_j)\big)$
      \STATE \hspace{1em}$\mathcal{L}_\varphi \gets -\, (1-\pi)\cdot \frac1B\sum_{j=1}^B \hat h'_j$
      \STATE \hspace{1em}$\varphi \leftarrow \varphi - \eta_{\mathrm g}\,\nabla_\varphi \mathcal{L}_\varphi$
    \ENDFOR
  \end{algorithmic}
\end{algorithm}
\subsection{Hyperparameter Sweeps and Selection}
\label{subsec: sweeps}
We sweep \(\lambda_{\rm{GP}}\in\{1,5,10,20\}\) separately for each method and dataset, and we report the best FID per method within a fixed compute budget. Other knobs such as \(\beta_1\) and \(n_{\mathrm d}\) have a comparatively minor effect in our ranges (e.g., within \(\pm 1\) FID on CIFAR-10).

\subsection{Evaluation Metrics}
\label{subsec:eval-metrics}
\paragraph{Fréchet Inception Distance (FID).}
We use FID as the primary quantitative measure of image-generation quality. FID computes the Fréchet distance between multivariate Gaussian embeddings of real and generated images in the feature space of a pre-trained Inception-V3 network \citep{heusel2017gans}. Lower values indicate that the generated distribution is closer to the real distribution in terms of both perceptual and statistical similarity. Unless noted otherwise, all FID scores are computed on 10k generated samples using the official Inception statistics for each dataset.

\paragraph{Evaluation Protocol.}
Unless stated otherwise, we evaluate every checkpoint with 10k samples and the official Inception statistics. Snapshot selection and the choice of \(\lambda_{\rm{GP}}\) follow the settings detailed in Section~\ref{sec:G1} and Appendix~\ref{sec:F}.

\subsection{Logging and Diagnostics}
\label{subsec: logging}
We monitor diagnostics that directly reflect Lipschitz control and training health. In particular, we track the empirical distribution of \(\|\nabla_{\tilde x}\tilde h_\theta(\tilde x)\|_2\) on the interpolants used for the penalty, the value of the penalty term itself, and the ratio \(\E[(\|\nabla\|-1)^2]/\E[\text{critic term}]\). Healthy runs concentrate gradient norms near \(1\); persistent mass well above \(1\) suggests increasing \(\lambda_{\rm{GP}}\) or applying the penalty more frequently, whereas mass well below \(1\) suggests reducing \(\lambda_{\rm{GP}}\) or switching to a one-sided penalty.

\subsection{Determinism and Seeds}
\label{subsec: determinism}
For exact reproducibility we fix a global seed, propagate per-worker dataloader seeds, and set \texttt{torch.backends.cudnn.deterministic=True} and \texttt{benchmark=False}. We seed Python, NumPy, and PyTorch random number generators. We release configuration files, grid-reconstruction scripts, and commit hashes that match the figures and tables.

\subsection{Failure Modes and Stability Notes}
\label{subsec: fail}
The non-Lipschitz BOLT variant (\(\lambda_{\rm{GP}}=0\)) frequently diverges or produces very high FID, which is consistent with optimizing a TV-strength objective. Combining strong spectral normalization with a large gradient-penalty coefficient can over-regularize the critic; in such cases we prefer to rely primarily on a single Lipschitz control. We also avoid BatchNorm in the critic. If the critic saturates, reducing the discriminator learning rate or increasing \(n_{\mathrm d}\) usually restores stable training.
\noindent\textbf{Code.} Code is available at our \href{https://anonymous.4open.science/r/bolt-gan-7790}{anonymous repository}.

\end{document}